\theoremstyle{plain}
\newtheorem{theorem}{Theorem}[section]
\theoremstyle{definition}
\newtheorem{definition}[theorem]{Definition}
\theoremstyle{remark}
\icmltitlerunning{Automatically Identify and Rectify: Robust Deep Contrastive Multi-view Clustering in Noisy Scenarios}
\begin{document}

\twocolumn[
\icmltitle{Automatically Identify and Rectify: Robust Deep Contrastive \\ Multi-view Clustering in Noisy Scenarios}



\icmlsetsymbol{equal}{*}

\begin{icmlauthorlist}
\icmlauthor{Xihong Yang}{1,2}
\icmlauthor{Siwei Wang}{3}
\icmlauthor{Fangdi Wang}{1}
\icmlauthor{Jiaqi Jin}{1} \\
\icmlauthor{Suyuan Liu}{1}
\icmlauthor{Yue Liu}{2}
\icmlauthor{En Zhu}{1}
\icmlauthor{Xinwang Liu}{1}
\icmlauthor{Yueming Jin}{2}
\end{icmlauthorlist}

\icmlaffiliation{1}{School of Computer, National University of Defense Technology, Changsha, Hunan, China}
\icmlaffiliation{2}{National University of Singapore, Singapore.}
\icmlaffiliation{3}{Intelligent Game and Decision Lab, Beijing, China, xihong\_edu@163.com}

\icmlcorrespondingauthor{Xinwang Liu}
{xinwangliu@nudt.edu.cn}

\icmlkeywords{Machine Learning, ICML}

\vskip 0.3in
]



\printAffiliationsAndNotice{}  

\begin{abstract}

Leveraging the powerful representation learning capabilities, deep multi-view clustering methods have demonstrated reliable performance by effectively integrating multi-source information from diverse views in recent years. Most existing methods rely on the assumption of clean views. However, noise is pervasive in real-world scenarios, leading to a significant degradation in performance. To tackle this problem, we propose a novel multi-view clustering framework for the automatic identification and rectification of noisy data, termed AIRMVC. Specifically, we reformulate noisy identification as an anomaly identification problem using GMM. We then design a hybrid rectification strategy to mitigate the adverse effects of noisy data based on the identification results. Furthermore, we introduce a noise-robust contrastive mechanism to generate reliable representations. Additionally, we provide a theoretical proof demonstrating that these representations can discard noisy information, thereby improving the performance of downstream tasks. Extensive experiments on six benchmark datasets demonstrate that AIRMVC outperforms state-of-the-art algorithms in terms of robustness in noisy scenarios. The code of AIRMVC are available at https://github.com/xihongyang1999/AIRMVC on Github.

\end{abstract}

\section{Introduction}

In real-world scenarios, multi-source information data is prevalent. To effectively handle such data, Multi-View Clustering (MVC) has emerged as a powerful unsupervised method. In recent years, MVC has gained significant attention and has become a prominent focus of research. Existing MVC methods can be broadly categorized into two main groups. The first group comprises traditional approaches, including Multiple Kernel Clustering (MKC)~\cite{one}, Non-negative Matrix Factorization (NMF)~\cite{wenjie_mf}, subspace clustering\cite{ZHOU_1}, and graph-based clustering~\cite{suyuan_AAAI}. With the advancement of deep neural networks, deep multi-view clustering algorithms~\cite{DealMVC,sun2024robust}, representing the second category of MVC methods, have garnered significant attention from researchers.

\begin{figure}[t]
\centering
\includegraphics[scale=0.45]{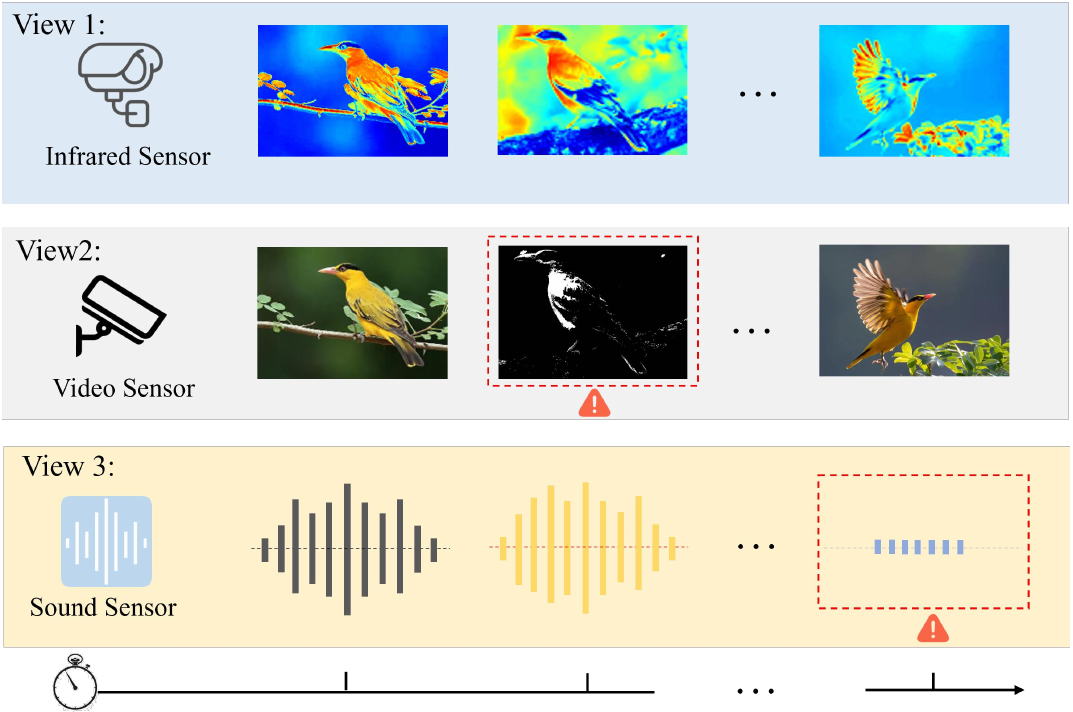}
\caption{An illustrative diagram of noise in a multi-view scenario. In the diagram, the areas marked with \textcolor{red}{red exclamation points} indicate instances where sensor failures or malfunctions at specific moments lead to data corruption. Compared to other views, these instances are considered noisy data.}
\label{motivation}  
\end{figure}

While existing MVC algorithms have demonstrated notable clustering performance, they predominantly rely on the assumption that the features from all views are clean. These approaches generally adopt a standard workflow: representations are first extracted via an encoder, followed by a feature fusion strategy, and then utilized for downstream clustering tasks. The integration of clean views enables the exploitation of complementary information from different perspectives of the same sample, resulting in enhanced performance compared to single-view clustering methods. This complementary information across views is instrumental in revealing the underlying cluster structures within the data. How can the complementary information across multiple views be effectively captured? Contrastive learning presents a paradigm that utilizes self-supervised techniques to learn cross-view consistency, ensuring coherent predictions across diverse views~\cite{DIVIDE,DealMVC,MFLVC}. Alternatively, self-training explores consistency by generating a unified cluster partition, thereby facilitating the discovery of complementary information~\cite{xu2022self,wang2021generative}.

Although the aforementioned methods have demonstrated promising results, we identified notable limitations when applied to real-world scenarios involving noisy data. Fig.~\ref{motivation} illustrates an example of noisy data in a multi-view setting, where three views are presented, and random noise exists within some of them. Moreover, we observed a significant performance degradation in existing methods under such noisy conditions (Tab.~\ref{decrese}). The noisy data not only fail to contribute positively during multi-view feature fusion but also disrupt the underlying cluster structures. Moreover, the erroneous influence of noisy data introduces considerable bias in the optimization process, thereby diminishing the advantages of complementary information across views. Consequently, the performance of these methods may even fall below that of single-view models. Detailed experimental evidence supporting this claim is presented in Section.~\ref{ab}. Recently, some noisy-based deep multi-view clustering methods have been proposed to alleviate the problem. RMCNC~\cite{sun2024robust} designed a noise-tolerance contrastive loss to mitigate the impact for noisy correspondence. MVCAN~\cite{MVCAN} employed un-shared network structure and designed a two-level optimization for multi-view clustering. Although a large improvement has been made, the exploration of noisy data in those methods remains focused on enhancing feature robustness. However, they have yet to develop dedicated frameworks for the identification and rectification of noisy data.

To automatically identify and rectify the noisy data in multi-view scenario, we propose a novel deep contrastive multi-view clustering framework, termed \textbf{AIRMVC}. Specifically, we reformulate noise identification as an anomaly identification problem and introduce Gaussian Mixture Model (GMM) to address this problem. By substituting the latent variable of GMM with the soft prediction, we enable a dynamic update of GMM parameters. Based on the identification results, we introduce a hybrid rectification strategy that employs an interpolation mechanism to alleviate the adverse effects of noisy data, thereby enhancing the robustness of the correction process. In this way, the adverse effects of noisy data could be mitigated. Furthermore, by carefully refining the soft predicted distributions, we design a noise-robust contrastive mechanism to enhance the discriminative capacity of the learned representations. Furthermore, we conduct a theoretical investigation of the contrastive mechanism to validate the stability and robustness of the learned representations. Extensive experiments conducted on six benchmark datasets demonstrate the effectiveness and robustness of our proposed method. The key contributions of our paper are summarized as follows:
\begin{itemize}
    \item We reformulate the noise identification as an anomaly identification problem, solving it by GMM. Based on the results, we propose a hybrid rectification strategy to automatically correct the noisy data.
    
    \item A noise-robust contrastive mechanism is proposed to generate more reliable representations. Besides, we theoretically prove that the generated representations could discard noisy information to benefit the downstream task.
    
    \item We conduct extensive experiments on six benchmark datasets to verify the effectiveness and robustness of AIRMVC.
\end{itemize}

\section{Preliminary \& Problem Definition}

In this paper, we focus on the multi-view clustering task with noisy inputs, as noise is a common occurrence in multi-source information inputs in real-world scenarios. To address this challenge, we aim to enhance the robustness of the network by automatically identifying and rectifying noisy data in an unsupervised multi-view setting. For simplicity, we provide the following symbolic definitions. Given a dataset $\{x^v\}_{v=1}^{V}$ with $V$ views and $N$ samples, we define $\textbf{E}^v_i$ and $y^v_i$ as the extracted representations and the soft predictions of class probabilities for each sample, respectively. The encoder network and decoder network are denoted as $\mathcal{F}^v(x^v;\Theta^v)$ and $\mathcal{G}^v(\textbf{E}^v;\Phi^v)$, respectively. Additional notations are summarized in Tab.~\ref{notation_table} in the Appendix.

\begin{figure*}
\centering
\scalebox{0.52}{
\includegraphics{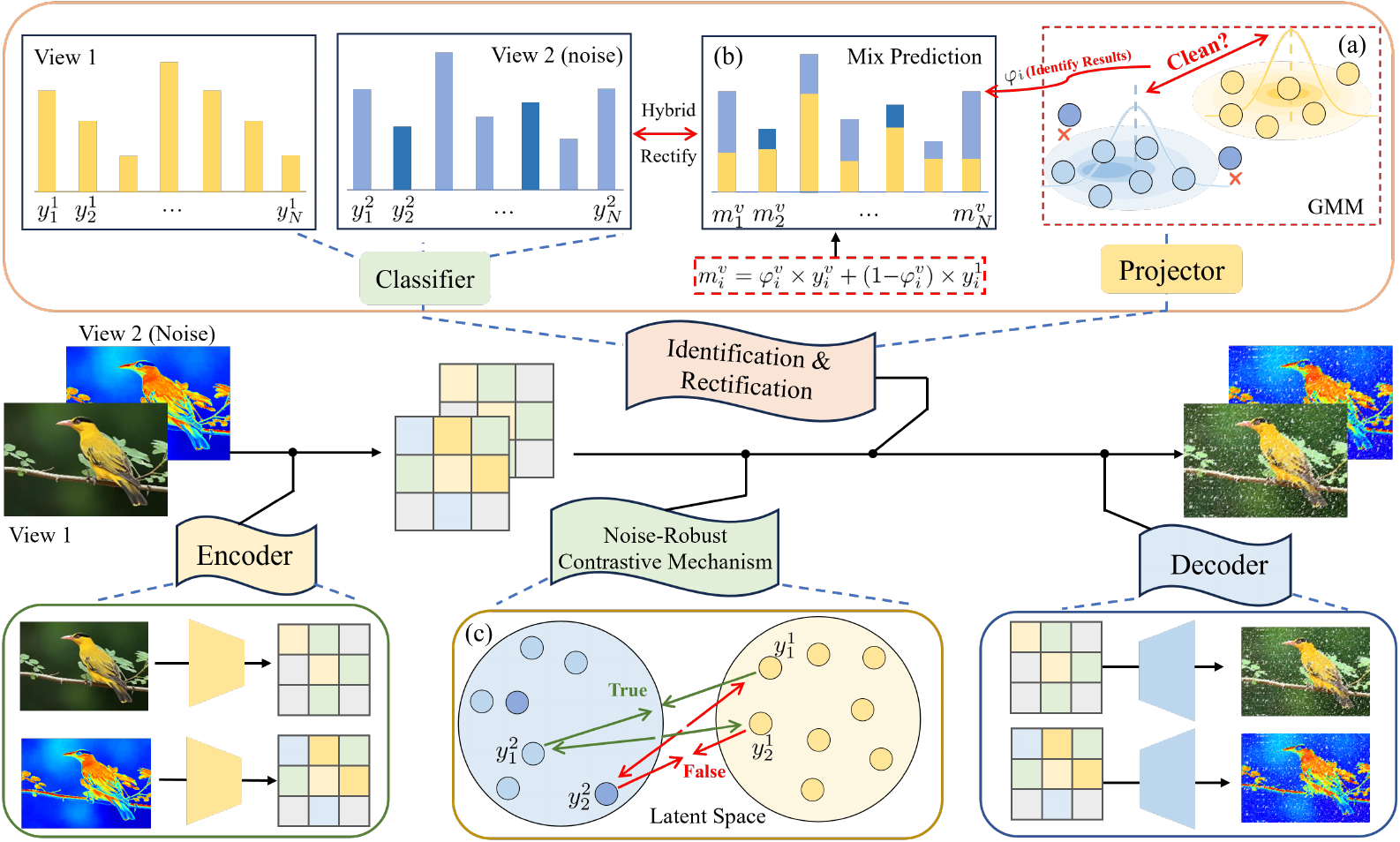}}
\caption{Illustration of the overall framework of the proposed AIRMVC. Specifically, we first encode the input multi-view data to generate representations. Next, an automatic noise identification and rectification strategy is introduced to mitigate the adverse impact of noisy data. Simultaneously, we propose a noise-robust contrastive mechanism to generate more reliable and discriminative representations for the downstream clustering task.} 
\label{overall}
\end{figure*}

Most existing multi-view clustering (MVC) methods operate under the assumption that the input multi-view data is both complete and consistent. In such idealized scenarios, the primary learning paradigm in MVC involves learning representations for $V$ independent views, followed by the application of various fusion strategies to effectively extract and utilize shared information across the views. Although these models demonstrate strong performance under ideal conditions, we observe a significant decline in robustness when low-quality, noisy views are introduced, leading to substantial performance degradation. However, noise is a pervasive issue in real-world scenarios. To further illustrate this problem, we conduct toy experiments involving clustering tasks on BBCSport and WebKB datasets under different scenarios, including noisy multi-view, clean single-view, and clean multi-view conditions. From the experimental results are presented in Fig.~\ref{eff_RS} and Tab.~\ref{decrese}, it can be observed that when confronted with noisy data, the model's performance degrades drastically, even performing worse than single-view clustering. Therefore, it is critical to mitigate the adverse effects of noisy data in multi-view clustering tasks. In the following sections, we detail our proposed strategies for the automatic identification and rectification of noise.

\section{Methodology}
To mitigate the negative impact of noisy data, we propose a robust \underline{A}utomatic \underline{I}dentification and \underline{R}ectification deep contrastive \underline{M}ulti-\underline{V}iew \underline{C}lustering framework with noisy view, which termed AIRMVC. The overall framework of AIRMVC is presented in Fig.~\ref{overall}. Specifically, we begin with providing the detailed descriptions of our AIRMVC, including three components, i.e., noisy identification, hybrid rectification, and the noise-robust contrastive mechanism. Then, we define the objective functions to optimize the whole model. Finally, we present the theoretical analysis of our designed noise-robust contrastive component.



\subsection{Noisy Identification}\label{identification}

For a given multi-view dataset $\{x^v\}_{v=1}^{V}$, we utilize an encoder network to generate the representations, expressed as $\textbf{E}^v = \mathcal{F}^v(x^v; \Theta^v)$. For each representation, we first process it using a multi-layer perceptron (MLP) layer as the projector. Subsequently, we model its distribution using a Gaussian Mixture Model (GMM). This process can be described as follows:
\begin{equation}
\begin{aligned}
p(\textbf{E}) &= \sum_{k=1}^K p(\textbf{E}, q=k) \\
              &= \sum_{k=1}^K p(q=k)\mathcal{N}(\textbf{E}|\mu_k, \sigma_k), 
\end{aligned}
\label{loss_con}
\end{equation}
where $q \in \{1, 2, \dots, K\}$ represents the discrete latent variables. $\mu_k$ and $\sigma_k$ denote the mean and variance, respectively. Assuming that $q$ follows a uniform distribution, the probability of $q = k$ could be expressed as $p(q = k) = \frac{1}{K}$. Based on this, the posterior probability of assigning $x_i$ to the $k$-th cluster can be calculated as:
\begin{equation}
\begin{aligned}
\chi_{ik} = p(q_i =k|x_i) \propto \mathcal{N}(x_i|\mu_k, \sigma_k).
\end{aligned}
\label{poster_pro}
\end{equation}

The discrete variables $q$ can correspond directly to the labels in an ideal condition. The entire process can be optimized and solved using the Expectation-Maximization (EM) algorithm. However, in real-world scenarios, noisy data inputs are pervasive and unavoidable, leading to biases in the optimization process. Thus, the identification and rectification of noisy data are critically important. In unsupervised scenarios, identifying noise remains a challenging task. To address this issue, we establish a connection between the latent variable $q$ and the model prediction $y$ in an unsupervised manner. Specifically, we replace the assignment $p(q_i=k|x_i)$ with the soft prediction of the network $p(y_i=k|x_i)$. The parameters of the GMM can then be computed as follows:
\begin{equation}
\begin{aligned}
\mu_k &= \text{Norm}\left ( \frac{\sum_i p(y_i=k|x_i)\textbf{E}_i}{\sum_i p(y_i=k|x_i)} \right ),\\
\sigma_k &= \frac{\sum_i p(y_i=k|x_i)(\textbf{E}_i - \mu_k)(\textbf{E}_i - \mu_k)^{\mathsf{T}}}{\sum_i p(y_i=k|x_i)},
\end{aligned}
\label{mu_sigma}
\end{equation}
where $\text{Norm}$ means the $\ell_2$ normalization. Based on Eq.~\eqref{mu_sigma}, we reformulate Eq.~\eqref{poster_pro} by considering the intra-cluster distance. Since we implement $\ell_2$ normalization, we have $\left ( \textbf{E}-\mu_k \right )^{\mathsf{T}}\left ( \textbf{E}-\mu_k\ \right )  =2-2\textbf{E}^{\mathsf{T}}\mu_k$, the process of Eq.~\eqref{poster_pro} could be expressed by:
\begin{equation}
\begin{aligned}
\chi_{ik} &= p(q_i =k|x_i)\\
&= \frac{\text{exp}\left ( -\left ( \textbf{E}_i-\mu_k \right )^\mathsf{T} \left ( \textbf{E}_i-\mu_k\ \right ) /2\sigma_k \right )}{\sum_k{\text{exp}\left ( -\left ( \textbf{E}_i-\mu_k \right )^\mathsf{T} \left ( \textbf{E}_i-\mu_k\ \right ) /2\sigma_k \right )}}\\
&=\frac{\text{exp}\left ( \textbf{E}_i^\mathsf{T}\mu_k/\sigma_k \right )}{\sum_k{\text{exp}\left ( \textbf{E}_i^\mathsf{T}\mu_k/\sigma_k \right )}}.
\end{aligned}
\label{k-cluster}
\end{equation}

In this way, we obtain the soft prediction of a sample belonging to the $k$-th cluster $(\mu_k)$. Furthermore, we combine the model's predictions with the representation distribution in an unsupervised manner and update them using the GMM. By formulating the GMM to model the distribution of representations and soft predictions, we transform the noisy identification problem into an anomaly identification problem. In a multi-view setting, if a sample is clean, its soft predictions and cluster assignments should remain consistent across the different views. For a given sample $x_i$, it is classified as either clean (normal data) or noisy (anomalous data). Based on the above analysis, we provide the conditional probability to determine the likelihood of $x_i$ containing clean information, which can be calculated as:
\begin{equation}
\begin{aligned}
\chi_{y=q|i} = p(y_i =q_i|x_i) =\frac{\text{exp}\left ( \textbf{E}_i^\mathsf{T}\mu_{qi}/\sigma_{qi} \right )}{\sum_k{\text{exp}\left ( \textbf{E}_i^\mathsf{T}\mu_{k}/\sigma_{k} \right )}}.
\end{aligned}
\label{clean_pro}
\end{equation}

After that, we introduce a two-component GMM to automatically identify the clean probability of a given sample, presented as:
\begin{equation}
\begin{aligned}
p(\chi_{y=q|i}) = \underset{\varphi_i}{\underbrace{{p(\chi_{y=q|i}, a=1)}}} +  \underset{1 - \varphi_i}{\underbrace{{p(\chi_{y=q|i}, a=0)}}}
\end{aligned}
\label{two-GMM}
\end{equation}
where $a=1$ represents the cluster of clean samples with a higher mean value, while $a=0$ corresponds to the cluster with a lower mean value. Consequently, $\varphi_i$ can be interpreted as the probability of a sample $x_i$ being clean, whereas $1-\varphi_i$ denotes the probability of $x_i$ being noisy. Using Eq.~\eqref{two-GMM}, we can calculate the posterior probability that determines whether a sample $x_i$ is clean, which is presented in Fig.~\ref{overall}(a).

\subsection{Hybrid Rectification Strategy Design}

In Section~\ref{identification}, we estimate the probability $\varphi_i^v$ of each sample being classified as clean in $v$-th view. Following this, we propose a hybrid rectification strategy to address noisy samples. The details of the strategy are illustrated in Fig.~\ref{overall}(b). Specifically, we utilize a combination of predicted soft distributions to perform noise rectification:
\begin{equation}
\begin{aligned}
y_i^v &= h(\textbf{E}_i^v),\\ 
m_i^v = \varphi_i^v \times y_i^v + (1 - &\varphi_i^v) \times y_i^1, v=\{2,\dots,V\},
\end{aligned}
\label{correct}
\end{equation}
where $m_i^v$ represents the mixed soft prediction, and $h(\cdot)$ denotes the classifier head. We utilize a multilayer perceptron (MLP) with a softmax function as the backbone for $h(\cdot)$. In this paper, we focus on the unsupervised clustering task, where obtaining reliable supervisory information is particularly challenging. Following previous noise-robust MVC methods~\cite{PVC, SURE, sun2024robust, MvCLN}, we assume the first view to be the clean view. Therefore, in Eq.~\eqref{two-GMM}, the mixed soft prediction is predominantly influenced by the clean samples. Conversely, as $\varphi_i$ approaches 0, the mixed soft prediction is adjusted based on the prediction of the first view.

With the mixed soft prediction, we could provide the following rectification loss:
\begin{equation}
\begin{aligned}
\mathcal{L}_{rs} =  \frac{1}{V-1}\sum_{v=2}^V\left ( -\sum_{i=1}^N\sum_{j=1}^K m_{ij}^v \text{log}\left (y_{ij}^v\right ) \right ),
\end{aligned}
\label{rectifi_loss}
\end{equation}
where $y^v$ represents the $v$-th soft prediction and $v\in\{2,\dots,V\}$. In this manner, the noisy data could be rectified by the cross-entropy loss. 
A more detailed experimental analysis of the hybrid rectification strategy is presented in Section~\ref{ab}.

\subsection{Noise-Robust Contrastive Mechanism}
Contrastive learning~\cite{Graphlearner} has been proven to be an effective technique for enhancing representation robustness. In this subsection, we propose a noise-robust contrastive mechanism to further mitigate the impact of noisy data in multi-view clustering tasks. In unsupervised multi-view noisy scenarios, constructing positive and negative sample pairs based solely on indices may result in incorrect pairings~\cite{sun2024robust}. Ensuring the reliability of sample pair construction is a significant challenge. To improve the accuracy of pair construction, we incorporate soft predictions as an additional validation criterion. Similar to previous methods, the first step is to calculate the similarity between samples across views:
\begin{equation}
\begin{aligned}
s(\textbf{E}^m_i, \textbf{E}^n_j) = \frac{\text{sim}(\textbf{E}^m_i, \textbf{E}^n_j)}{||\textbf{E}^m_i||_2 ||\textbf{E}^n_j||_2},
\end{aligned}
\label{sim}
\end{equation}
where $\text{sim}(\cdot)$ denotes the similarity function, e.g., cosine similarity. Next, we present the noise-robust contrastive loss for $i$-th and $j$-th sample in $m$-th and $n$-th view as:
\begin{equation}
\begin{aligned}
\ell^{mn} &= \mathbb{I}\{(y^m_i)^\mathsf{T}(y^n_j) \ge \tau\}\\
&\left( \text{log} (1-s(\textbf{E}^m_i, \textbf{E}^n_j)) + \text{log} (1-s(\textbf{E}^m_j, \textbf{E}^n_i)) \right), 
\end{aligned}
\label{contra_loss}
\end{equation}
where $\tau$ is the confidence threshold used to control the construction of sample pairs for contrastive learning by selecting similar samples. As illustrated in Fig.\ref{overall}(c), green denotes correctly constructed sample pairs, while red indicates incorrect ones. Our proposed strategy significantly improves the accuracy of sample pair construction, thereby enhancing the overall reliability of the contrastive learning process. Then, the robust contrastive loss for all cross-view is defined by:
\begin{equation}
\begin{aligned}
\mathcal{L}_{con} =\frac{1}{V(V-1)} \sum_{m=1}^V\sum_{n=1}^V \ell^{mn}, m \neq n.
\end{aligned}
\label{robu_loss}
\end{equation}

\subsection{Objective Function}
Our proposed AIRMVC is primarily optimized using three objective functions: reconstruction loss $\mathcal{L}_{rec}$, rectification loss $\mathcal{L}_{rs}$, and robust-noisy contrastive loss $\mathcal{L}_{con}$. To be specific, $\mathcal{L}_{rec}$ denotes the reconstruction procedure with autoencoder network to learn the representations $\textbf{E}$ in latent space. The process could be formulated as:
\begin{equation}
\begin{aligned}
\mathcal{L}_{rec} = \sum_{v=1}^V \sum_{i=1}^N  \left \| x_i^v - \mathcal{G}^v(\mathcal{F}^v(x_i^v; \Theta^v); \Phi^v) \right \|_2^2, 
\end{aligned}
\label{rec_loss}
\end{equation}
where $\mathcal{G}^v$ and $\mathcal{F}^v$ are the decoder and encoder network for $v$-th view, respectively. In summary, the overall objective function of AIRMVC is:
\begin{equation}
\begin{aligned}
\mathcal{L} = \mathcal{L}_{rec} + \alpha \cdot \mathcal{L}_{rs} + \beta \cdot \mathcal{L}_{con},
\end{aligned}
\label{total_loss}
\end{equation}
where $\alpha$ and $\beta$ are the trade-off hyper-parameters. By optimizing the overall objective function, AIRMVC could automatically identify and rectify the noisy data in unsupervised multi-view clustering task. Due to the space limited, we present the training algorithm in Alg.~\ref{Algo} in Appendix.

\section{Theoretical Analysis}
In this subsection, we examine the rationale behind our proposed noise-robust contrastive mechanism from a theoretical perspective. For clarity, we provide the following definitions of symbols. Let $x$ and $x^+$ denote the input sample and its positive sample in our noise-robust contrastive mechanism. $y$ and $y'$ represent the clean and noisy soft prediction, respectively. We consider $\textbf{E}^*$ as the representations by maximizing the mutual information between $\textbf{E}$ and $x+$, i.e., $\textbf{E}^* = \text{argmax}_{\textbf{E}} I(\textbf{E}, x^+)$. Besides, we define $I(x;y|x^+) \le \vartheta$ and $I(x;y'|x^+) > \eta$. For input sample $x$, the clean soft prediction $y$, and the noisy soft prediction $y'$, we have:

\begin{theorem}
The representations ${\textbf{E}^*}$ retain clean information and discard noisy information, which can be presented as:
\begin{equation}
\begin{aligned}
I(x;y)- \vartheta &\le I(\textbf{E}^*;y) \le I(x;y),\\
I(\textbf{E}^*;y') &\le I(x;y')-\eta + \vartheta.
\end{aligned}
\label{theorem_1}
\end{equation} 
\end{theorem}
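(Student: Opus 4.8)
The plan is to establish the two chains of inequalities separately, treating each as a consequence of elementary mutual-information identities combined with the defining property of $\textbf{E}^*$ as a maximizer of $I(\textbf{E};x^+)$. First I would record the key assumptions in information-theoretic language: $\textbf{E}^*$ is a (deterministic or stochastic) function of $x$, so the data-processing inequality gives $I(\textbf{E}^*;y) \le I(x;y)$ and $I(\textbf{E}^*;y') \le I(x;y')$ immediately, since $y$ and $y'$ depend on the raw input only through what the representation can capture; this disposes of the upper bound in the first chain. The nontrivial content is the lower bound $I(x;y)-\vartheta \le I(\textbf{E}^*;y)$ and the sharper upper bound $I(\textbf{E}^*;y') \le I(x;y')-\eta+\vartheta$ in the second chain.

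For the lower bound I would exploit the chain rule for mutual information conditioned on the positive sample $x^+$. Writing $I(x;y) = I(x;y\mid x^+) + I(x^+;y) - I(x^+;y\mid x)$ (or the cleaner decomposition $I(\{x,x^+\};y) = I(x^+;y) + I(x;y\mid x^+)$), I would argue that because $\textbf{E}^*$ maximizes the mutual information with $x^+$, it retains essentially all the information that $x^+$ carries about the clean label $y$, so that $I(\textbf{E}^*;y)$ differs from $I(x^+;y)$ — and hence from $I(x;y)$ — by at most the conditional term $I(x;y\mid x^+)$, which is bounded by $\vartheta$ by hypothesis. Concretely, the gap $I(x;y) - I(\textbf{E}^*;y)$ can be controlled by the residual information $I(x;y\mid x^+) \le \vartheta$, yielding the claimed lower bound.

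The second chain is handled analogously but now using the assumption $I(x;y'\mid x^+) > \eta$, which says the noise label is largely independent of the shared signal $x^+$. Starting from the data-processing bound $I(\textbf{E}^*;y') \le I(x;y')$, I would refine it by subtracting off the portion of the noise information that $\textbf{E}^*$ provably cannot access: since $\textbf{E}^*$ is aligned with $x^+$, the information it shares with $y'$ cannot exceed $I(x^+;y')$ plus the small clean-residual slack $\vartheta$, and $I(x^+;y') = I(x;y') - I(x;y'\mid x^+) < I(x;y') - \eta$. Combining these gives $I(\textbf{E}^*;y') \le I(x;y') - \eta + \vartheta$.

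The main obstacle I anticipate is making the step ``$\textbf{E}^*$ retains all of $x^+$'s information about the labels'' fully rigorous. The maximizer property $\textbf{E}^* = \arg\max_{\textbf{E}} I(\textbf{E};x^+)$ by itself only controls $I(\textbf{E}^*;x^+)$, not directly the cross-quantities $I(\textbf{E}^*;y)$ and $I(\textbf{E}^*;y')$; bridging this gap requires either a sufficiency-type argument (treating $\textbf{E}^*$ as a sufficient statistic for $x^+$, so that conditioning on $\textbf{E}^*$ renders $x^+$ and the labels appropriately decoupled) or an explicit Markov-chain assumption such as $y \leftrightarrow x^+ \leftrightarrow \textbf{E}^*$. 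I would therefore state the needed Markov/sufficiency structure carefully at the outset and verify that the two conditional-information hypotheses $I(x;y\mid x^+)\le\vartheta$ and $I(x;y'\mid x^+)>\eta$ are exactly what is required to close both chains, since the rest reduces to routine application of the chain rule and nonnegativity of mutual information.
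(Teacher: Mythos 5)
Your proposal follows essentially the same route as the paper: the data-processing inequality for the upper bounds, the chain-rule decomposition of $I(x;y)$ conditioned on $x^+$ together with $I(x;y\mid x^+)\le\vartheta$ for the lower bound, and the combination $I(x;y'\mid x^+)>\eta$ with $I(\textbf{E}^*;y'\mid x^+)\le\vartheta$ (via the Markov chain $y'\leftarrow y\leftrightarrow x\to\textbf{E}$) for the second chain. The sufficiency/Markov step you flag as the main obstacle is exactly the point the paper handles by asserting $I(\textbf{E}^*;x^+)=I(x;x^+)$ and $I(\textbf{E}^*;x^+\mid y)=I(x;x^+\mid y)$, so your plan is correct and matches the paper's argument.
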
 

\textbf{Remark:} For the input samples $x$, the corresponding positive samples $x^+$ and prediction $y_i$, $\vartheta$ could be interpreted as the relatively small information gain contributed by the positive samples $x^+$. For the fixed $x$ and its positive samples $x^+$, the information gain for the class prediction $y$ is limited. In contrast, $x^+$ contributes greater information gain to the noisy prediction $y'$, which we regard as $\eta$. By calculating the mutual information $I(x_i;y_i)$, Theorem.~\ref{theorem_1} demonstrates that the representations $\textbf{E}^*$ learned by the contrastive mechanism could discard the noisy information while preserving the clean information. The proof is provided in Section~\ref{the_proof}.


\section{Experiments}

\begin{table}[]
\centering
\caption{Statistics summary of six benchmark datasets.}
\scalebox{0.95}{
\begin{tabular}{c|ccc}
\hline
\textbf{Dataset}    & \textbf{\#Views} & \textbf{\#Samples} & \textbf{\#Clusters} \\ \hline
\textbf{BBCSport}   & 2               & 544               & 5                  \\
\textbf{WebKB}      & 2               & 1051              & 2                  \\
\textbf{Reuters}    & 5               & 1200              & 6                  \\
\textbf{UCI-digit}  & 3               & 2000              & 10                 \\
\textbf{Caltech101} & 5               & 9144              & 102                \\
\textbf{STL10}      & 4               & 13000             & 10                 \\ \hline
\end{tabular}}
\label{data_info}
\end{table}

\begin{table*}[]
\centering
\caption{Multi-view clustering performance on six benchmark datasets (Part 1/4). The highest performance metrics are emphasized in \textcolor{red}{\textbf{red}}, while the runner-up results are marked in \textcolor{blue}{blue}.}
\scalebox{0.78}{
\begin{tabular}{c|cc|ccc|ccc|ccc|ccc}
\hline
                            & \multicolumn{2}{c|}{\textbf{Noisy Rate}}                      & \multicolumn{3}{c|}{\textbf{10\%}}                                                                                     & \multicolumn{3}{c|}{\textbf{30\%}}                                                                                     & \multicolumn{3}{c|}{\textbf{50\%}}                                                                                     & \multicolumn{3}{c}{\textbf{70\%}}                                                                                      \\ \cline{2-15} 
\multirow{-2}{*}{\textbf{}} & \multicolumn{2}{c|}{\textbf{Metric}}                          & \textbf{ACC$\uparrow$}                 & \textbf{NMI$\uparrow$}                          & \textbf{PUR$\uparrow$}                          & \textbf{ACC$\uparrow$}                          & \textbf{NMI$\uparrow$}                          & \textbf{PUR$\uparrow$}                          & \textbf{ACC$\uparrow$}                          & \textbf{NMI$\uparrow$}                          & \textbf{PUR$\uparrow$}                          & \textbf{ACC$\uparrow$}                          & \textbf{NMI$\uparrow$}                          & \textbf{PUR$\uparrow$}                          \\ \hline
                            & \multicolumn{1}{c|}{\textbf{CANDY}}   & \textbf{NeurIPS 2024} & 23.16                                 & 05.60                                 & 40.62                                 & 19.49                                 & 03.93                                 & 38.79                                 & 17.10                                 & 03.72                                 & 38.42                                 & 16.36                                 & 02.68                                 & 36.58                                 \\
                            & \multicolumn{1}{c|}{\textbf{RMCNC}}   & \textbf{TKDE 2024}    & 35.48                                 & 15.80                                 & 35.48                                 & 30.57                                 & 12.87                                 & 30.57                                 & 28.56                                 & 10.67                                 & 28.05                                 & 24.26                                 & 04.57                                 & 24.26                                 \\
                            & \multicolumn{1}{c|}{\textbf{TGM-MVC}} & \textbf{ACM MM 2024}  & 37.28                                 & 08.39                                 & 40.84                                 & 36.37                                 & 06.33                                 & 38.82                                 & 30.19                                 & 05.81                                 & 30.40                                 & 25.13                                 & 04.16                                 & 25.13                                 \\
                            & \multicolumn{1}{c|}{\textbf{SCE-MVC}} & \textbf{NeurIPS 2024} & {\color[HTML]{3531FF} 47.06}          & 15.96                                 & {\color[HTML]{3531FF} 47.98}          & {\color[HTML]{3531FF} 45.40}          & 15.67                                 & {\color[HTML]{3531FF} 45.96}          & {\color[HTML]{3531FF} 43.57}          & {\color[HTML]{3531FF} 12.72}          & {\color[HTML]{3531FF} 45.22}          & {\color[HTML]{3531FF} 39.73}          & {\color[HTML]{3531FF} 12.95}          & {\color[HTML]{3531FF} 39.46}          \\
                            & \multicolumn{1}{c|}{\textbf{MVCAN}}   & \textbf{CVPR 2024}    & 44.85                                 & {\color[HTML]{3531FF} 18.99}          & 44.85                                 & 39.15                                 & {\color[HTML]{3531FF} 15.81}          & 39.15                                 & 26.65                                 & 11.85                                 & 25.97                                 & 24.26                                 & 06.57                                 & 23.76                                 \\
                            & \multicolumn{1}{c|}{\textbf{DIVIDE}}  & \textbf{AAAI 2024}    & 31.25                                 & 04.57                                 & 38.60                                 & 30.15                                 & 03.28                                 & 36.76                                 & 28.68                                 & 02.65                                 & 35.66                                 & 27.02                                 & 02.22                                 & 36.03                                 \\
\multirow{-6}{*}{\rotatebox{90}{\textbf{BBCSport}}}         & \multicolumn{1}{c|}{\textbf{AIRMVC}} & \textbf{Ours}        & {\color[HTML]{FF0000} \textbf{49.45}} & {\color[HTML]{FF0000} \textbf{28.63}} & {\color[HTML]{FF0000} \textbf{49.44}} & {\color[HTML]{FF0000} \textbf{45.59}} & {\color[HTML]{FF0000} \textbf{20.11}} & {\color[HTML]{FF0000} \textbf{47.24}} & {\color[HTML]{FF0000} \textbf{45.04}} & {\color[HTML]{FF0000} \textbf{23.34}} & {\color[HTML]{FF0000} \textbf{53.13}} & {\color[HTML]{FF0000} \textbf{40.99}} & {\color[HTML]{FF0000} \textbf{16.52}} & {\color[HTML]{FF0000} \textbf{45.59}} \\ \hline
                            & \multicolumn{1}{c|}{\textbf{CANDY}}   & \textbf{NeurIPS 2024} & 19.03                                 & 12.41                                 & 19.03                                 & 18.08                                 & 10.85                                 & 18.10                                 & 16.56                                 & 07.84                                 & 16.56                                 & 15.60                                 & 04.87                                 & 15.60                                 \\
                            & \multicolumn{1}{c|}{\textbf{RMCNC}}   & \textbf{TKDE 2024}    & {\color[HTML]{3531FF} 78.12}          & 10.77                                 & 78.10                                 & 66.98                                 & 08.78                                 & 66.98                                 & 60.04                                 & 06.24                                 & 60.00                                 & 52.71                                 & 04.35                                 & 52.16                                 \\
                            & \multicolumn{1}{c|}{\textbf{TGM-MVC}} & \textbf{ACM MM 2024}  & 76.86                                 & 13.81                                 & 76.86                                 & 71.87                                 & 11.65                                 & {\color[HTML]{3531FF} 71.87}          & 62.86                                 & 08.04                                 & {\color[HTML]{3531FF} 62.86}          & 55.51                                 & {\color[HTML]{FF0000} \textbf{06.87}}           & 55.50                                 \\
                            & \multicolumn{1}{c|}{\textbf{MVCAN}}   & \textbf{CVPR 2024}    & 77.83                                 & 12.39                                 & {\color[HTML]{3531FF} 78.12}          & 67.46                                 & 06.20                                 & 67.46                                 & 64.22                                 & 05.94                                 & 64.22                                 & 55.66                                 & 05.74                                 & 55.65                                 \\
                            & \multicolumn{1}{c|}{\textbf{SCE-MVC}} & \textbf{NeurIPS 2024} & 76.78                                 & {\color[HTML]{3531FF} 18.78}          & 69.65                                 & {\color[HTML]{3531FF} 74.12}          & {\color[HTML]{3531FF} 13.22}          & 68.19                                 & {\color[HTML]{3531FF} 65.75}          & {\color[HTML]{3531FF} 08.17}          & 62.18                                 & {\color[HTML]{3531FF} 60.08}          & 03.52                                 & {\color[HTML]{3531FF} 60.77}          \\
                            & \multicolumn{1}{c|}{\textbf{DIVIDE}}  & \textbf{AAAI 2024}    & 66.41                                 & 15.14                                 & 69.28                                 & 53.66                                 & 10.91                                 & 57.65                                 & 52.24                                 & 07.70                                 & 56.26                                 & 51.38                                 & 05.60                                 & 55.34                                 \\
\multirow{-7}{*}{\rotatebox{90}{\textbf{WebKB}}}         & \multicolumn{1}{c|}{\textbf{AIRMVC}} & \textbf{Ours}         & {\color[HTML]{FF0000} \textbf{83.73}} & {\color[HTML]{FF0000} \textbf{27.15}} & {\color[HTML]{FF0000} \textbf{83.73}} & {\color[HTML]{FF0000} \textbf{77.93}} & {\color[HTML]{FF0000} \textbf{13.48}} & {\color[HTML]{FF0000} \textbf{78.12}} & {\color[HTML]{FF0000} \textbf{74.98}} & {\color[HTML]{FF0000} \textbf{08.42}} & {\color[HTML]{FF0000} \textbf{74.98}} & {\color[HTML]{FF0000} \textbf{62.89}} & {\color[HTML]{3531FF} {06.69}} & {\color[HTML]{FF0000} \textbf{62.32}} \\ \hline
                            & \multicolumn{1}{c|}{\textbf{CANDY}}   & \textbf{NeurIPS 2024} & 24.58                                 & 08.50                                 & 33.75                                 & 22.17                                 & 08.09                                 & 33.08                                 & 20.83                                 & 05.57                                 & 29.25                                 & 18.83                                 & 04.44                                 & 26.75                                 \\
                            & \multicolumn{1}{c|}{\textbf{RMCNC}}   & \textbf{TKDE 2024}    & 39.42                                 & 24.21                                 & 41.83                                 & 35.83                                 & 16.96                                 & 37.83                                 & 30.17                                 & 10.92                                 & 31.67                                 & 26.83                                 & 09.07                                 & 28.75                                 \\
                            & \multicolumn{1}{c|}{\textbf{TGM-MVC}} & \textbf{ACM MM 2024}  & 31.28                                 & 09.05                                 & 31.53                                 & 30.30                                 & 08.51                                 & 30.30                                 & 26.44                                 & 06.78                                 & 27.59                                 & 23.90                                 & 04.58                                 & 25.42                                 \\
                            & \multicolumn{1}{c|}{\textbf{SCE-MVC}} & \textbf{NeurIPS 2024} & 43.25                                 & 22.89                                 & 43.23                                 & {\color[HTML]{3531FF} 41.25}          & {\color[HTML]{3531FF} 20.00}          & {\color[HTML]{3531FF} 41.92}          & {\color[HTML]{3531FF} 39.92}          & {\color[HTML]{FF0000} \textbf{22.42}} & {\color[HTML]{3531FF} 39.09}          & {\color[HTML]{3531FF} 39.33}          & {\color[HTML]{3531FF} 19.87}          & {\color[HTML]{3531FF} 39.08}          \\
                            & \multicolumn{1}{c|}{\textbf{MVCAN}}   & \textbf{CVPR 2024}    & {\color[HTML]{3531FF} 45.25}          & {\color[HTML]{3531FF} 23.50}          & {\color[HTML]{3531FF} 45.50}          & 37.67                                 & 14.53                                 & 38.75                                 & 29.67                                 & 14.04                                 & 30.25                                 & 23.25                                 & 18.47                                 & 25.08                                 \\
                            & \multicolumn{1}{c|}{\textbf{DIVIDE}}  & \textbf{AAAI 2024}    & 41.08                                 & 16.82                                 & 42.50                                 & 33.42                                 & 13.42                                 & 34.42                                 & 30.50                                 & 10.03                                 & 33.25                                 & 27.92                                 & 09.39                                 & 28.61                                 \\
\multirow{-6}{*}{\rotatebox{90}{\textbf{Reuters}}}         & \multicolumn{1}{c|}{\textbf{AIRMVC}} & \textbf{Ours}        & {\color[HTML]{FF0000} \textbf{48.25}}                       & {\color[HTML]{FF0000} \textbf{26.34}}                        & {\color[HTML]{FF0000} \textbf{48.25}}                        & {\color[HTML]{FF0000} \textbf{46.67}}                        & {\color[HTML]{FF0000} \textbf{23.54}}                        & {\color[HTML]{FF0000} \textbf{46.67}}                        & {\color[HTML]{FF0000} \textbf{44.83}}                        & {\color[HTML]{3531FF} 21.72}                       & {\color[HTML]{FF0000} \textbf{45.58}}                        & {\color[HTML]{FF0000} \textbf{41.75}}                        & {\color[HTML]{FF0000} \textbf{20.91}}                         & {\color[HTML]{FF0000} \textbf{42.33}}                         \\ \hline
\end{tabular}}
\label{com_res1}
\end{table*}

In this section, we perform a series of experiments to evaluate the effectiveness and advantages of our proposed method. Specifically, we aim to address the following research questions (\textbf{RQs}): \textbf{RQ1}: How does AIRMVC compare with other leading deep multi-view clustering techniques in terms of performance? \textbf{RQ2}: What art the impacts of the components of AIRMVC to enhance multi-view clustering results? \textbf{RQ3}: What clustering structures are identified by AIRMVC? \textbf{RQ4}: What is the effect of hyper-parameters on the efficacy of AIRMVC?



\subsection{Experimental Settings}
\textbf{{Datasets:}} To evaluate the effectiveness of the proposed AIRMVC framework, we implement comprehensive experiments on six widely used datasets: BBCSport, Reuters, Caltech101, UCI-digit, WebKB, SUNRGB-D, and STL10. Tab.~\ref{data_info} provides detailed statistical information for all datasets. To simulate a noisy input environment, we randomly introduce noise into the multi-view input data $x$ at varying proportions of \{10\%, 30\%, 50\%, 70\%, 90\%\}.


\textbf{Evaluation Metrics:} To provide a thorough evaluation of the model's clustering performance, we utilize three widely recognized metrics: Accuracy (ACC), Normalized Mutual Information (NMI), and Purity (PUR). To ensure a fair comparison, all methods are assessed across 10 independent runs, and the average results are reported.

\textbf{{Comparison algorithms:}} To showcase the broad applicability and superior performance of the proposed AIRMVC, we compare it against 11 state-of-the-art deep multi-view clustering methods. These baselines are categorized into two groups: classical deep multi-view clustering methods (CoMVC~\cite{SiMVC}, SiMVC~\cite{SiMVC}, MFLVC~\cite{MFLVC}, DealMVC~\cite{DealMVC}, SURE~\cite{SURE}, CANDY~\cite{candy}, DIVIDE~\cite{DIVIDE}, TGM-MVC~\cite{TGM-MVC}, and SCE-MVC~\cite{SCE-MVC}) and noise-resilient deep multi-view clustering methods (RMCNC~\cite{sun2024robust} and MVCAN~\cite{MVCAN}). Detailed information on these baselines is provided in Section.~\ref{com_methods} of the Appendix.

\textbf{{Implement Details:}} In this study, all experiments are conducted on the PyTorch~\cite{pytorch} platform using an NVIDIA A6000 GPU. For fair comparison, we reproduce the results of all baselines using the original source codes and configurations provided by their authors. Our proposed AIRMVC is trained using the Adam optimizer~\cite{adam} with its default settings. To enhance the discriminative power of the network and obtain reliable soft predictions in an unsupervised setting, we pre-train the model with an auto-encoder module for 100 epochs. The trade-off hyperparameters $\alpha$ and $\beta$ are consistently set to 1.0. The maximal training epoch is set to 400 for all datasets. A detailed overview of the hyperparameter settings is provided in Tab.~\ref{hyper-para} in the Appendix.


\subsection{Comparison Experiments~\textbf{(RQ1)}}

\begin{table*}[]
\centering
\caption{Multi-view clustering performance on six benchmark datasets (Part 2/4). The highest performance metrics are emphasized in \textcolor{red}{\textbf{red}}, while the runner-up results are marked in \textcolor{blue}{blue}.}
\scalebox{0.78}{
\begin{tabular}{c|cc|ccc|ccc|ccc|ccc}
\hline
                            & \multicolumn{2}{c|}{\textbf{Noisy Rate}}                      & \multicolumn{3}{c|}{\textbf{10\%}}                                                                                     & \multicolumn{3}{c|}{\textbf{30\%}}                                                                                     & \multicolumn{3}{c|}{\textbf{50\%}}                                                                                     & \multicolumn{3}{c}{\textbf{70\%}}                                                                                      \\ \cline{2-15} 
\multirow{-2}{*}{\textbf{}} & \multicolumn{2}{c|}{\textbf{Metric}}                          & \textbf{ACC$\uparrow$}                          & \textbf{NMI$\uparrow$}                          & \textbf{PUR$\uparrow$}                          & \textbf{ACC$\uparrow$}                          & \textbf{NMI$\uparrow$}                          & \textbf{PUR$\uparrow$}                          & \textbf{ACC$\uparrow$}                          & \textbf{NMI$\uparrow$}                          & \textbf{PUR$\uparrow$}                          & \textbf{ACC$\uparrow$}                          & \textbf{NMI$\uparrow$}                          & \textbf{PUR$\uparrow$}                          \\ \hline
                            & \multicolumn{1}{c|}{\textbf{CANDY}}   & \textbf{NeurIPS 2024} & 80.90                                 & 67.70                                 & 80.90                                 & 72.30                                 & 61.19                                 & 72.30                                 & 67.10                                 & 54.94                                 & 67.10                                 & 63.10                                 & 52.50                                 & 65.20                                 \\
                            & \multicolumn{1}{c|}{\textbf{RMCNC}}   & \textbf{TKDE 2024}    & 27.40                                 & 14.74                                 & 29.50                                 & 25.20                                 & 12.17                                 & 26.80                                 & 23.65                                 & 11.22                                 & 25.30                                 & 22.20                                 & 10.10                                 & 24.05                                 \\
                            & \multicolumn{1}{c|}{\textbf{TGM-MVC}} & \textbf{ACM MM 2024}  & 60.23                                 & 65.08                                 & 64.89                                 & 59.52                                 & 64.03                                 & 63.60                                 & 56.31                                 & 60.23                                 & 60.25                                 & 52.01                                 & {\color[HTML]{3531FF} 56.88}          & 58.28                                 \\
                            & \multicolumn{1}{c|}{\textbf{SCE-MVC}} & \textbf{NeurIPS 2024} & 72.60                                 & 71.70                                 & 72.55                                 & 68.30                                 & 67.49                                 & 70.00                                 & 66.55                                 & {\color[HTML]{3531FF} 67.10}          & 68.80                                 & 64.45                                 & 55.58                                 & 66.30                                 \\
                            & \multicolumn{1}{c|}{\textbf{MVCAN}}   & \textbf{CVPR 2024}    & 86.50                                 & 79.00                                 & 86.50                                 & 74.35                                 & {\color[HTML]{3531FF} 72.78}          & 76.90                                 & 64.15                                 & 66.54                                 & 67.65                                 & 63.10                                 & 54.21                                 & 63.24                                 \\
                            & \multicolumn{1}{c|}{\textbf{DIVIDE}}  & \textbf{AAAI 2024}    & {\color[HTML]{3531FF} 88.20}          & {\color[HTML]{3531FF} 80.28}          & {\color[HTML]{3531FF} 88.20}          & {\color[HTML]{3531FF} 76.00}          & 71.56                                 & {\color[HTML]{3531FF} 78.45}          & {\color[HTML]{3531FF} 71.00}          & 62.80                                 & {\color[HTML]{3531FF} 74.20}          & {\color[HTML]{3531FF} 66.25}          & 54.72                                 & {\color[HTML]{FF0000} \textbf{68.25}} \\
\multirow{-6}{*}{\rotatebox{90}{\textbf{UCI-digit}}}         & \multicolumn{1}{c|}{\textbf{AIRMVC}} & \textbf{Ours}        & {\color[HTML]{FF0000} \textbf{93.95}} & {\color[HTML]{FF0000} \textbf{89.08}} & {\color[HTML]{FF0000} \textbf{93.95}} & {\color[HTML]{FF0000} \textbf{77.60}} & {\color[HTML]{FF0000} \textbf{76.65}} & {\color[HTML]{FF0000} \textbf{80.75}} & {\color[HTML]{FF0000} \textbf{76.05}} & {\color[HTML]{FF0000} \textbf{68.17}} & {\color[HTML]{FF0000} \textbf{76.05}} & {\color[HTML]{FF0000} \textbf{69.20}} & {\color[HTML]{FF0000} \textbf{60.38}} & {\color[HTML]{FF0000} \textbf{69.20}} \\ \hline
                            & \multicolumn{1}{c|}{\textbf{CANDY}}   & \textbf{NeurIPS 2024} & 16.67                                 & 16.93                                 & 19.67                                 & 15.87                                 & 16.62                                 & 19.19                                 & 10.64                                 & 10.91                                 & 10.94                                 & 10.98                                 & 15.14                                 & 17.35                                 \\
                            & \multicolumn{1}{c|}{\textbf{RMCNC}}   & \textbf{TKDE 2024}    & 08.75                                 & 16.57                                 & 08.71                                 & 07.20                                 & 16.02                                 & 07.21                                 & 06.94                                 & 12.87                                 & 06.87                                 & 06.71                                 & 11.58                                 & 06.71                                 \\
                            & \multicolumn{1}{c|}{\textbf{TGM-MVC}} & \textbf{ACM MM 2024}  & 12.81                                 & 31.50                                 & {\color[HTML]{3531FF} 28.95}          & 10.58                                 & {\color[HTML]{3531FF} 26.57}          & {\color[HTML]{3531FF} 26.25}          & 07.16                                 & 21.84                                 & {\color[HTML]{3531FF} 19.54}          & 06.66                                 & {\color[HTML]{3531FF} 20.46}          & {\color[HTML]{3531FF} 18.57}          \\
                            & \multicolumn{1}{c|}{\textbf{MVCAN}}   & \textbf{CVPR 2024}    & {\color[HTML]{3531FF} 20.53}          & {\color[HTML]{3531FF} 35.81}          & 24.60                                 & 14.30                                 & 26.02                                 & 14.14                                 & 10.28                                 & 23.19                                 & 10.99                                 & 10.34                                 & 20.43                                 & 10.44                                 \\
                            & \multicolumn{1}{c|}{\textbf{SCE-MVC}} & \textbf{NeurIPS 2024} & 20.33                                 & 35.28                                 & 17.62                                 & 14.25                                 & 25.74                                 & 14.01                                 & 10.78                                 & 20.51                                 & 10.44                                 & {\color[HTML]{3531FF} 10.77}          & 19.46                                 & 08.54                                 \\
                            & \multicolumn{1}{c|}{\textbf{DIVIDE}}  & \textbf{AAAI 2024}    & 18.03                                 & 35.40                                 & 20.41                                 & {\color[HTML]{3531FF} 15.10}          & 25.80                                 & 20.08                                 & {\color[HTML]{3531FF} 11.10}          & {\color[HTML]{3531FF} 23.33}          & 16.33                                 & 10.16                                 & 18.88                                 & 14.87                                 \\
\multirow{-6}{*}{\rotatebox{90}{\textbf{Caltech101}}}         & \multicolumn{1}{c|}{\textbf{AIRMVC}} & \textbf{Ours}        & {\color[HTML]{FF0000} \textbf{21.45}}                        & {\color[HTML]{FF0000} \textbf{37.16}}                        & {\color[HTML]{FF0000} \textbf{34.69}}                        & {\color[HTML]{FF0000} \textbf{15.89}}                        & {\color[HTML]{FF0000} \textbf{29.39}}                        & {\color[HTML]{FF0000} \textbf{27.60}}                        & {\color[HTML]{FF0000} \textbf{11.80}}                        & {\color[HTML]{FF0000} \textbf{23.66}}                        & {\color[HTML]{FF0000} \textbf{23.33}}                        & {\color[HTML]{FF0000} \textbf{11.89}}                        & {\color[HTML]{FF0000} \textbf{21.14}}                        & {\color[HTML]{FF0000} \textbf{20.14}}                        \\ \hline
                            & \multicolumn{1}{c|}{\textbf{CANDY}}   & \textbf{NeurIPS 2024} & 27.08                                 & 21.03                                 & 27.33                                 & 20.56                                 & 19.23                                 & 20.21                                 & 18.19                                 & 18.48                                 & 18.77                                 & 17.49                                 & 16.79                                 & 17.80                                 \\
                            & \multicolumn{1}{c|}{\textbf{RMCNC}}   & \textbf{TKDE 2024}    & 16.51                                 & 03.56                                 & 16.92                                 & 15.85                                 & 02.94                                 & 16.22                                 & 14.78                                 & 02.33                                 & 15.12                                 & 14.12                                 & 01.70                                 & 14.95                                 \\
                            & \multicolumn{1}{c|}{\textbf{TGM-MVC}} & \textbf{ACM MM 2024}  & 27.50                                 & 19.51                                 & {\color[HTML]{3531FF} 28.10}          & 21.80                                 & {\color[HTML]{3531FF} 21.62}          & 21.65                                 & 20.01                                 & 20.02                                 & {\color[HTML]{3531FF} 20.78}          & {\color[HTML]{3531FF} 19.87}          & 14.05                                 & 19.77                                 \\
                            & \multicolumn{1}{c|}{\textbf{SCE-MVC}} & \textbf{NeurIPS 2024} & 27.58                                 & 23.97                                 & 27.05                                 & 20.77                                 & 19.34                                 & 20.92                                 & 18.45                                 & 14.51                                 & 20.10                                 & 15.85                                 & 12.60                                 & 16.93                                 \\
                            & \multicolumn{1}{c|}{\textbf{MVCAN}}   & \textbf{CVPR 2024}    & 26.64                                 & 24.05                                 & 26.57                                 & 21.62                                 & 20.17                                 & 20.52                                 & 19.36                                 & 20.01                                 & 19.09                                 & 17.25                                 & 17.96                                 & 17.37                                 \\
                            & \multicolumn{1}{c|}{\textbf{DIVIDE}}  & \textbf{AAAI 2024}    & {\color[HTML]{3531FF} 27.58}          & {\color[HTML]{3531FF} 24.54}          & 27.08                                 & {\color[HTML]{3531FF} 22.03}          & 20.55                                 & {\color[HTML]{3531FF} 22.17}          & {\color[HTML]{3531FF} 20.78}          & {\color[HTML]{3531FF} 20.25}          & 20.61                                 & 16.86                                 & {\color[HTML]{3531FF} 20.25}          & {\color[HTML]{FF0000} \textbf{20.28}}          \\
\multirow{-7}{*}{\rotatebox{90}{\textbf{STL10}}}         & \multicolumn{1}{c|}{\textbf{AIRMVC}} & \textbf{Ours}         & {\color[HTML]{FF0000} \textbf{28.81}} & {\color[HTML]{FF0000} \textbf{25.04}} & {\color[HTML]{FF0000} \textbf{29.01}} & {\color[HTML]{FF0000} \textbf{22.46}} & {\color[HTML]{FF0000} \textbf{23.64}} & {\color[HTML]{FF0000} \textbf{22.46}} & {\color[HTML]{FF0000} \textbf{21.95}} & {\color[HTML]{FF0000} \textbf{23.09}} & {\color[HTML]{FF0000} \textbf{22.02}} & {\color[HTML]{FF0000} \textbf{20.14}} & {\color[HTML]{FF0000} \textbf{22.66}} & {\color[HTML]{3531FF} {20.14}} \\ \hline
\end{tabular}}
\label{com_res2}
\vspace{-10pt}
\end{table*}

In this subsection, we conduct comprehensive experiments to demonstrate the superior and effectiveness of our designed AIRMVC. To be specific, we compare with 11 state-of-the-art baselines with different noise ratio. The experimental results are presented in Tab.~\ref{com_res1},~\ref{com_res2},~\ref{com_res3},~\ref{com_res4},~\ref{com_res5}. We highlight the optimal results in bold red, and the sub-optimal results with blue. Due to the space limited, we present Tab.~\ref{com_res3},~\ref{com_res4},~\ref{com_res5} in Section.~\ref{add_exp} of Appendix. From the results, we could observe the following observations:

1) AIRMVC significantly outperforms state-of-the-art baselines across most metrics and datasets. To be specific, compared to the best multi-view clustering algorithm, AIRMVC achieves remarkable improvements by 2.39\%, 9.64\%, and 1.46\% w.r.t. ACC, NMI and PUR in BBCSport dataset with 10\% noise. We analyze the reason is that AIRMVC could automatically identify and rectify the noisy data, leading to a robust learning procedure. 

2) When noise is present in multi-view data, the performance of deep multi-view clustering models tends to become unstable. The clustering performance of the model exhibits a downward trend as the noise proportion increases. We attribute this decline to the distortion of the underlying clustering structure caused by the presence of noisy data.

\begin{figure}[]
\centering
\small
\begin{minipage}{0.45\linewidth}
\centerline{\includegraphics[width=1\textwidth]{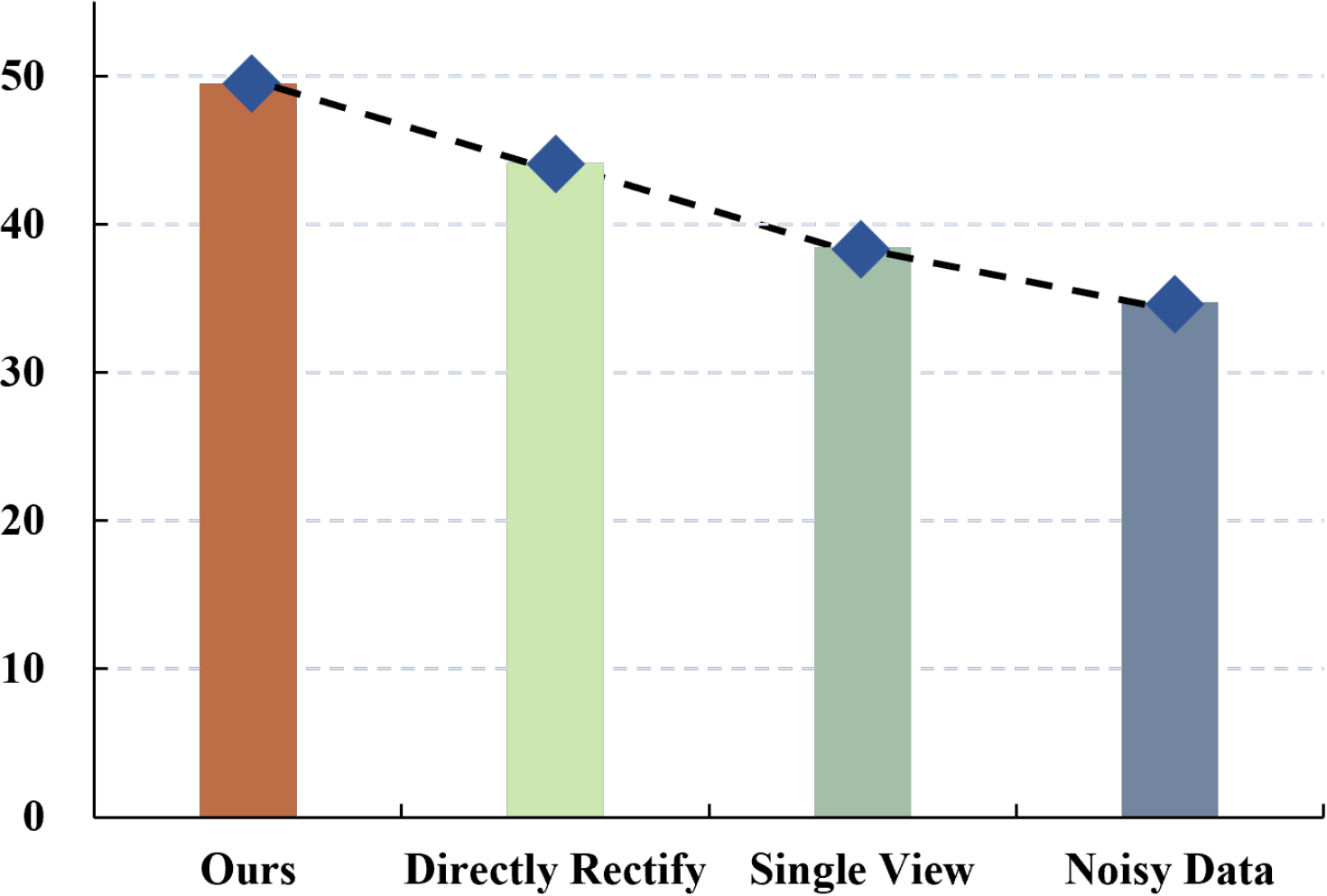}}
\vspace{3pt}
{\centerline{ACC}}
\end{minipage}
\begin{minipage}{0.45\linewidth}
\centerline{\includegraphics[width=\textwidth]{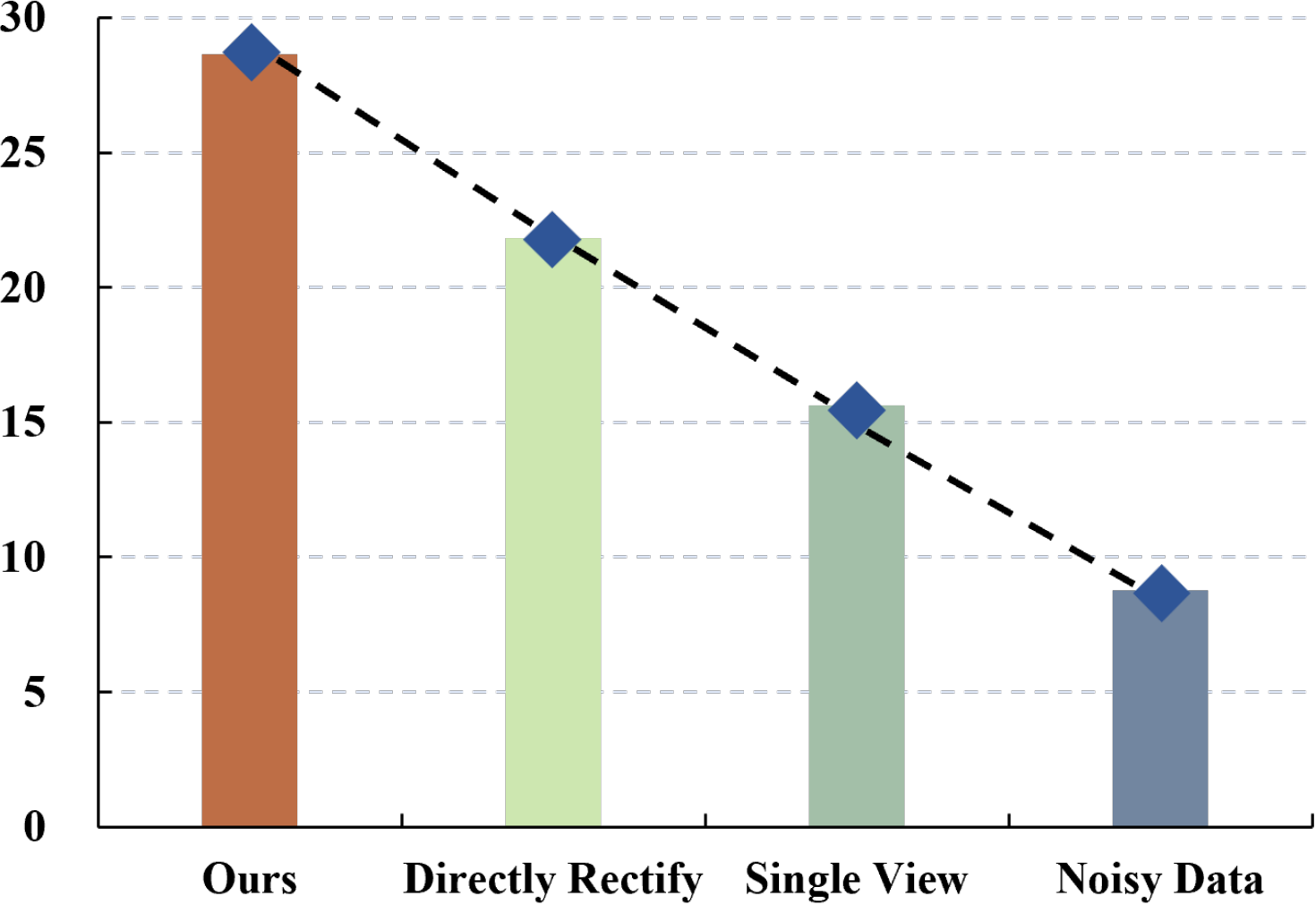}}
\vspace{3pt}
{\centerline{NMI}}
\end{minipage}
\caption{Ablation studies for our proposed noisy-identification and rectification strategy on BBCSport dataset.}
\label{eff_RS}
\vspace{-20pt}
\end{figure}

3) Compared with the classical deep multi-view clustering methods, e.g., TGM-MVC~\cite{TGM-MVC}, our AIRMVC could achieve better performance. These methods lack specifically designed strategies to mitigate the adverse effects of noise, which consequently leads to a decline in performance. 

4) Although noise deep multi-view clustering methods could obtain promising performance, e.g., MVCAN~\cite{MVCAN}, the impact of noise has not been fully eliminated in complex noisy scenario. The noisy identification and rectification strategies and noise-robust contrastive mechanism in AIRMVC could identify and rectify the noisy data. Thus, achieving better performance.

In summary, AIRMVC enables robust training under varying levels of noise. Moreover, the above observations and experimental results further validate the effectiveness and generalization capability of AIRMVC.

\begin{table}[]
\centering
\caption{Performance variation of multi-view clustering on WebKB dataset in noisy scenario with 10\% noisy ratio.}
\scalebox{0.8}{
\begin{tabular}{c|ccc|ccc}
\hline
\textbf{Methods} & \multicolumn{3}{c|}{\textbf{TGM-MVC}}      & \multicolumn{3}{c}{\textbf{SCE-MVC}}       \\ \hline
\textbf{Metrics} & \textbf{ACC} & \textbf{NMI} & \textbf{PUR} & \textbf{ACC} & \textbf{NMI} & \textbf{PUR} \\ \hline
\textbf{Clean}   & 81.45        & 15.70         & 81.45        & 83.43        & 20.64        & 83.43        \\
\textbf{Noise}   & 76.86$\downarrow$        & 13.81$\downarrow$        & 76.86$\downarrow$        & 76.78$\downarrow$        & 18.78$\downarrow$        & 69.65$\downarrow$        \\ \hline
\end{tabular}}
\label{decrese}
\vspace{-20pt}
\end{table}

\begin{figure*}[]
\centering
\small
\begin{minipage}{0.23\linewidth}
\centerline{\includegraphics[width=1\textwidth]{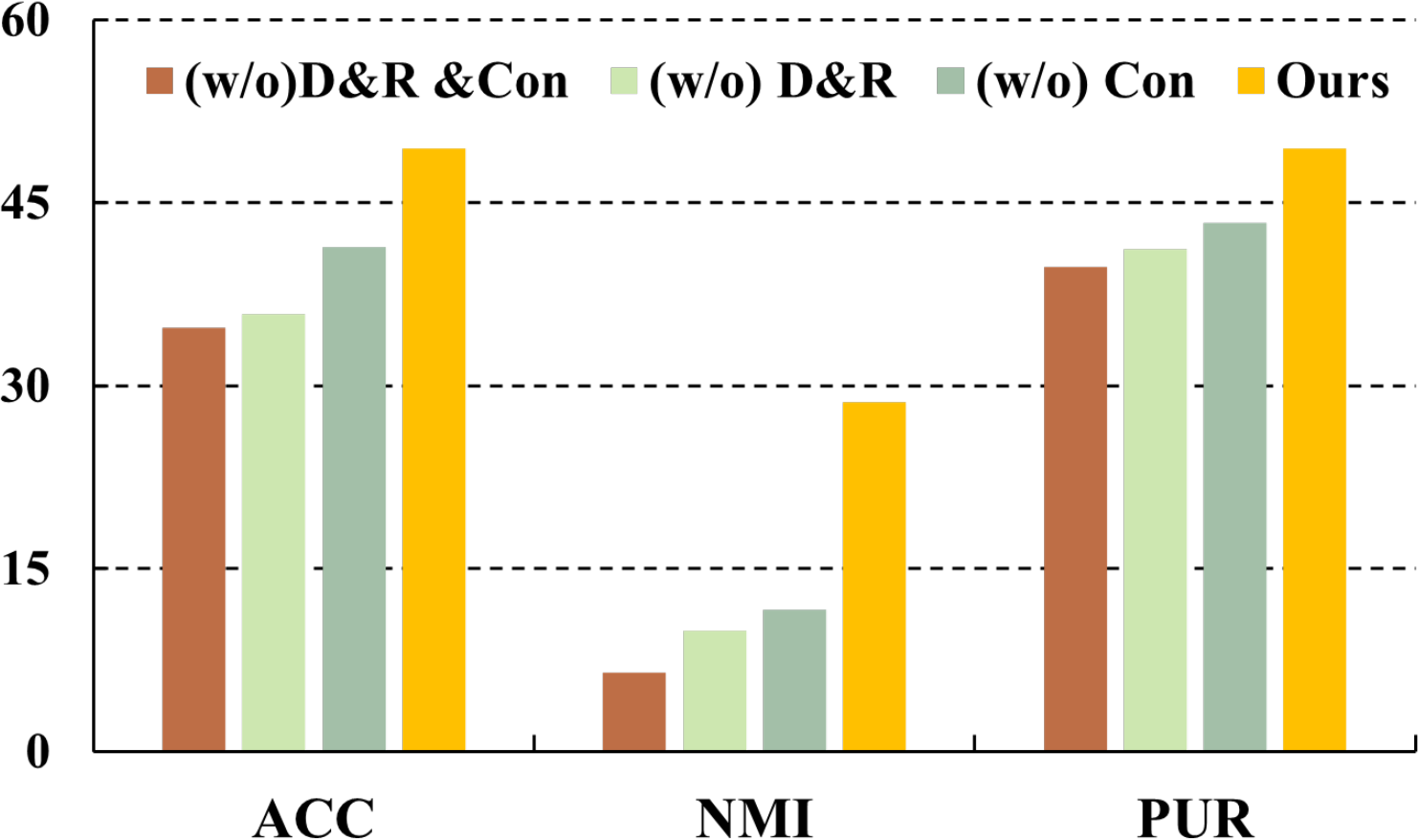}}
{\centerline{BBCSport}}
\end{minipage}
\begin{minipage}{0.23\linewidth}
\centerline{\includegraphics[width=1\textwidth]{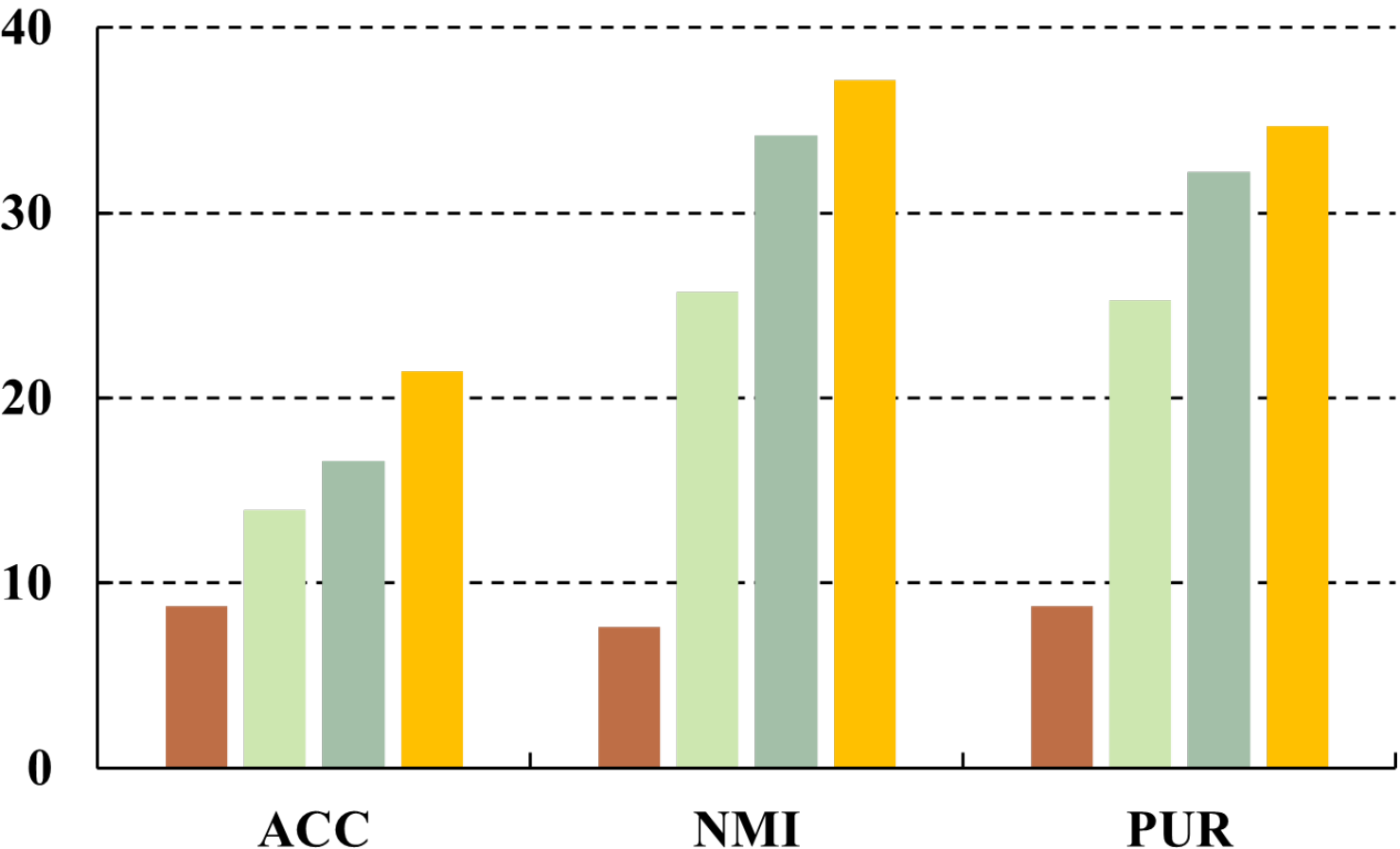}}
{\centerline{Caltech101}}
\end{minipage}
\begin{minipage}{0.23\linewidth}
\centerline{\includegraphics[width=\textwidth]{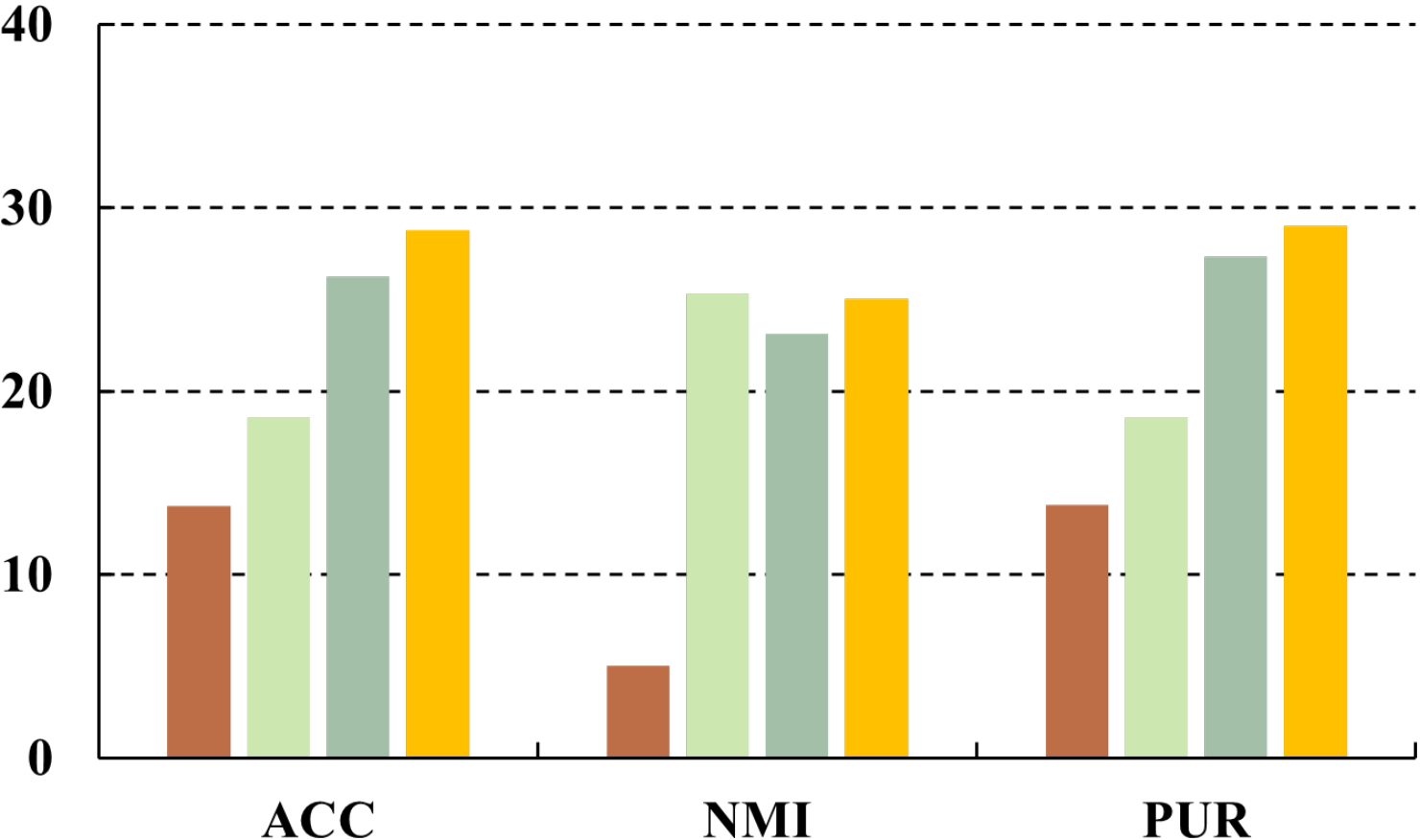}}
{\centerline{STL10}}
\end{minipage}
\begin{minipage}{0.23\linewidth}
\centerline{\includegraphics[width=\textwidth]{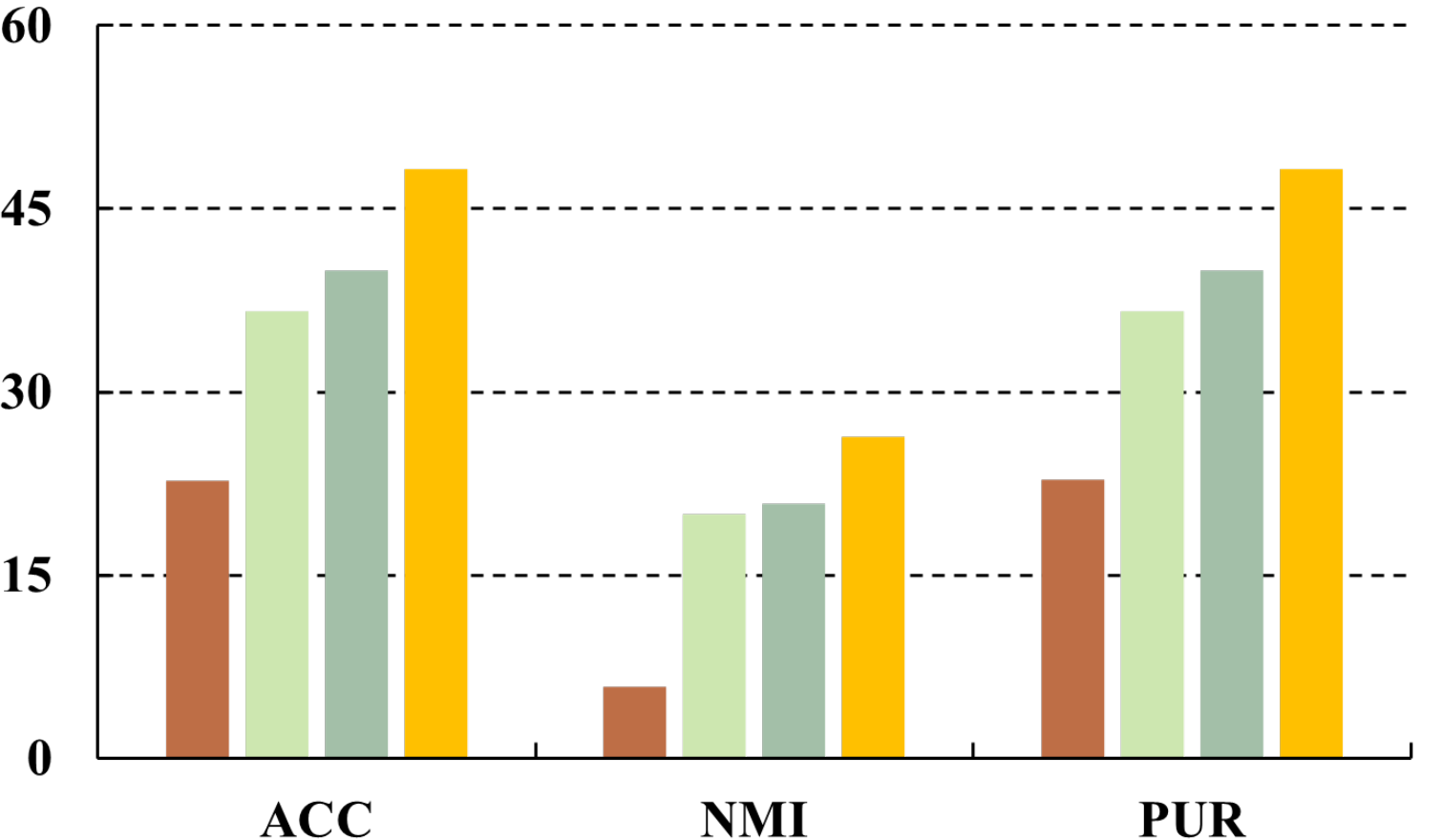}}
{\centerline{Reuters}}
\end{minipage}
\caption{Ablation studies on BBCSport, Caltech101, STL10, and Reuters datasets with 10\% noisy ratio.}
\label{ablation_study}
\end{figure*}

\begin{figure*}
\centering
\begin{minipage}{0.22\linewidth}
\centerline{\includegraphics[width=1\textwidth]{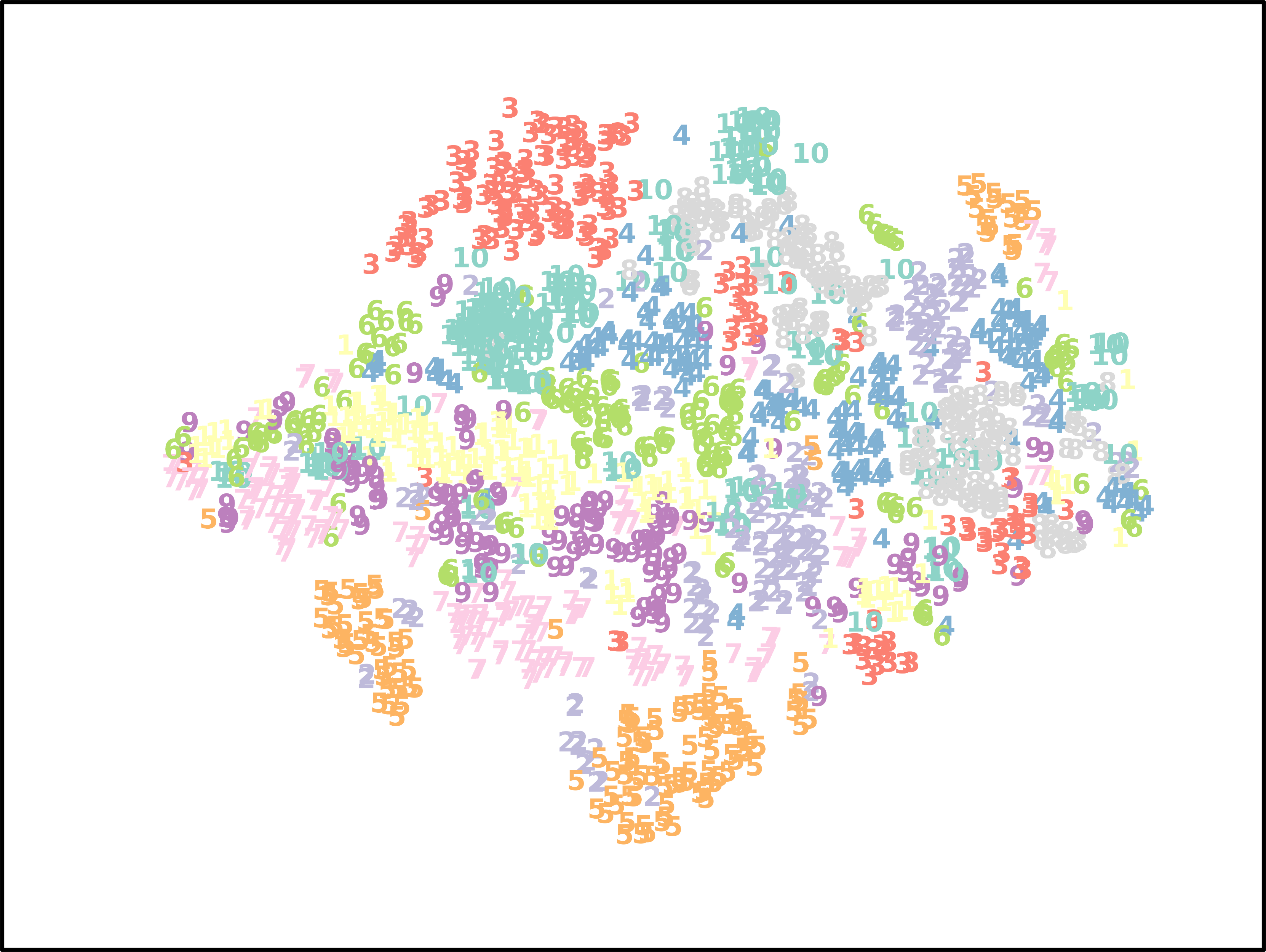}}
\vspace{5pt}
\centerline{{20 Epoch}}
\end{minipage}\hspace{5pt}
\begin{minipage}{0.22\linewidth}
\centerline{\includegraphics[width=1\textwidth]{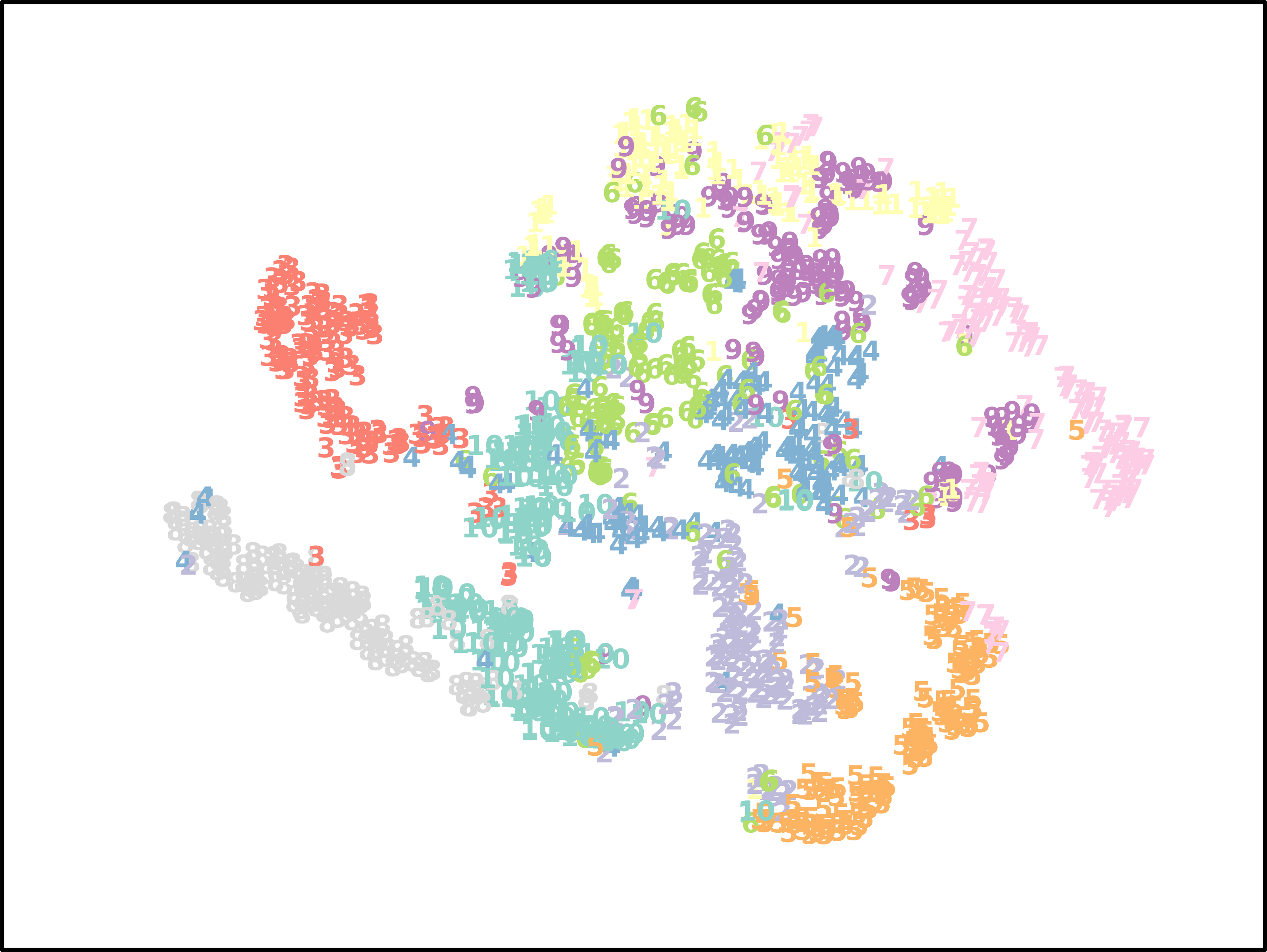}}
\vspace{5pt}
\centerline{{80 Epoch}}
\end{minipage}\hspace{5pt}
\begin{minipage}{0.22\linewidth}
\centerline{\includegraphics[width=1\textwidth]{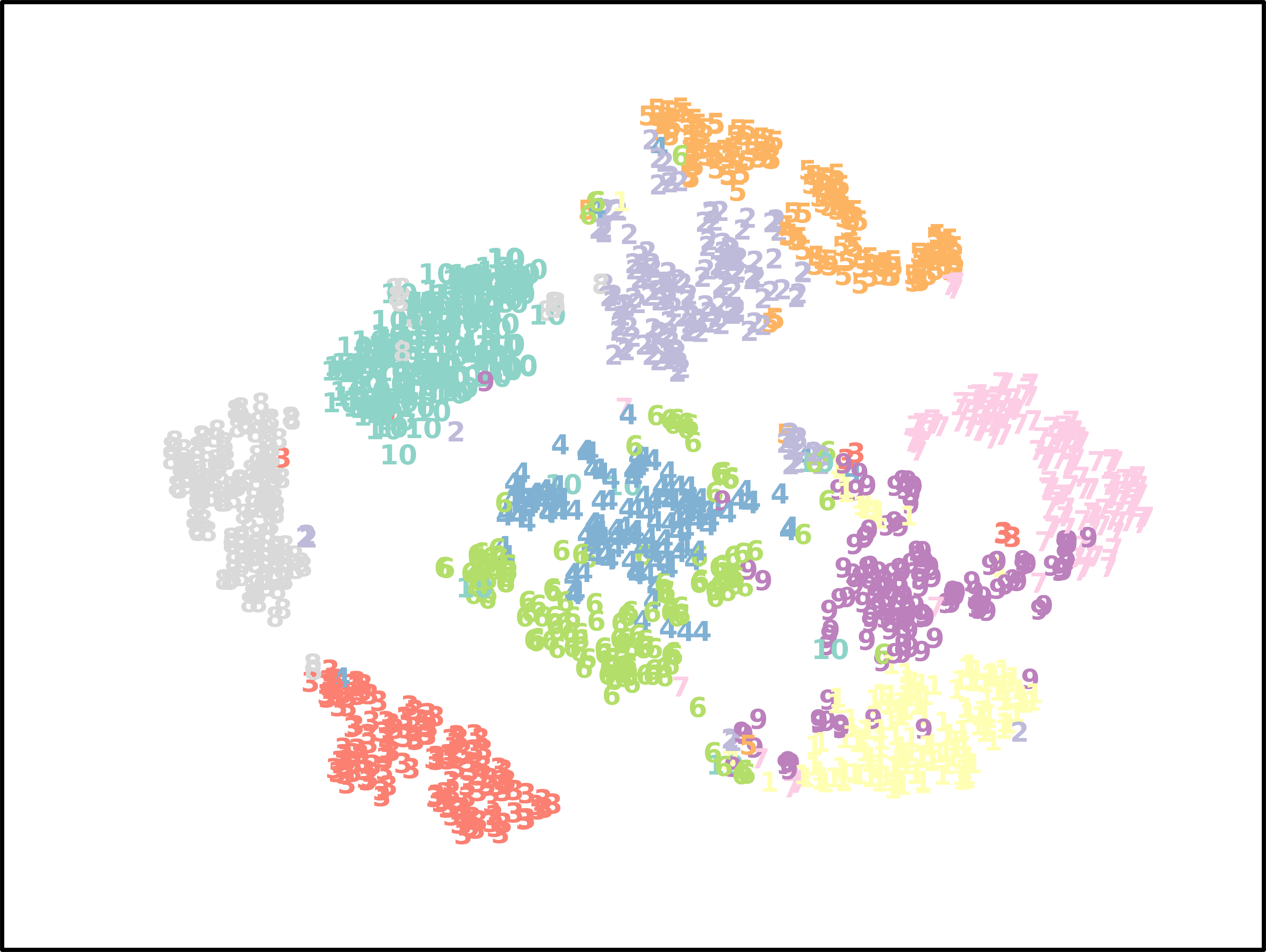}}
\vspace{5pt}
\centerline{{140 Epoch}}
\end{minipage}\hspace{5pt}
\begin{minipage}{0.22\linewidth}
\centerline{\includegraphics[width=1\textwidth]{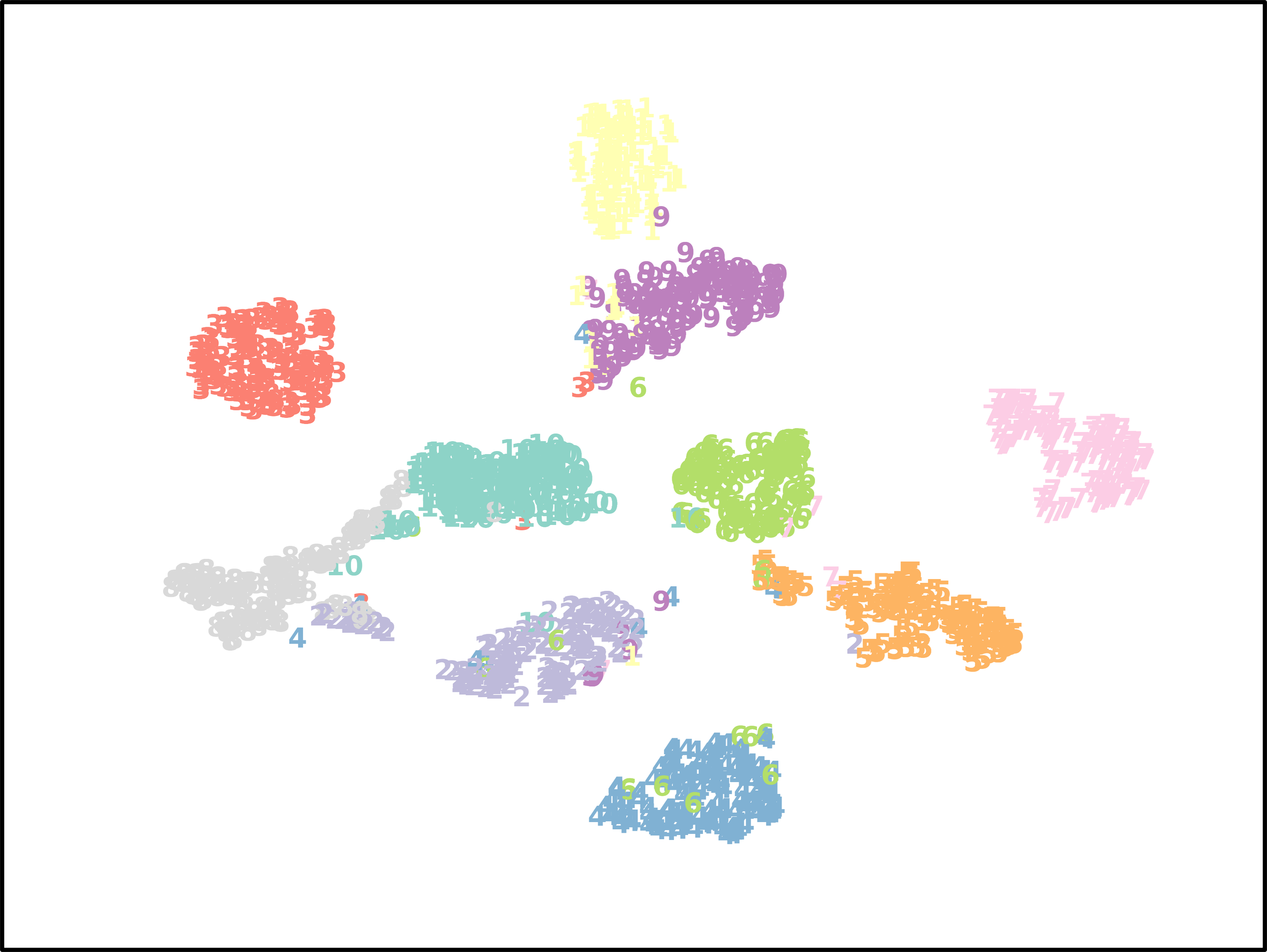}}
\vspace{5pt}
\centerline{{200 Epoch}}
\end{minipage}
\caption{Visualization of the representations during the training process on UCI-digit dataset.}
\label{t_sne}
\vspace{-10pt}
\end{figure*}

\subsection{Ablation Study~\textbf{(RQ2)}}\label{ab}
In this subsection, we conduct experiments to investigate the effectiveness of the components in AIRMVC. Due to the space limited, we present the experimental results with 30\% noise ratio in Section.~\ref{other_ab} of Appendix.

\textbf{Effectiveness of design modules:} Here, we implement ablation studies to verify the designed modules, including the noisy identification and rectification strategy and noise-robust contrastive mechanism. In this subsection, we denote ``(w/o) D\&R'', ``(w/o) Con'', and ``(w/o) D\&R\&Con'' represent the model to remove noisy identification and rectification strategy, noise-robust contrastive mechanism, and both modules combined, respectively. Besides, we leverage an autoencoder network as the backbone for ``(w/o) D\&R\&Con'' to derive representations for the downstream clustering task. We conduct experiments with three metrics on six datasets. The experimental results are demonstrated in Fig.~\ref{ablation_study}. From those results, we could conclude the following observations: 

1) When any component of our designed model is removed, the performance of the model degrades. This indicates that each part of our design contributes to the overall clustering performance. 

2) The removal of noisy identification and rectification strategy (``(w/o) D\&R'') results in a more substantial performance degradation compared to the removal of noise-robust contrastive mechanism (``(w/o) Con''), highlighting the relative importance of noisy identification and rectification strategy to the model’s performance.



\textbf{Effectiveness of rectification strategy:}

In AIRMVC, we rectify the noisy data with two steps. Specifically, we first employ two-component GMM to identify the noisy data. Then, we rectify the noisy data based on identification result. To illustrate the adverse effects of noise, we present experimental evidence in Tab.~\ref{decrese} using the WebKB dataset, demonstrating a significant performance degradation in noisy scenarios. This underscores the necessity of designing a strategy specifically targeted at noise identification and rectification.To further validate the effectiveness of our noise identification and rectification strategy, we conduct experiments on the BBCSport dataset with a 10\% noise ratio. For a comprehensive evaluation, we define four different experimental setups. For simplicity, we use ``Noisy Data'' to represent the direct fusion of noisy multi-view features, ``Single View'' to represent the results from a single noisy view, ``Directly Rectify'' to represent directly using cross-entropy with the first view to rectify, and ``Ours'' to represent our proposed strategy. The experimental results are presented in Fig.~\ref{eff_RS}. From these results, we can derive the following observations:

1) Directly fusing noisy multi-view data results in the worst clustering performance. We attribute this phenomenon to the lack of any noisy rectification mechanism, allowing the noisy views to mutually amplify their adverse effects. Furthermore, the fusion process exacerbates the impact of noise. Without rectification, the clustering performance of multi-view data is even worse than that of a single view. 

2) Compared to the directly rectification with the first view, our proposed AIRMVC could achieve better performance. This phenomenon can be attributed to the tendency of directly using the correction results from the first view to homogenize features across different views. In contrast, our noisy identification and rectification strategy effectively filters out noisy data from each view while retaining complementary information that is beneficial for downstream tasks.

\subsection{Visualization Analysis~\textbf{(RQ3)}}

To visualize the latent representation space and assess the effectiveness of AIRMVC, we employ $t$-SNE~\cite{TSNE} as the visualization tool in our experiments. Specifically, we extract representations from the UCI-digit dataset. The results, as shown in Fig.~\ref{t_sne}, reveal the following observations. As training progresses, the cluster structures within the UCI-digit dataset become increasingly distinct. By the 200th epoch, well-defined and distinguishable cluster structures have emerged. This observation demonstrates the capability of AIRMVC to effectively uncover the latent cluster structures within feature representations.


\begin{figure*}[h]
\centering
\begin{minipage}{0.3\linewidth}
\centerline{\includegraphics[width=1\textwidth]{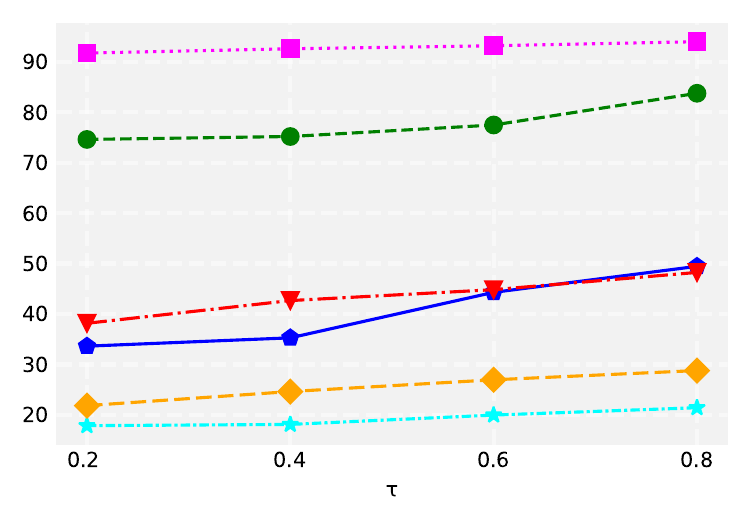}}
\centerline{{ACC}}
\end{minipage}\hspace{5pt}
\begin{minipage}{0.3\linewidth}
\centerline{\includegraphics[width=1\textwidth]{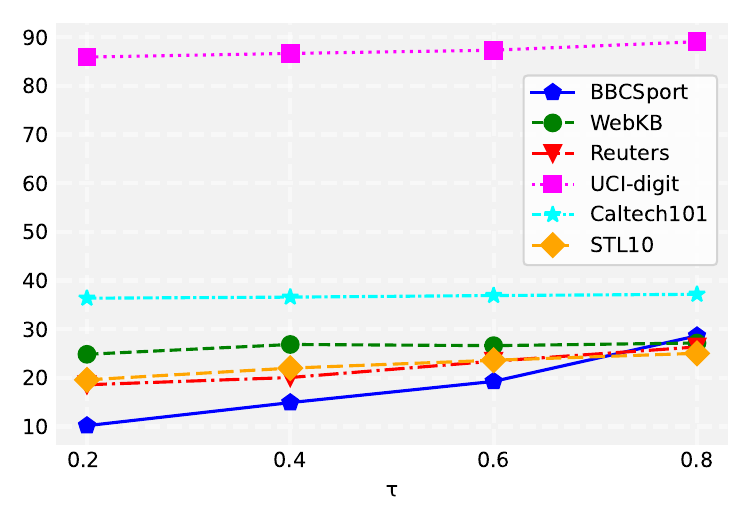}}
\centerline{{NMI}}
\end{minipage}\hspace{5pt}
\begin{minipage}{0.3\linewidth}
\centerline{\includegraphics[width=1\textwidth]{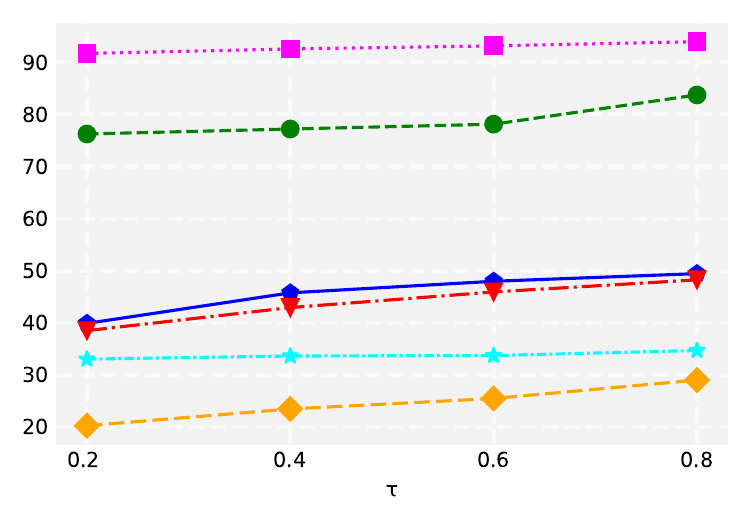}}
\centerline{{PUR}}
\end{minipage}\hspace{5pt}
\caption{Sensitive analysis on hyper-parameter $\tau$ on six datasets with 10\% noisy ratio.}
\label{sen_tau}
\vspace{-10pt}
\end{figure*}

\begin{figure*}[]
\centering
\small
\begin{minipage}{0.23\linewidth}
\centerline{\includegraphics[width=1\textwidth]{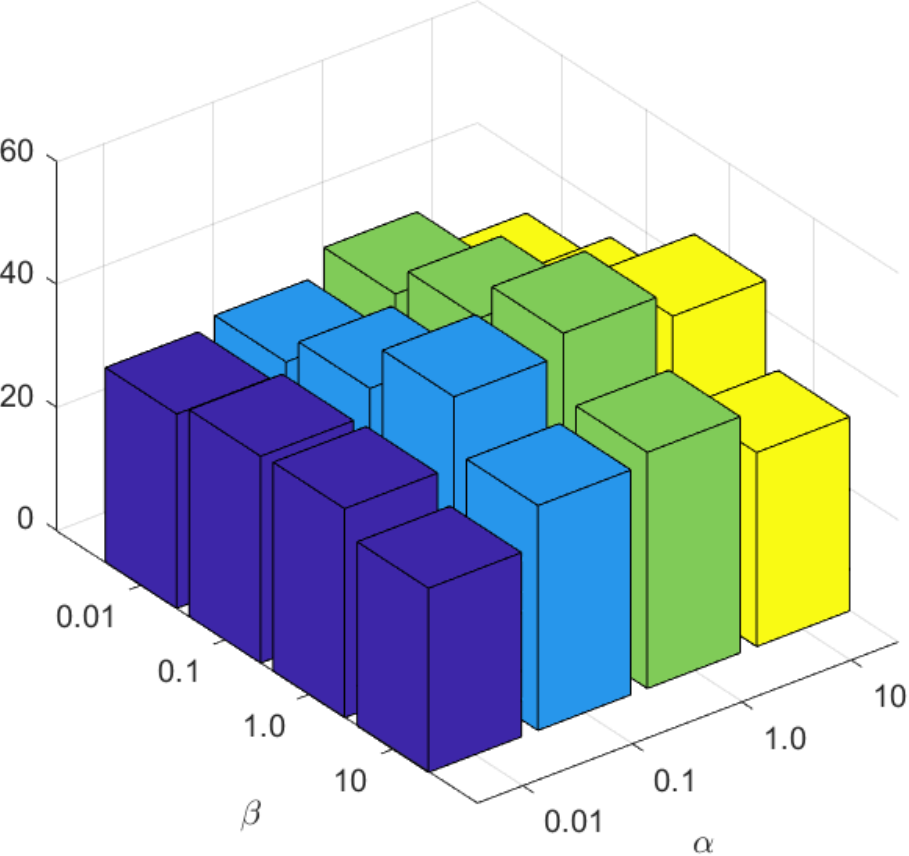}}
{\centerline{BBCSport-ACC}}
\centerline{\includegraphics[width=1\textwidth]{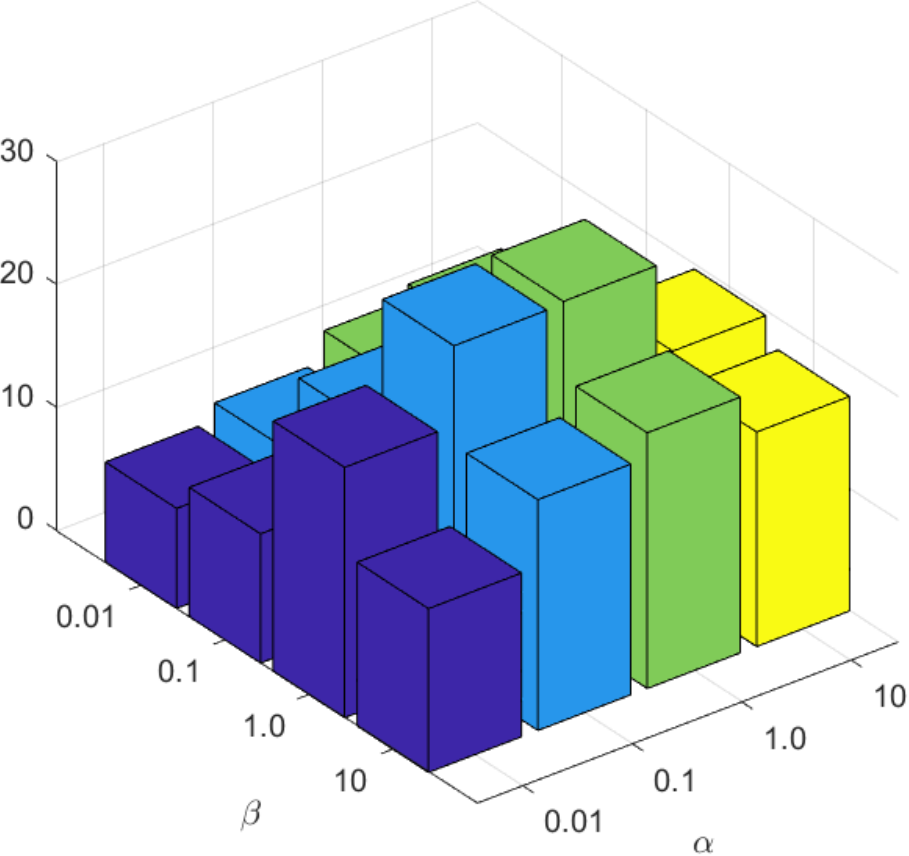}}
{\centerline{BBCSport-NMI}}
\end{minipage}
\begin{minipage}{0.23\linewidth}
\centerline{\includegraphics[width=1\textwidth]{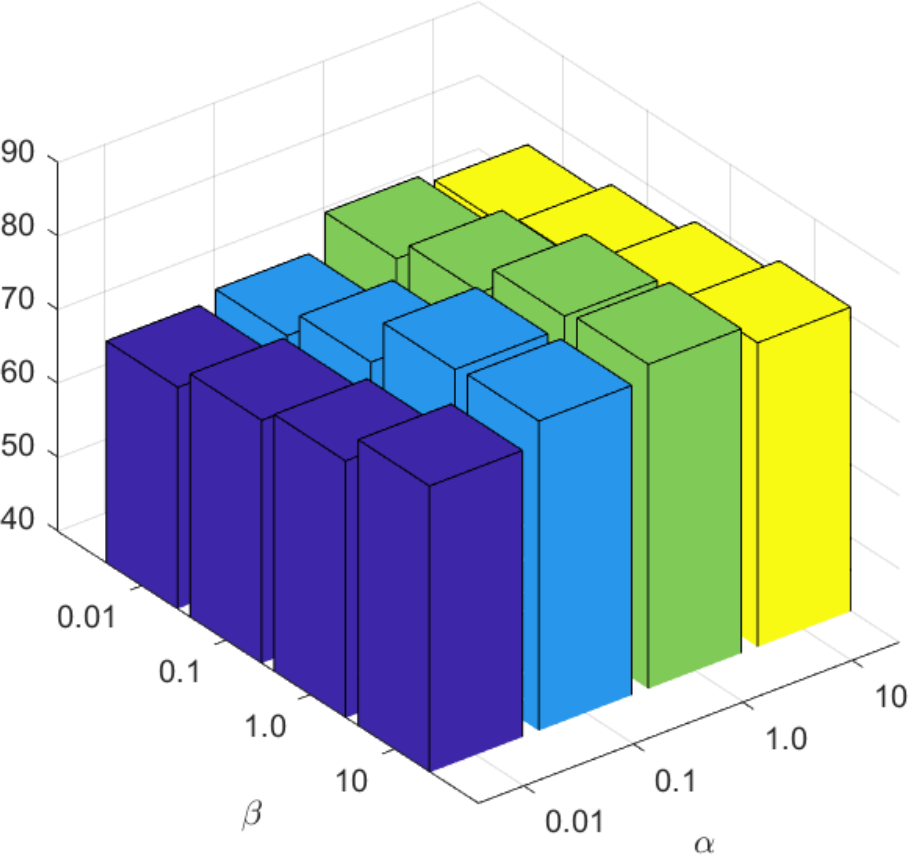}}
{\centerline{WebKB-ACC}}
\centerline{\includegraphics[width=1\textwidth]{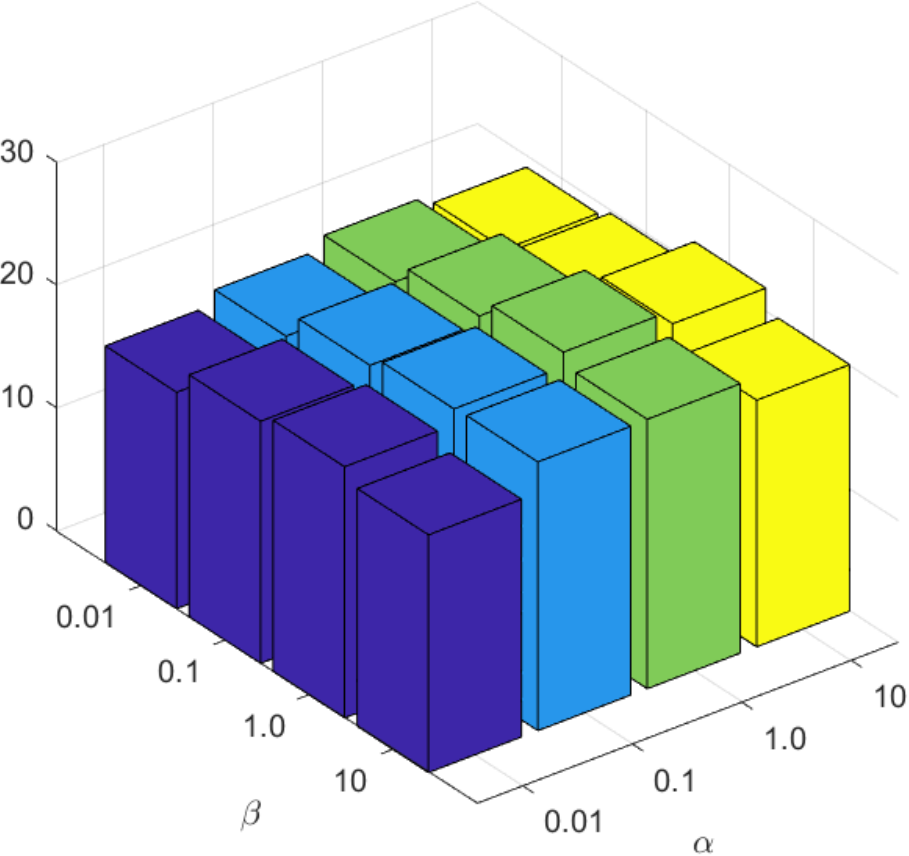}}
{\centerline{WebKB-NMI}}
\end{minipage}
\begin{minipage}{0.23\linewidth}
\centerline{\includegraphics[width=\textwidth]{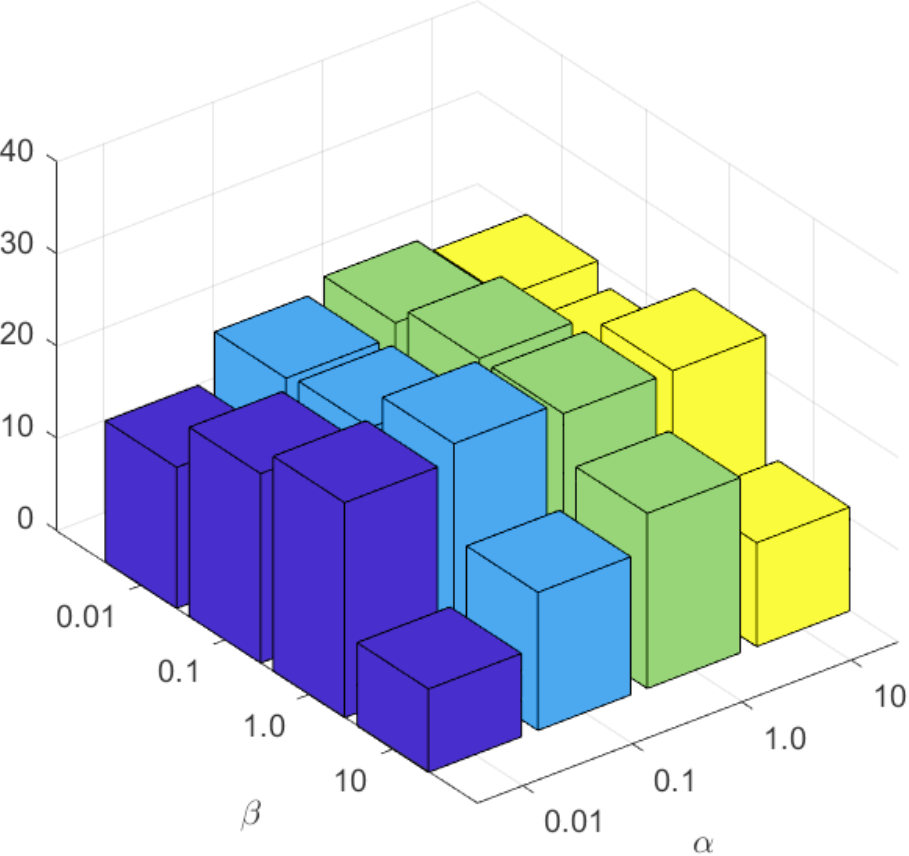}}
{\centerline{Reuters-NMI}}
\centerline{\includegraphics[width=\textwidth]{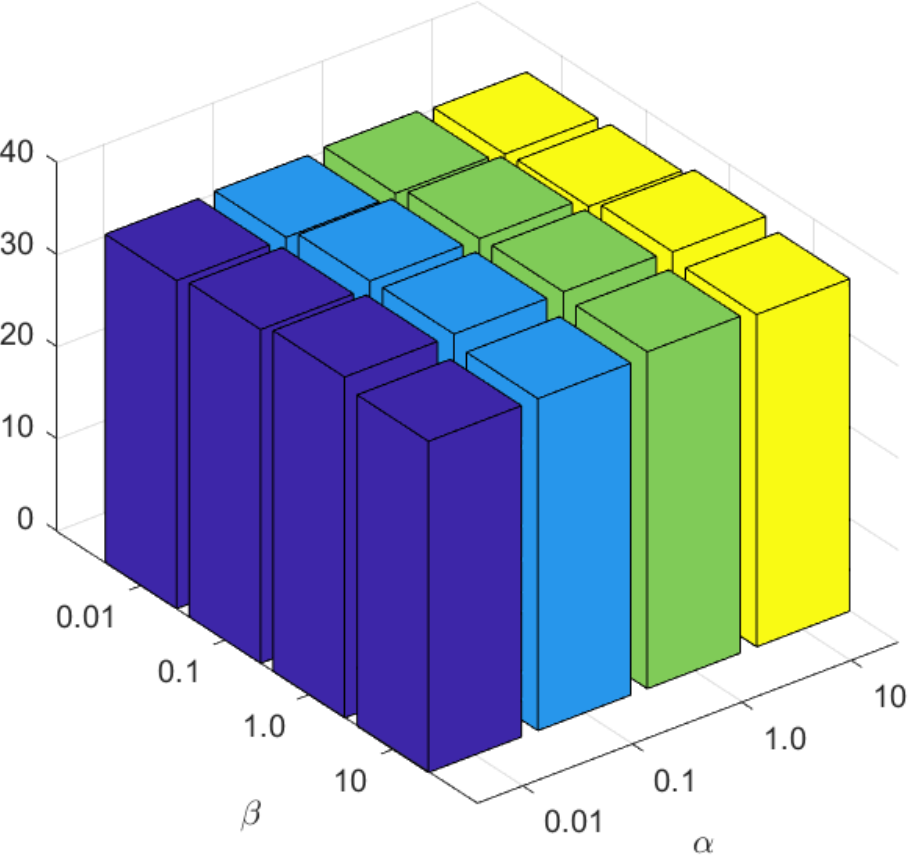}}
{\centerline{Caltech101-NMI}}
\end{minipage}
\begin{minipage}{0.23\linewidth}
\centerline{\includegraphics[width=\textwidth]{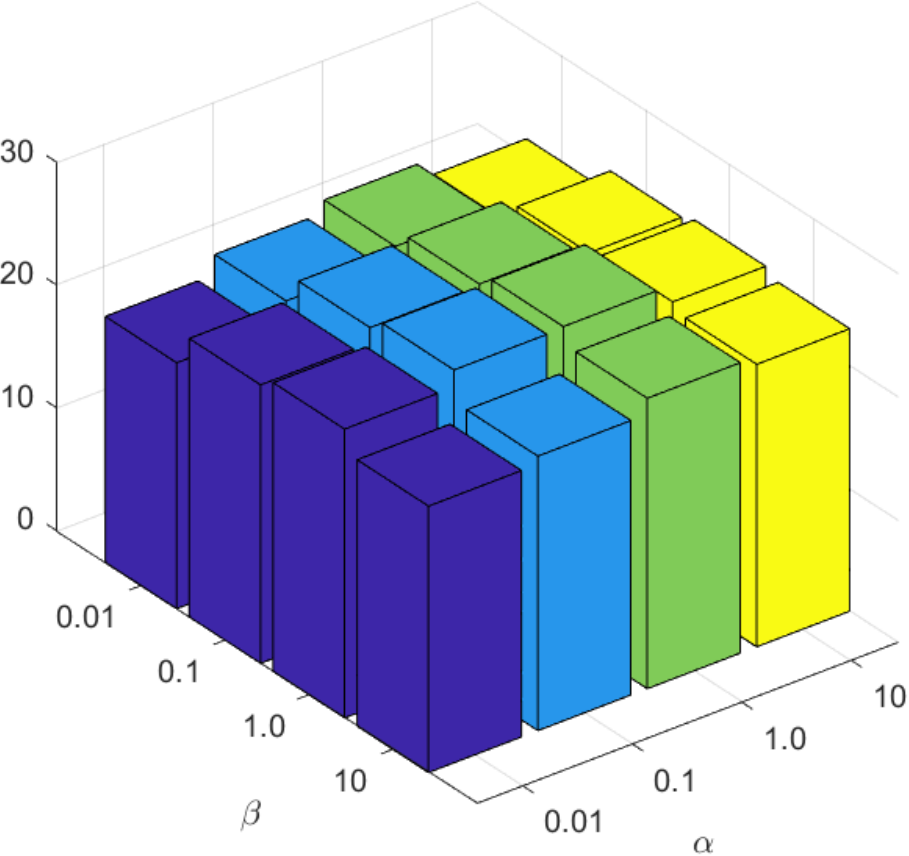}}
{\centerline{STL10-PUR}}
\centerline{\includegraphics[width=\textwidth]{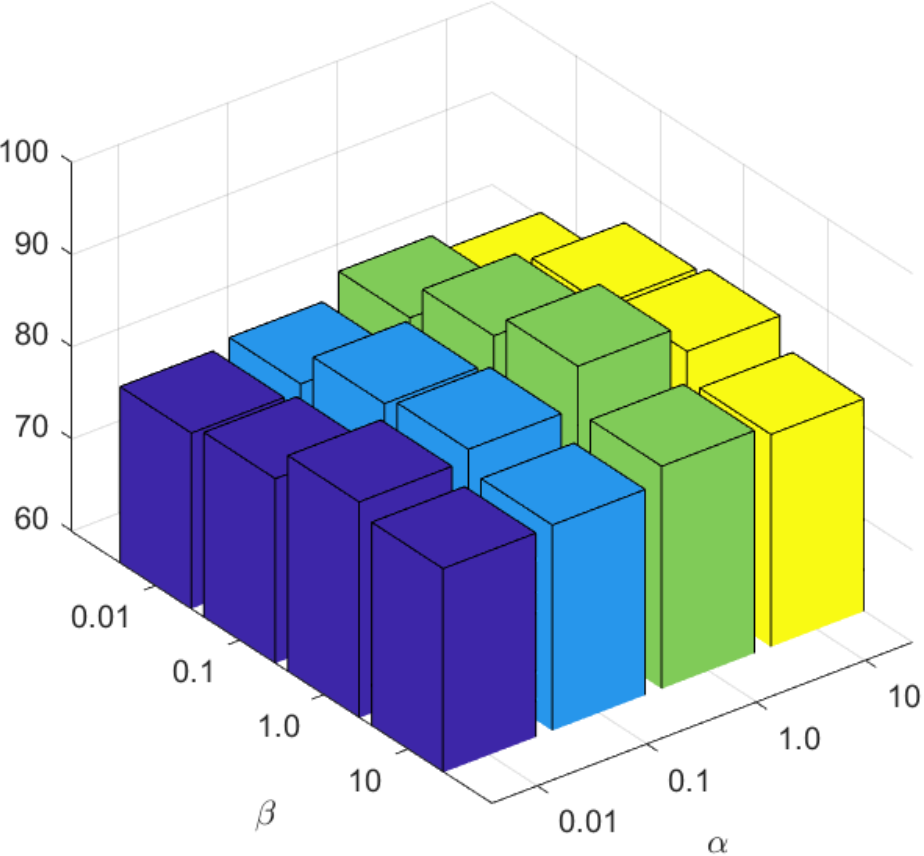}}
{\centerline{UCI-digit-NMI}}
\end{minipage}
\caption{Sensitivity Analysis for $\alpha$ and $\beta$ with ACC, NMI, and PUR on BBCSport, WebKB, Reuters, UCI-digits, Caltech101 and STL10 datasets with 10\% noisy ratio.}
\label{sen_a_b_other}
\vspace{-13pt}
\end{figure*}

\subsection{Hyper-parameter Analysis~\textbf{(RQ4)}}\label{rq4}
In this subsection, we conduct experiments to analysis the sensitivity of the hyper-parameters in this paper. 

\textbf{Sensitive analysis of trade-off hyper-parameter $\alpha$ and $\beta$:}

To further examine the impact of the parameters $\alpha$ and $\beta$ on our model, we performed sensitive experiments on six datasets with 10\% noise ratio. Specifically, we analyze parameter values within the range of $\{0.01, 0.1, 1.0, 10\}$. According to the results presented in Fig.~\ref{sen_a_b_other}. We could find the following observations. 1) When $\alpha$ and $\beta$ approach extreme values, the model's clustering performance deteriorates significantly. This decline can be attributed to the disruption of the balance in the loss function. We observe that the model achieves optimal clustering performance when the values of $\alpha$ and $\beta$ are set to 1.0. Thus, we set the parameters to 1.0. 2) Compared to changes in $\beta$, variations in $\alpha$ result in more pronounced changes in the model's performance. This indicates that the noisy identification and rectification module contributes more significantly to enhancing performance.




\textbf{Sensitive analysis of threshold $\tau$:}

We conduct experiments to evaluate the influence of the threshold parameter $\tau$. We varied the value of $\tau$ within the range of $\{0.2, 0.4, 0.6, 0.8\}$. The results are demonstrated in Fig.~\ref{sen_tau}. We have the following observation. As the threshold increases, the clustering performance of the model improves progressively. We attribute this to the fact that higher thresholds reduce the number of incorrect contrastive learning sample pairs.

\section{Conclusion}

In this work, we present a novel deep contrastive multi-view clustering network to automatically identify and rectify the noisy data (AIRMVC). In AIRMVC, we first consider the noisy identification as an anomaly identification problem. Then, based on the identification results, we introduce a hybrid rectification strategy to alleviate the adverse impact of noisy data. After that, we design a noise-robust contrastive mechanism to obtain more discriminative representations. Moreover, we theoretical proof that the learned representations could discard the noisy information. Comprehensive experiments conducted on six benchmark datasets validate the effectiveness and robustness of AIRMVC.

\newpage

\section*{Acknowledgments}
This work was supported by the National Science and Technology Innovation 2030 Major Project under Grant No. 2022ZD0209103, the National Natural Science Foundation of China (project No. 62325604, 62441618, 62476281, U24A20323 and 62276271), and the Program of China Scholarship Council (No. 202406110009). 

\bibliography{main}
\bibliographystyle{icml2025}

\newpage
\appendix
\onecolumn
\section{Appendix}

\subsection{Notations \& Related Work}
\subsubsection{Notations}

In this subsection, we provide the notations in Tab.~\ref{notation_table}.

\begin{table}[h]
\centering
\caption{Notation Summary in AIRMVC.}
\begin{tabular}{c|c}
\hline
\textbf{Notations} & \textbf{Meaning}                               \\ \hline
$V$                  & The number of view                             \\
$N$                  & The number of samples in each view             \\
$K$                  & The number of class                            \\
$x^v$                  & The $v$-th input data from multi-view            \\
\textbf{E}                  & The extracted representations                  \\
$y$                  & The soft prediction of class probability       \\
$y'$                & The noisy soft prediction of class probability \\
$\mathcal{F}^v$                  & The encoder network                            \\
$\mathcal{G}^v$                   & The decoder network                            \\
$\Theta^v$                 & The parameters of encoder network              \\
$\Phi^v$                  & The parameters of decoder network              \\
$m$                  & The mixed soft prediction of class probability  \\
$h(\cdot)$                  & The classifier head                            \\
$\varphi$                  & The clean probability                          \\
$s(;)$                  & Similarity function                            \\
$\alpha, \beta$                  & The trade-off hyper-parameters                 \\
$\mathcal{L}_{rs}$                  & The rectification loss                         \\
$\mathcal{L}_{con}$                   & The contrastive loss                           \\
$\mathcal{L}_{rec}$                   & The reconstruction loss                        \\ \hline
\end{tabular}
\label{notation_table}
\end{table}

\subsubsection{Related Work}

\textbf{{Contrastive Learning:}}
Contrastive learning~\cite{xihong}, recognized for its ability to extract robust intrinsic supervisory signals, has attracted considerable attention in various fields, e.g., recommender system~\cite{Darec,TTTRec,zhao1, zhao2, dang1, dang2,yin1, yin2,yang1}, graph learning~\cite{DCRN,SCGC,CCGC,MGCN,CONVERT,yu1,yu2, mo1, mo2, huang1,huang2}. The core concept of contrastive learning focuses on maximizing the similarity between positive samples while minimizing the similarity between negative samples within the latent space. Noise Contrastive Estimation (NCE) \cite{nce1,nce2} was introduced, followed by InfoNCE~\cite{infonce}, which aimed at distinguishing different views of a sample in the presence of others. Subsequently, methods like MoCo~\cite{MOCO} and SimCLR~\cite{SIMCLR} have advanced the learning of image-specific features by bringing positive pairs closer and pushing negative pairs further apart. In the realm of multi-view clustering, several contrastive learning approaches have been proposed~\cite{CMC,MVGRL,MFLVC}. For instance, CMC~\cite{CMC} introduced a contrastive multi-view coding framework to extract meaningful semantic information. MVGRL~\cite{MVGRL} utilized a graph diffusion matrix to generate augmented graphs, which were then leveraged to establish a multi-view contrastive mechanism for downstream tasks. 

More recently, MFLVC \cite{MFLVC} proposed two objectives for multi-view clustering using high-level features and pseudo-labels within a contrastive learning framework. DealMVC \cite{DealMVC} introduced a dual-calibration mechanism specifically designed for deep multi-view clustering, with a calibration contrastive loss that effectively differentiates similar yet distinct samples across multiple views. To further enhance the quality of contrastive learning sample pairs, DIVIDE \cite{DIVIDE} adopted a random walk approach to iteratively identify reliable data pairs, thus improving the stability and robustness of contrastive learning in multi-view clustering. CANDY \cite{candy} exploits inter-view similarities as contextual cues to uncover false negatives, while also integrating a spectral-based denoising module to refine correspondence in multi-view settings. Contrastive learning, as a powerful method for improving feature quality, has proven highly effective in the MVC field. In this work, we propose a noise-robust contrastive learning mechanism to enhance the model's performance and robustness in noisy environments.

\textbf{{Multi-view Learning:}}
Recently, Multi-view Clustering (MVC)~\cite{wan1, wan2, dong1,dong2,ysj1, ysj2,ysj3,li1,li2,zheng1} has attracted substantial research interest. Existing MVC approaches can generally be divided into two categories based on cross-view correspondence: MVC with fully aligned data and MVC with partially aligned data. Fully aligned data assumes predefined mapping relationships between every pair of cross-view samples. Several methods have been developed under this assumption, which can be broadly grouped into five main categories: (1) Non-negative matrix factorization-based methods \cite{wenjie_mf}, which aim to discover a shared latent factor to integrate information from multiple views; (2) Kernel learning-based methods \cite{one}, where a predefined set of base kernels is assigned to each view, and the kernel weights are optimally fused to improve clustering performance; (3) Subspace-based methods \cite{suyuan_AAAI}, which assume that all views share a common low-dimensional latent space and derive clustering results by learning this shared representation; (4) Graph-based methods \cite{graph_mvc}, which construct a unified graph from multi-view data and utilize spectral decomposition to obtain clustering outcomes; and (5) Deep learning-based methods \cite{DealMVC}, which leverage the powerful representation capabilities of deep neural networks to capture more complex and robust features for clustering.  

While these methods have shown strong clustering performance, they typically assume that the features of all views are noise-free. In real-world scenarios, however, noise is often present and can significantly degrade their effectiveness. To address the challenge of noisy data, MVCAN \cite{MVCAN} adopts an unshared network structure and introduces a two-level optimization strategy to enhance robustness in noisy settings. Similarly, RMCNC \cite{sun2024robust} incorporates a noise-tolerant contrastive loss to counteract the effects of noise in multi-view data. Despite these advancements, existing methods still lack dedicated strategies for explicitly detecting and correcting noise.  

To bridge this gap, this paper introduces a novel multi-view clustering framework designed to automatically identify and rectify noise. This framework aims to enhance clustering performance and robustness in the presence of real-world noisy data.

\subsection{Proof of Theorem.~\ref{theorem_1}}\label{the_proof}
In this subsection, we provide the detailed proof of Theorem.~\ref{theorem_1}. 
\begin{theorem}
The representations ${\textbf{E}^*}$ retain clean information and discard noisy information, which can be presented as:
\begin{equation}
\begin{aligned}
I(x;y)- \vartheta &\le I(\textbf{E}^*;y) \le I(x;y),\\
I(\textbf{E}^*;y') &\le I(x;y')-\eta + \vartheta.
\end{aligned}
\nonumber
\end{equation} 
\end{theorem}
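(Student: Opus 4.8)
The plan is to exploit two structural facts about $\textbf{E}^*$ and then reduce everything to the chain rule for mutual information. First, since $\textbf{E}^* = \mathcal{F}(x;\Theta)$ is a deterministic function of $x$, the variables form the Markov chains $y - x - \textbf{E}^*$ and $y' - x - \textbf{E}^*$, so $I(\textbf{E}^*;y\mid x)=0$. Combining this with the two expansions of $I(x,\textbf{E}^*;y)=I(x;y)+I(\textbf{E}^*;y\mid x)=I(\textbf{E}^*;y)+I(x;y\mid\textbf{E}^*)$ yields the master identity $I(x;y)=I(\textbf{E}^*;y)+I(x;y\mid\textbf{E}^*)$, and the analogous one for $y'$. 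Second, because $\textbf{E}^*=\arg\max_{\textbf{E}}I(\textbf{E};x^+)$ ranges over functions of $x$, its optimal value equals $I(x;x^+)$, so $\textbf{E}^*$ is a sufficient statistic of $x$ for $x^+$; equivalently $I(x;x^+\mid\textbf{E}^*)=0$. These two facts are the only ingredients I would need for the clean half of the theorem.

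For the first inequality, the upper bound $I(\textbf{E}^*;y)\le I(x;y)$ is immediate from the master identity and the nonnegativity of $I(x;y\mid\textbf{E}^*)$. For the lower bound it suffices to show $I(x;y\mid\textbf{E}^*)\le\vartheta$. I would bound it by introducing $x^+$ into the conditioning: $I(x;y\mid\textbf{E}^*)\le I(x;y,x^+\mid\textbf{E}^*)=I(x;x^+\mid\textbf{E}^*)+I(x;y\mid x^+,\textbf{E}^*)=I(x;y\mid x^+,\textbf{E}^*)$, where the first step adds a variable to the second argument, the second is the chain rule, and the third uses sufficiency. Finally, since $\textbf{E}^*$ is a function of $x$, extra conditioning on $\textbf{E}^*$ cannot increase $I(x;y\mid x^+)$ (the difference equals $-I(y;\textbf{E}^*\mid x^+)\le 0$), giving $I(x;y\mid x^+,\textbf{E}^*)\le I(x;y\mid x^+)\le\vartheta$ by assumption. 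Substituting into the master identity produces $I(\textbf{E}^*;y)\ge I(x;y)-\vartheta$.

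For the second inequality I would again start from the master identity $I(\textbf{E}^*;y')=I(x;y')-I(x;y'\mid\textbf{E}^*)$, so the goal becomes a \emph{lower} bound $I(x;y'\mid\textbf{E}^*)\ge\eta-\vartheta$. Here I would decompose through $x^+$ as before, writing $I(x;y'\mid x^+,\textbf{E}^*)=I(x;y'\mid x^+)-I(\textbf{E}^*;y'\mid x^+)>\eta-I(\textbf{E}^*;y'\mid x^+)$ using the assumption $I(x;y'\mid x^+)>\eta$, and then relate $I(x;y'\mid\textbf{E}^*)$ to $I(x;y'\mid x^+,\textbf{E}^*)$ through the interaction term $I(x;x^+\mid y',\textbf{E}^*)$ created by conditioning on $y'$. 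The residual quantities $I(\textbf{E}^*;y'\mid x^+)$ and $I(x;x^+\mid y',\textbf{E}^*)$ must be absorbed into the slack $\vartheta$; intuitively they are small because $\textbf{E}^*$ is maximally aligned with $x^+$ and therefore carries little information about the $x$-specific noise $y'$ beyond what $x^+$ already explains.

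This last step is exactly where I expect the main difficulty. Unlike the first inequality, which only needs an \emph{upper} bound and so benefits from the ``adding a variable can only increase MI'' and ``conditioning on a function of $x$ can only decrease MI'' inequalities (both pointing the convenient way), the noise bound requires a \emph{lower} bound on a conditional mutual information, where the explaining-away interaction term carries the unfavorable sign. I would therefore either (i) invoke the natural modeling assumption that the noise is attached to $x$ rather than shared, i.e.\ $I(x^+;y'\mid x)=0$, which turns $I(x;y')=I(x^+;y')+I(x;y'\mid x^+)$ into $I(x^+;y')\le I(x;y')-\eta$ and combine it with $I(\textbf{E}^*;y')\le I(x^+;y')+I(\textbf{E}^*;y'\mid x^+)$, or (ii) add the companion hypothesis $I(\textbf{E}^*;y'\mid x^+)\le\vartheta$ that the aligned representation leaks at most $\vartheta$ of noise information beyond $x^+$. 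Either route closes the bound to $I(\textbf{E}^*;y')\le I(x;y')-\eta+\vartheta$, and I would present whichever matches the implicit independence structure assumed among $y'$, $x$, and $x^+$.
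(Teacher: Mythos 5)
Your proof of the first inequality is correct and is, modulo bookkeeping, the paper's own argument: the paper reaches the same identity $I(\textbf{E}^*;y)=I(x;y)-I(x;y|x^+)+I(\textbf{E}^*;y|x^+)$ by equating the interaction informations $I(\textbf{E}^*;x^+;y)=I(x;x^+;y)$ (via $I(\textbf{E}^*;x^+)=I(x;x^+)$ and $I(\textbf{E}^*;x^+|y)=I(x;x^+|y)$), whereas you reach the equivalent conclusion through the chain rule and the sufficiency statement $I(x;x^+|\textbf{E}^*)=0$; both hinge on exactly the same two facts (determinism of $\textbf{E}^*$ as a function of $x$, and optimality of $\textbf{E}^*$ for $I(\cdot;x^+)$), and your version is arguably cleaner since it avoids the unsigned interaction-information notation. (Incidentally, the paper's line ``$I(x;y)\le I(\textbf{E};y)$'' has the data-processing inequality written backwards; the direction it actually uses later is the correct one, matching yours.) For the second inequality you correctly diagnosed that the argument cannot close without an extra structural hypothesis, and your candidate (ii) --- postulating $I(\textbf{E}^*;y'|x^+)\le\vartheta$ --- is precisely the intermediate bound the paper needs. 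The paper obtains it not by fiat but from the Markov chain $y'\leftarrow y\leftrightarrow x\to\textbf{E}$, i.e.\ the modeling assumption that the noisy prediction $y'$ is a stochastic degradation of the clean prediction $y$: two applications of the (conditional) data-processing inequality give $I(\textbf{E}^*;y'|x^+)\le I(\textbf{E}^*;y|x^+)\le I(x;y|x^+)\le\vartheta$, after which the decomposition $I(\textbf{E}^*;y')=I(x;y')-I(x;y'|x^+)+I(\textbf{E}^*;y'|x^+)$ together with $I(x;y'|x^+)>\eta$ yields the claim. So the only ingredient you were missing is the name of the assumed independence structure ($y'$ depends on everything else only through $y$); your instinct that the slack $\vartheta$ must absorb $I(\textbf{E}^*;y'|x^+)$ was exactly right, and your route (i) via $I(x^+;y'|x)=0$ is a legitimate alternative that the paper does not use.
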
 

\begin{proof}
To proof Theorem.~\ref{theorem_1}, we first present the following definitions. 

\begin{definition} [Mutual Information] The mutual information for the representations and input data in multi-view scenario could be expressed as:
\begin{equation}
    I(\textbf{E}; x^+) = \int \int p(\textbf{E}, x^+) \text{log}\frac{p(\textbf{E}|x^+)}{p(\textbf{E})}dx^+d\textbf{E}
\end{equation}
where $x^+$ is the positive samples for $x$. $\textbf{E}$ is the representations extracted by the encoder network in latent space.
\end{definition}

\begin{definition} [Relationship for Mutual Information and Representation] The learned representations $\textbf{E}$ by minimizing Eq.~\eqref{contra_loss} maximize the mutual information, i.e., $I(\textbf{E}; x^+)$. 
\label{define_MR}
\end{definition}
In detail, Eq.~\eqref{contra_loss} maximizes the similarity of the positive samples while minimizing the similarity of negative samples. The estimate of $\textbf{E}$ conditioned on $x^+$ is more accurate since $x^+$ is the similar with $x$. Therefore, the mutual information $\text{log}\frac{p(\textbf{E}|x^+)}{p(\textbf{E})}$ will increase by minimizing Eq.~\eqref{contra_loss}. The similar definition is widely adopted in many other fields~\cite{tsai2020self,hjelm2018learning}. Based on Definition.~\ref{define_MR}, we define $\textbf{E}^* = \text{argmax}_{\textbf{E}}I(\textbf{E};x^+)$ to denote the representations by minimizing the contrastive loss, i.e., maximizing the mutual information.

\begin{definition} [Mutual Information Constraint] For the input data $x$, positive samples $x^+$, the clean prediction $y$, and noisy prediction $y'$, we define the mutual information constraint if exists $I(x;y|x^+) \le \vartheta$ and $I(x; y'|x^+) > \eta$.
\label{MI_const}
\end{definition}
Specifically, For the fixed $x$ and its positive samples $x^+$, the information gain for the class prediction $y$ is limited. $\vartheta$ could be interpreted as a small value of the information gain contributed by positive samples $x^+$. In contrast, $x^+$ contributes greater information gain to the noisy prediction $y'$, which we regard as $\eta$.

Based on the above definitions, we present detailed proof. Similar to~\cite{tsai2020self}, the first step is to proof the first inequality. With adopting the Data Processing Inequality~\cite{cover1999elements} in the Markov chain $y \leftrightarrow  x  \to  \textbf{E}$, for any $\textbf{E}$ we have:
\begin{equation}
I(x;y)\le I(\textbf{E};y)
\end{equation}
Therefore, we conclude $I(x;y)\le I(\textbf{E}^*;y)$. Moreover, the learned representations $\textbf{E}^*$ maximize $I(\textbf{E}; x^+)$ and $I(\textbf{E}^*;x^+)$ is maximized at $I(x;x^+)$, we could obtain:
\begin{equation}
\begin{aligned}
I(\textbf{E}^*;x^+) &= I(x;x^+),\\
I(\textbf{E}^*;x^*|y) &= I(x;x^+|y)
\end{aligned}
\end{equation}

Thus, for distribution $(\textbf{E}^*, x^+, y)$, we have $I(\textbf{E}^*;x^+;y) = I(\textbf{E}^*; x^+) - I(\textbf{E}^*; x^+|y)$. By substitution, we could obtain: 
\begin{equation}
\begin{aligned}
I(\textbf{E}^*;x^+;y) &= I(x;x^+)-I(x;x^+|y)\\
                      &=I(x;x^+;y)
\end{aligned}
\end{equation}
After that, we have:
\begin{equation}
\begin{aligned}
I(\textbf{E}^*;y) &= I(x;x^+;y) + I(\textbf{E}^*; y|x^+)\\
&=I(x;y) - I(x;y|x^+) + I(\textbf{E}^*; y|x^+)
\end{aligned}
\end{equation}
Therefore, we have:
\begin{equation}
\begin{aligned}
I(\textbf{E}^*;y) &\le I(x;y),\\
I(\textbf{E}^*;y) &\ge  I(x;y) - I(x;y|x^+) \ge I(x;y)-\vartheta
\end{aligned}
\end{equation}

For the second inequality, we could obtain the following formula by expanding $I(\textbf{E}*, y')$:
\begin{equation}
\begin{aligned}
I(\textbf{E}^*;y') = I(x;y')-I(x;y'|x^+) + I(\textbf{E}^*;y'|x^+)
\end{aligned}
\end{equation}

Moreover, with the $\eta$ in Definition.~\ref{MI_const} and the Markov Chain $y' \gets y \leftrightarrow x \to \textbf{E}$, we have:
\begin{equation}
\begin{aligned}
I(\textbf{E}^*;y') &\le I(x;y')-\eta + I(\textbf{E}^*;y'|x^+)\\
                   &I(x;y')-\eta + I(\textbf{E}^*;y|x^+)\\
                   &I(x;y')-\eta+\vartheta
\end{aligned}
\end{equation}

Therefore, we have completed the proof.

\end{proof}

\subsection{Additional Experimental Results}\label{add_exp}

In this section, due to the limitation of the original paper pages, we conduct additional experiments including comparison experiments, sensitive analysis, and visualization analysis experiments.

\subsubsection{Comparison Experiments}
We employ experiments with five deep multi-view clustering methods, i.e., SiMVC~\cite{SiMVC}, CoMVC~\cite{SiMVC}, MFLVC~\cite{MFLVC}, DealMVC~\cite{DealMVC}, SURE~\cite{SURE}. The results are presented in Tab.~\ref{com_res3},~\ref{com_res4},~\ref{com_res5}. Especially, Tab.~\ref{com_res5} presents the experimental results with 90\% noisy ratio. From those experimental results, we could conclude the following observations.

\begin{itemize}
    \item AIRMVC consistently outperforms state-of-the-art baselines across various metrics and datasets. Specifically, on the BBCSport dataset with 10\% noise, AIRMVC achieves notable improvements of 2.58\%, 9.94\%, and 2.56\% in ACC, NMI, and PUR, respectively, compared to the best multi-view clustering algorithm. This improvement can be attributed to AIRMVC's ability to automatically identify and rectify noisy data, ensuring a more robust learning process.  
    \item The presence of noise in multi-view data often destabilizes the performance of deep multi-view clustering models. As the proportion of noise increases, the clustering performance deteriorates. This decline can be explained by the distortion of the underlying clustering structures caused by noisy data.
\end{itemize}

\subsubsection{Ablation Study}\label{other_ab}

In this subsection, we conduct experiments to evaluate the effectiveness of the components within AIRMVC. Due to space constraints, we present the experimental results under a 30\% noise ratio.  

We perform ablation studies to validate the contributions of the designed components, including the noisy identification and rectification strategy and the noise-robust contrastive mechanism. Specifically, we use the following notations for the ablation models: ``(w/o) D\&R'' refers to removing the noisy identification and rectification strategy, ``(w/o) Con'' denotes removing the noise-robust contrastive mechanism, and ``(w/o) D\&R\&Con'' indicates the removal of both modules. Additionally, in ``(w/o) D\&R\&Con,'' an autoencoder network is utilized as the backbone to extract representations for downstream clustering tasks. We conduct experiments using three metrics across six datasets, with the results shown in Fig.~\ref{ablation_study_other}. Based on the results, we draw the following observations:  

\begin{itemize}
    \item Removing any component from the proposed model results in noticeable performance degradation, demonstrating that each module contributes to the overall clustering effectiveness. 
    \item The removal of the noisy identification and rectification strategy (``(w/o) D\&R'') causes a greater decline in performance than removing the noise-robust contrastive mechanism (``(w/o) Con''), underscoring the critical importance of noise identification and rectification for maintaining robust clustering performance. 
\end{itemize}

\begin{figure*}[]
\centering
\small
\begin{minipage}{0.45\linewidth}
\centerline{\includegraphics[width=1\textwidth]{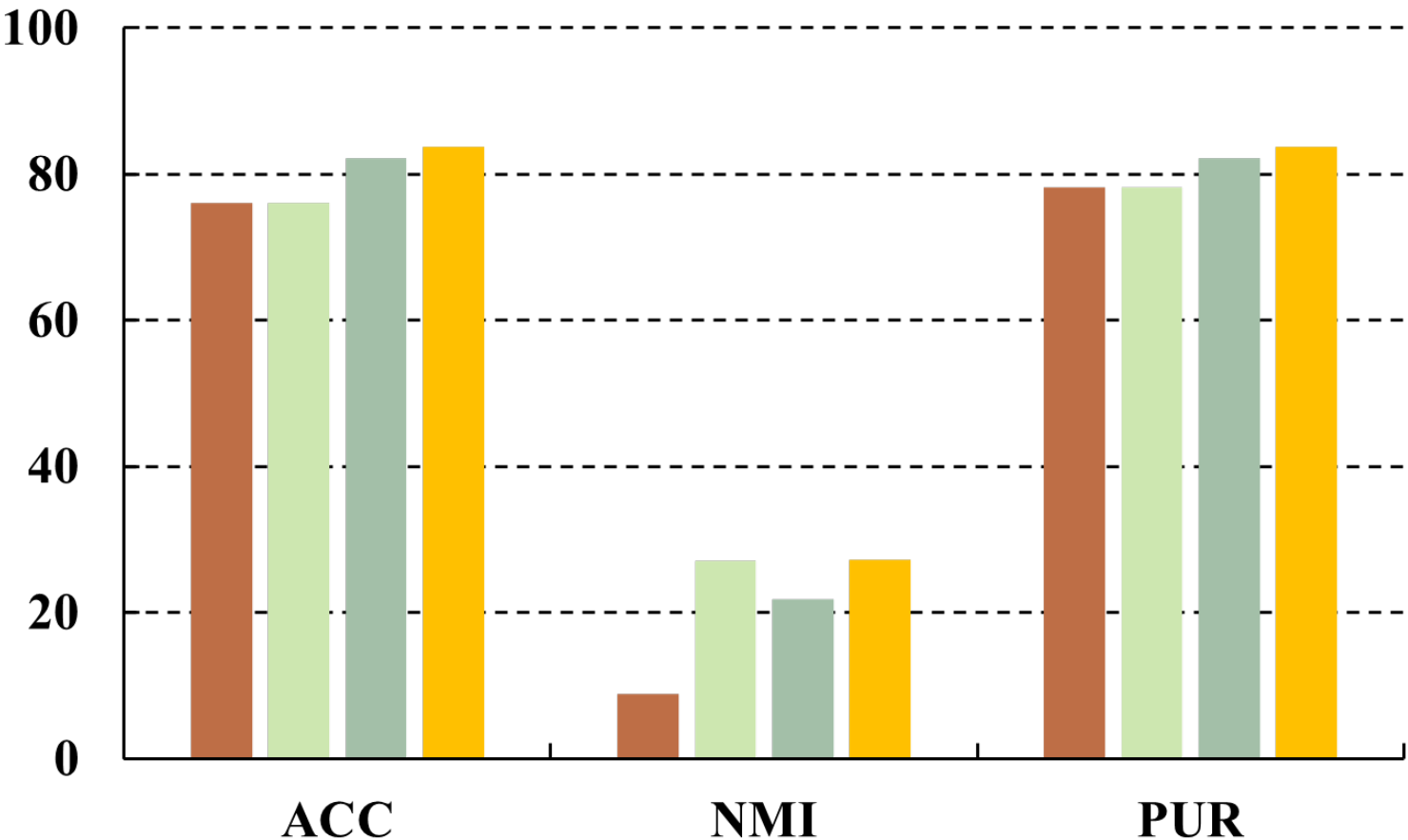}}
{\centerline{WebKB}}
\end{minipage}
\begin{minipage}{0.45\linewidth}
\centerline{\includegraphics[width=1\textwidth]{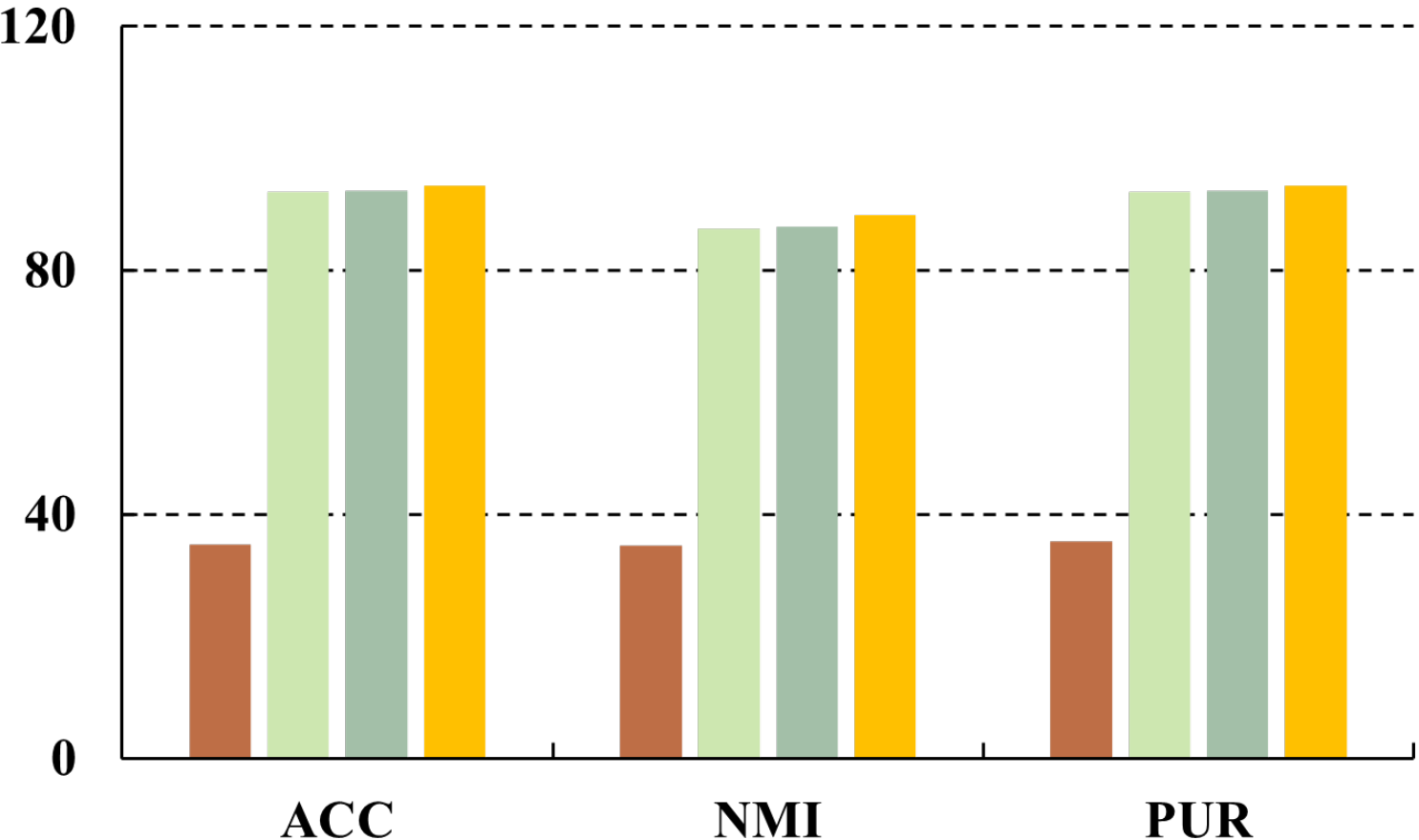}}
{\centerline{UCI-digit}}
\end{minipage}
\caption{Ablation studies on UCI-digit and WebKB datasets with 10\% noisy ratio.}
\label{ablation_study_10}
\end{figure*}

\begin{figure*}[]
\centering
\small
\begin{minipage}{0.31\linewidth}
\centerline{\includegraphics[width=1\textwidth]{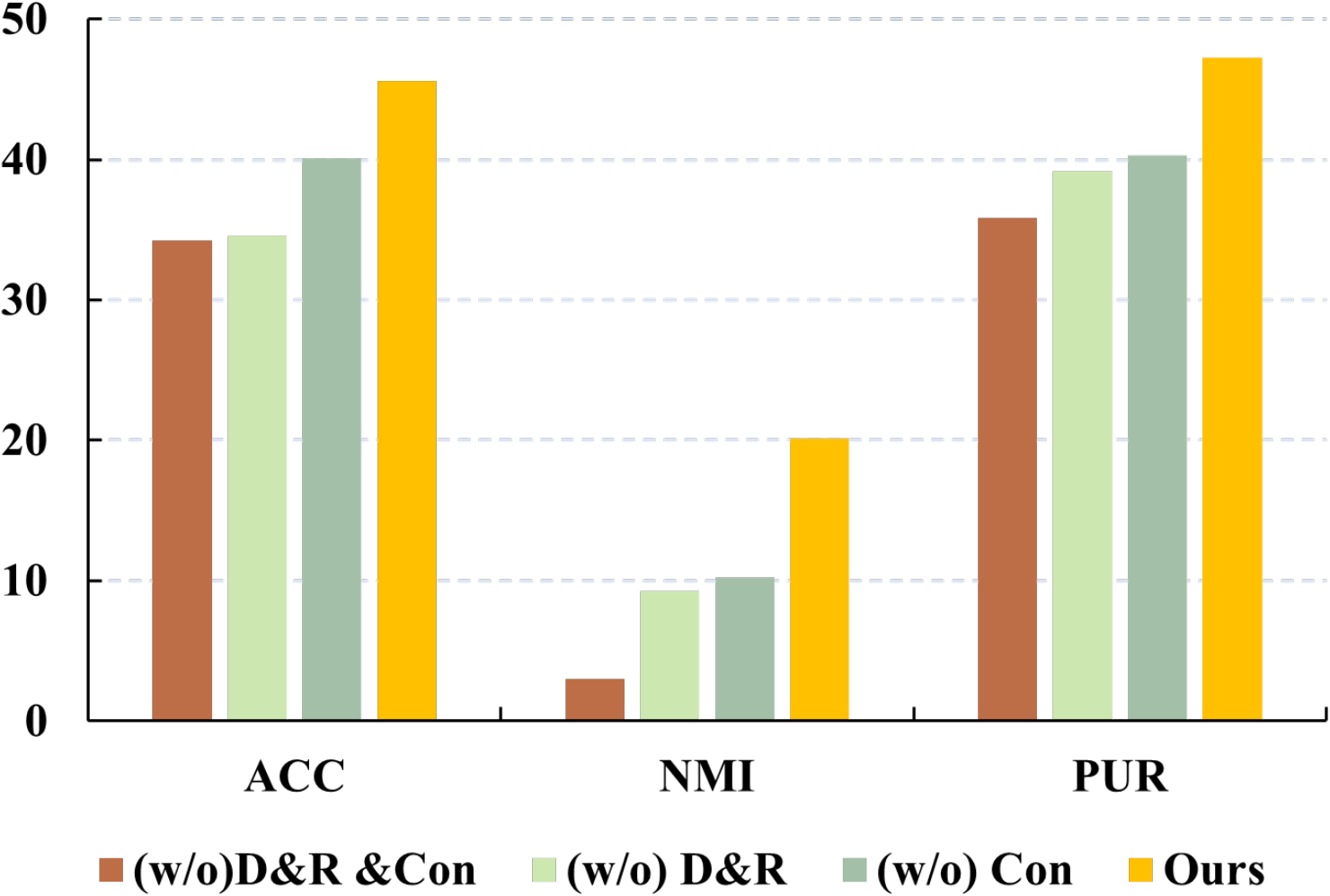}}
{\centerline{BBCSport}}
\centerline{\includegraphics[width=1\textwidth]{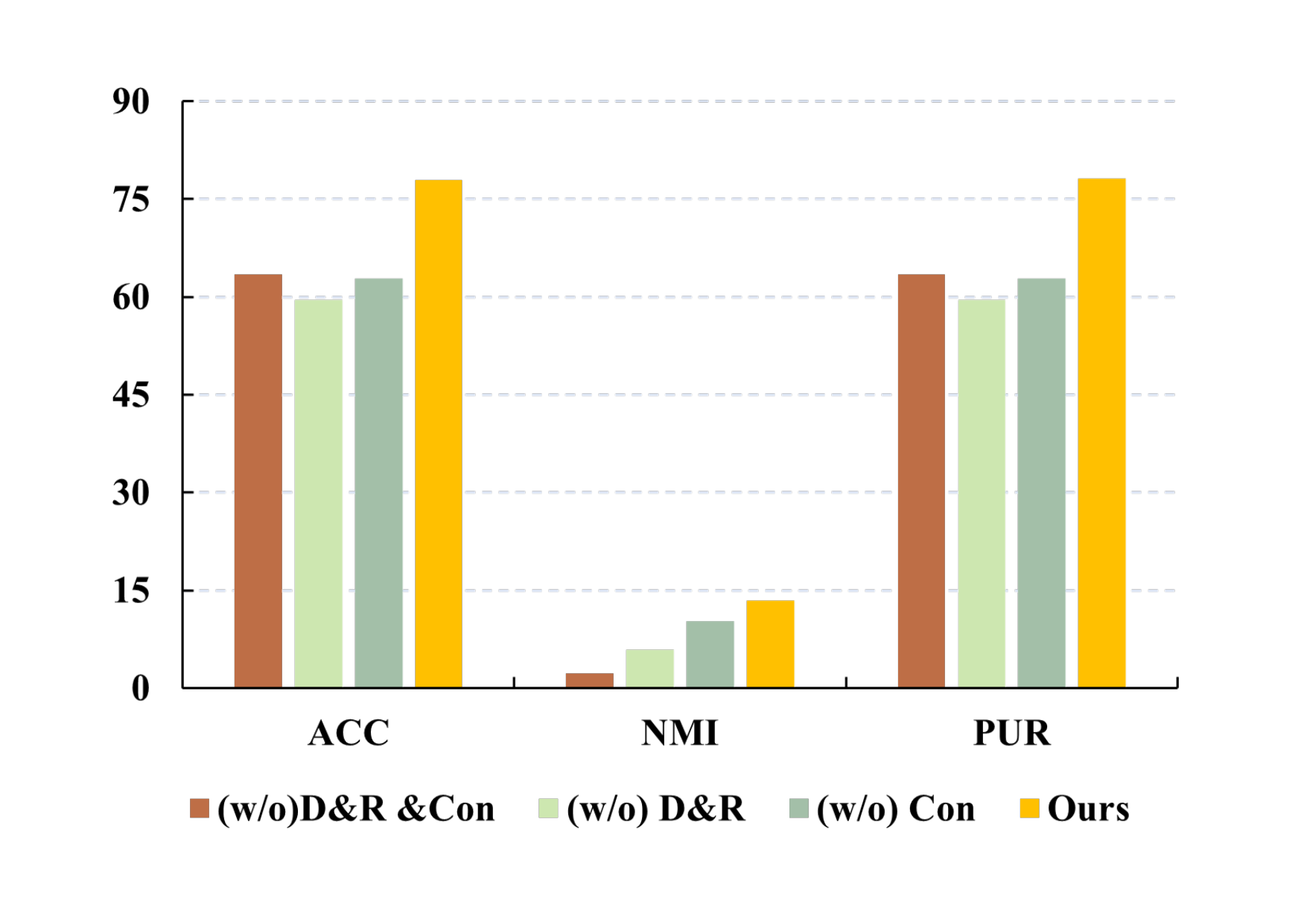}}
{\centerline{WebKB}}
\end{minipage}
\begin{minipage}{0.31\linewidth}
\centerline{\includegraphics[width=\textwidth]{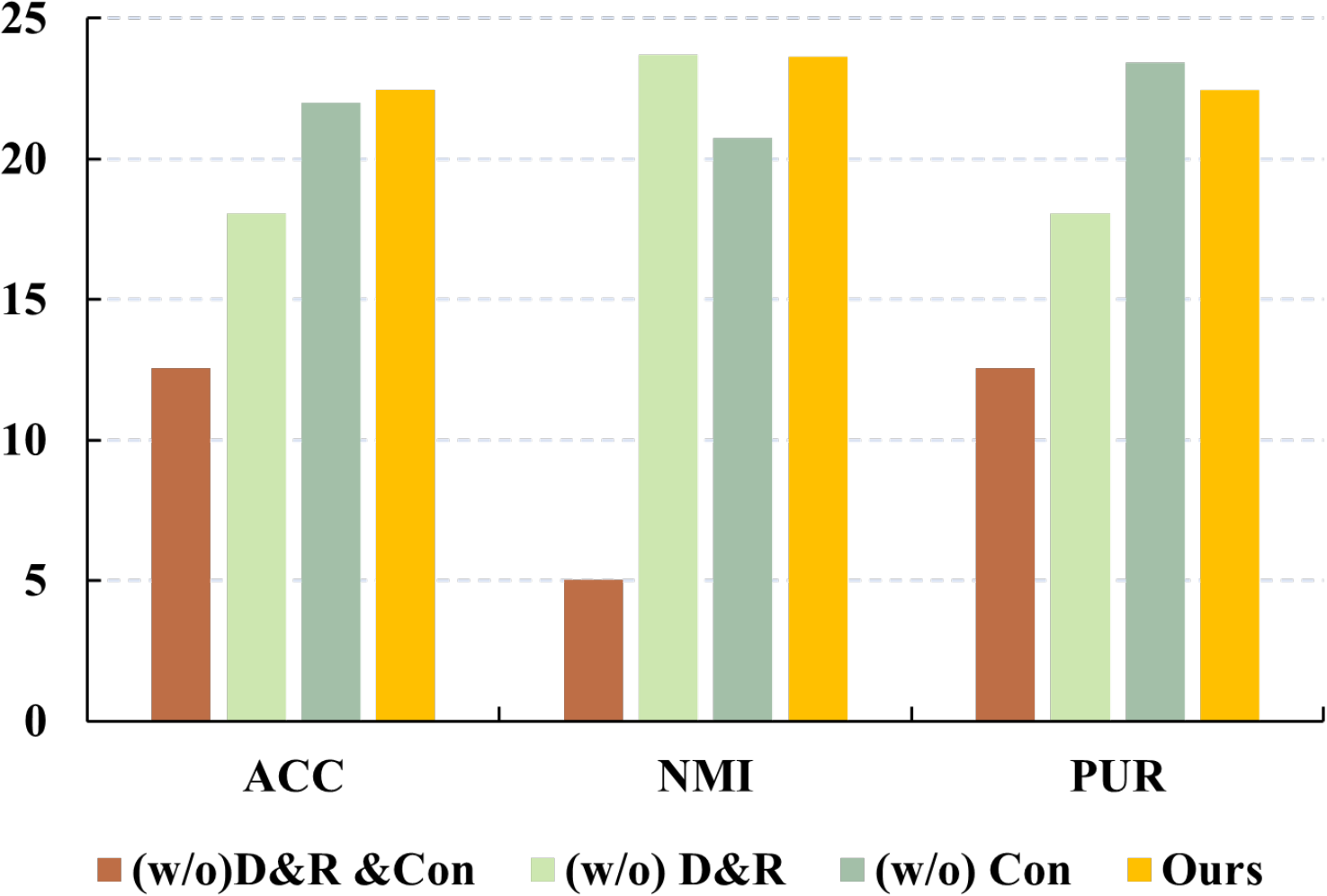}}
{\centerline{STL10}}
\centerline{\includegraphics[width=\textwidth]{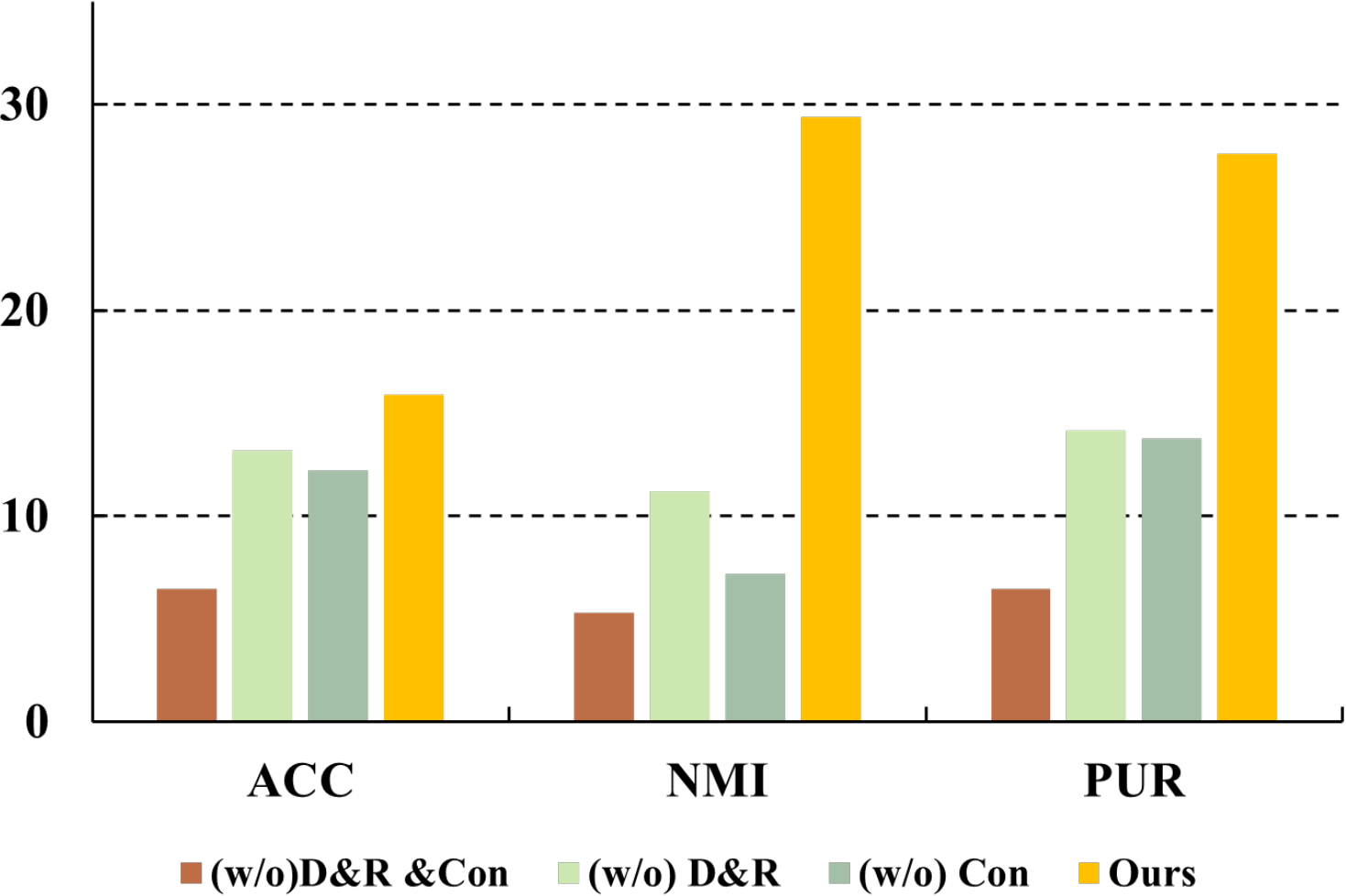}}
{\centerline{Caltech101}}
\end{minipage}
\begin{minipage}{0.31\linewidth}
\centerline{\includegraphics[width=\textwidth]{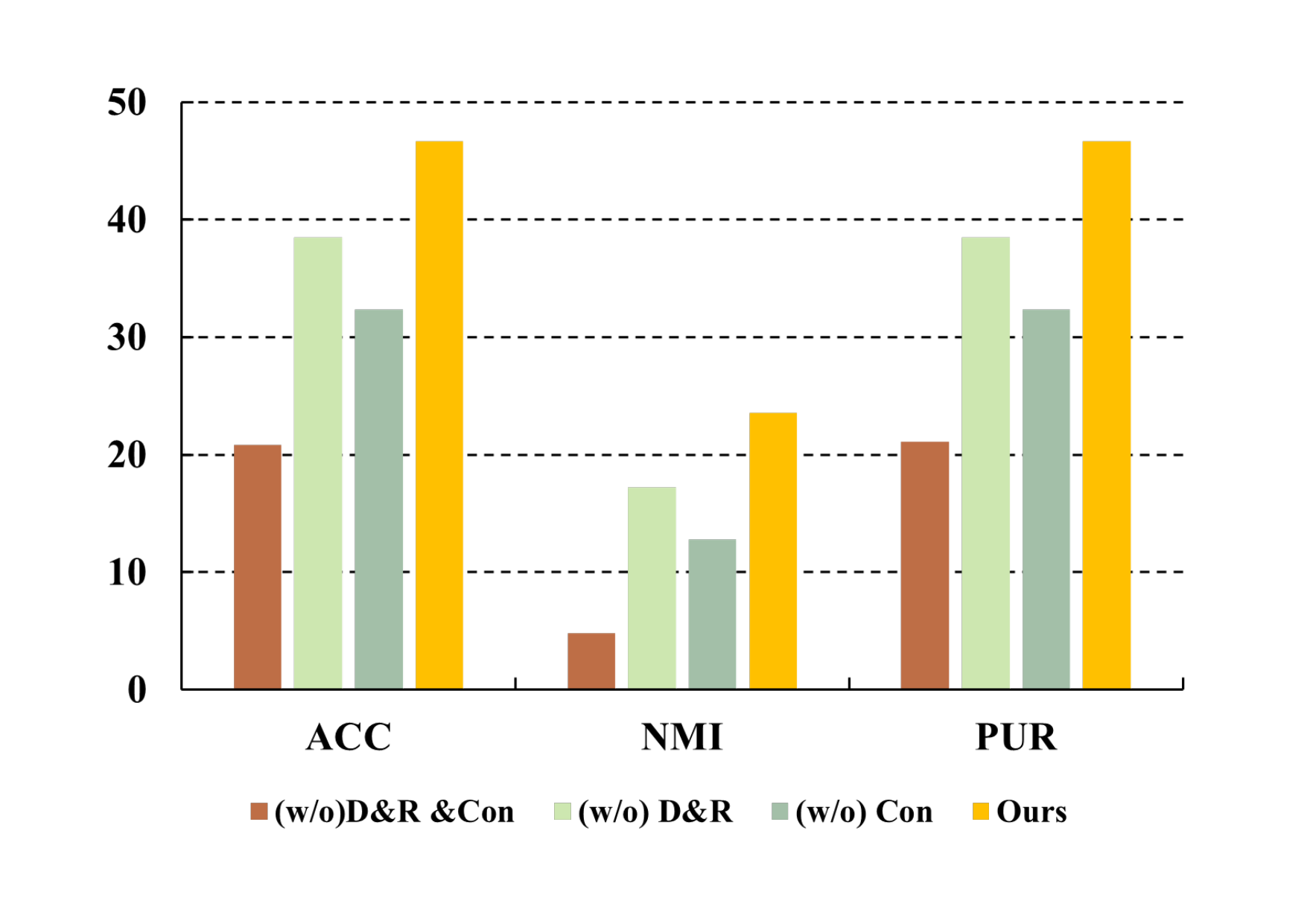}}
{\centerline{Reuters}}
\centerline{\includegraphics[width=\textwidth]{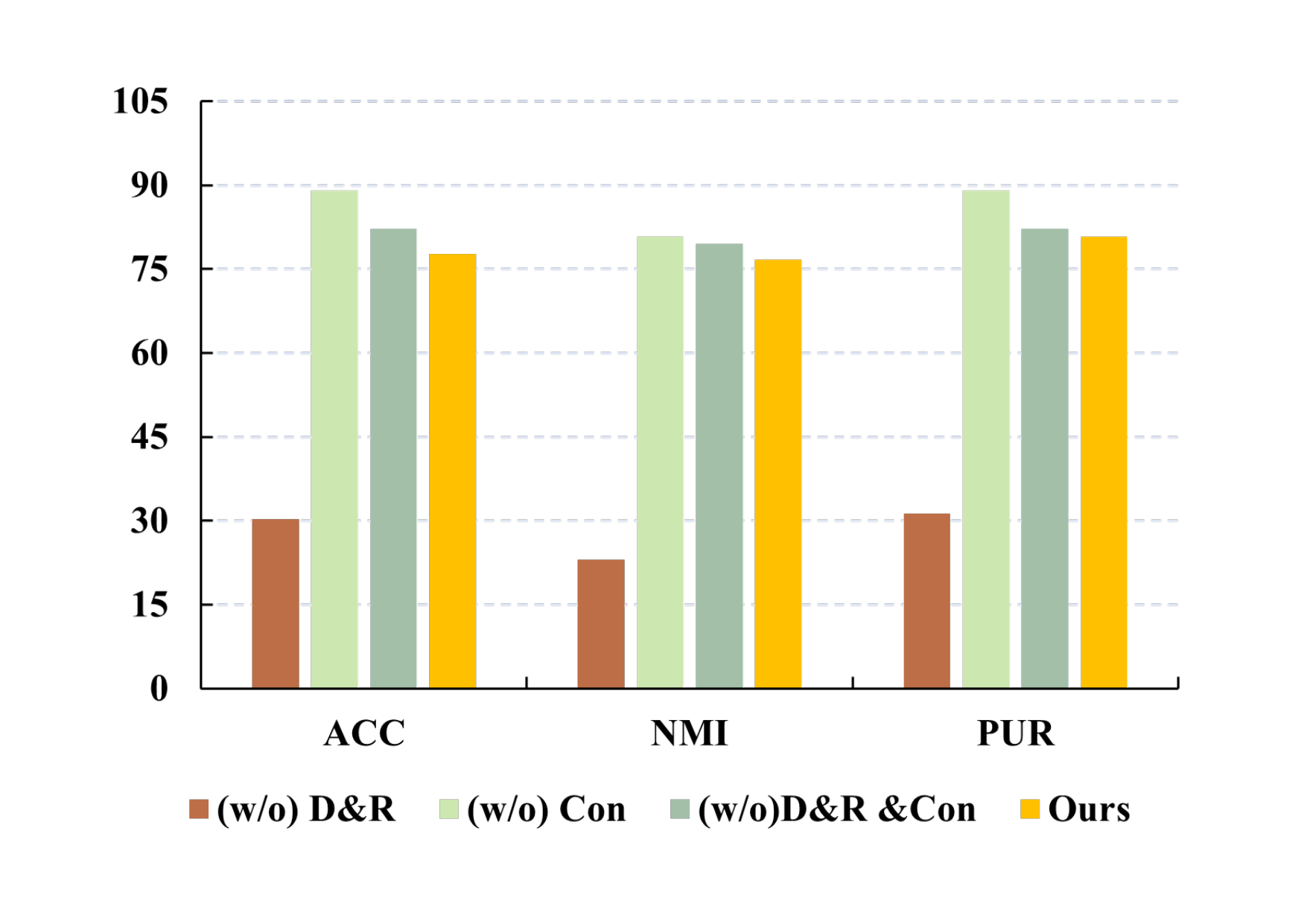}}
{\centerline{UCI-digit}}
\end{minipage}
\caption{Ablation studies on BBCSport, Caltech101, STL10, UCI-digit, WebKB and Reuters datasets with 30\% noisy ratio.}
\label{ablation_study_other}
\end{figure*}

\subsubsection{Detailed Hyper-parameter Analysis}\label{rq4}
In this subsection, we conduct detailed experiments to analysis the sensitivity of the hyper-parameter $\alpha$ and $\beta$ in this paper. 

\textbf{Sensitive analysis of trade-off hyper-parameter $\alpha$ and $\beta$:}

To further examine the impact of the parameters $\alpha$ and $\beta$ on our model, we performed sensitive experiments on six datasets with 10\% noise ratio. In particular, we analyze parameter values within the range of $\{0.01, 0.1, 1.0, 10\}$. According to the results presented in Fig. \ref{sen_a_b_all}. We could find the following observations.

\begin{itemize}
  \item When $\alpha$ and $\beta$ approach extreme values, e.g., $0.01$, the model's clustering performance deteriorates significantly. This decline can be attributed to the disruption of the balance in the loss function. Besides, we could observe that the model achieves optimal clustering performance when the values of $\alpha$ and $\beta$ are set to 1.0. Therefore, we set the balance parameters to 1.0 in our experiments.

  \item Compared to changes in $\beta$, variations in $\alpha$ result in more pronounced changes in the model's performance. This indicates that the noisy identification and rectification module contributes more significantly to enhancing the model's performance.

\end{itemize}

\textbf{Sensitive analysis of threshold $\tau$:}

We conduct experiments to evaluate the influence of the threshold parameter $\tau$. We varied the value of $\tau$ within the range of $\{0.2, 0.4, 0.6, 0.8\}$. The results are demonstrated in Fig.~\ref{sen_tau}. Based on the results, we have the following observation. As the threshold increases, the clustering performance of the model improves progressively. We attribute this to the fact that higher thresholds reduce the number of incorrect contrastive learning sample pairs, thereby enhancing the model's discriminative ability.

\begin{figure*}[]
\centering
\small
\begin{minipage}{0.23\linewidth}
\centerline{\includegraphics[width=1\textwidth]{Figure/Sen/01/bbc_acc.pdf}}
{\centerline{BBCSport-ACC}}
\centerline{\includegraphics[width=1\textwidth]{Figure/Sen/01/bbc_nmi.pdf}}
{\centerline{BBCSport-NMI}}
\centerline{\includegraphics[width=1\textwidth]{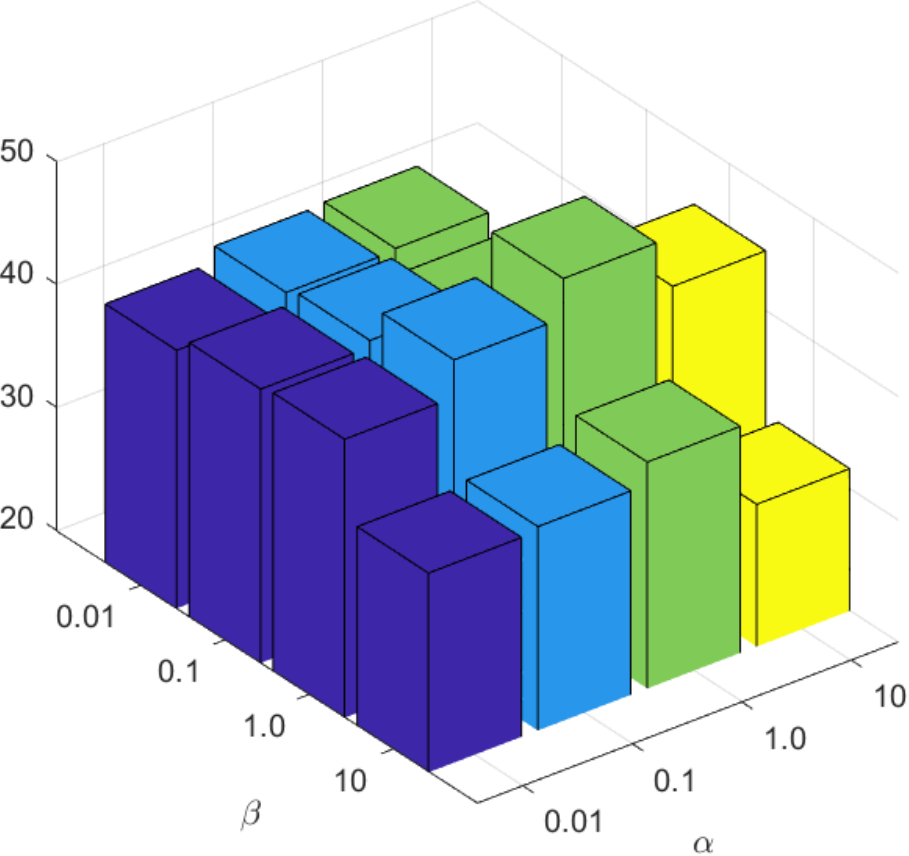}}
{\centerline{BBCSport-PUR}}
\centerline{\includegraphics[width=1\textwidth]{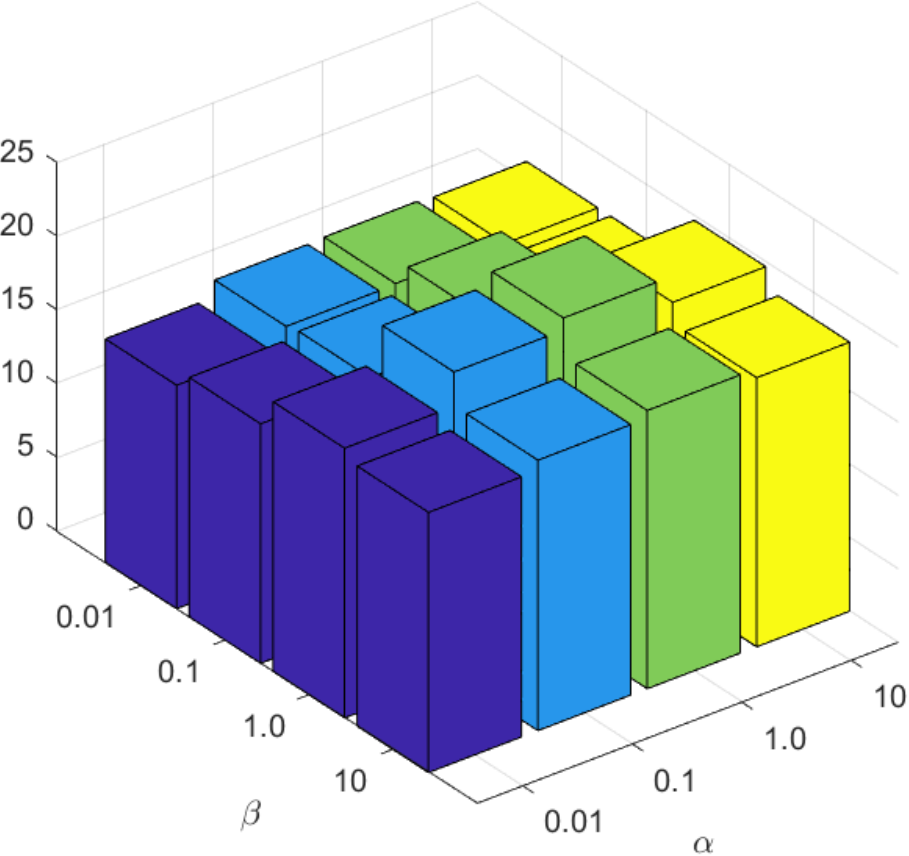}}
{\centerline{Caltech101}}
\end{minipage}
\begin{minipage}{0.23\linewidth}
\centerline{\includegraphics[width=1\textwidth]{Figure/Sen/01/webkb_acc.pdf}}
{\centerline{WebKB-ACC}}
\centerline{\includegraphics[width=1\textwidth]{Figure/Sen/01/webkb_nmi.pdf}}
{\centerline{WebKB-NMI}}
\centerline{\includegraphics[width=1\textwidth]{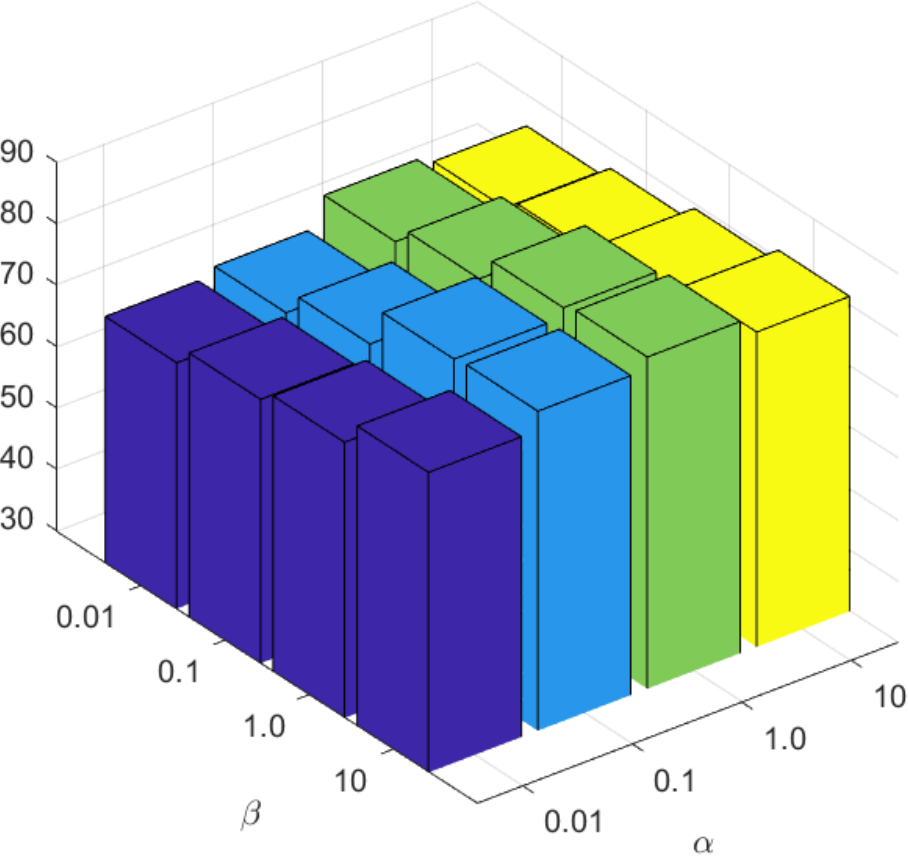}}
{\centerline{WebKB-PUR}}
\centerline{\includegraphics[width=1\textwidth]{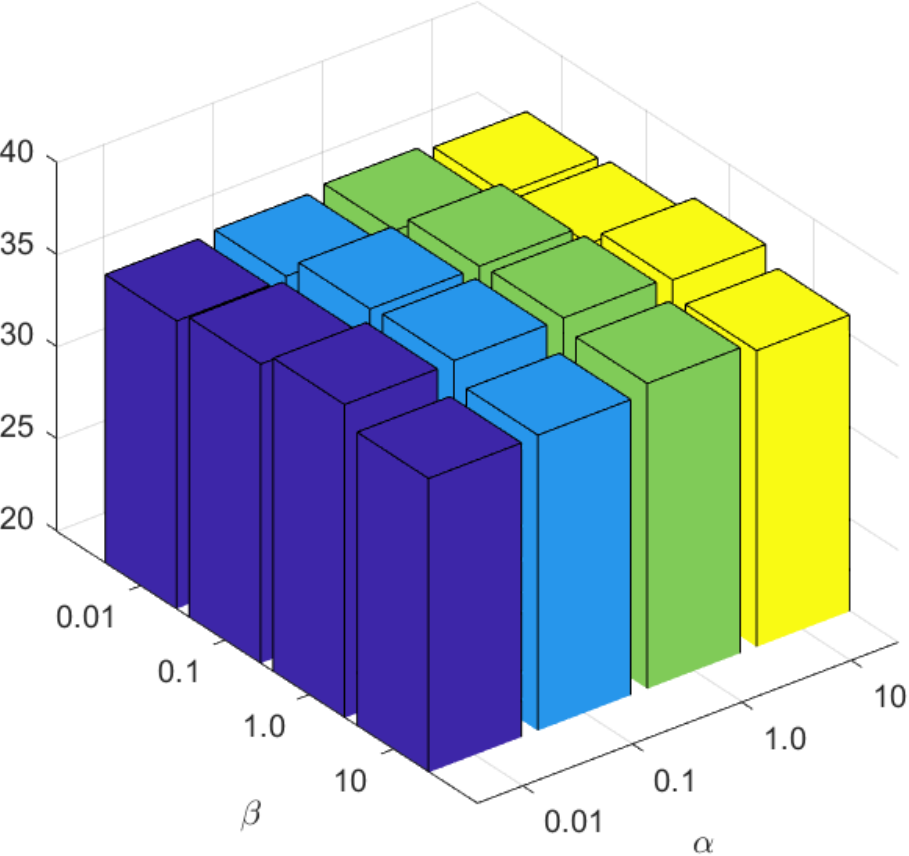}}
{\centerline{Caltech101-PUR}}
\end{minipage}
\begin{minipage}{0.23\linewidth}
\centerline{\includegraphics[width=\textwidth]{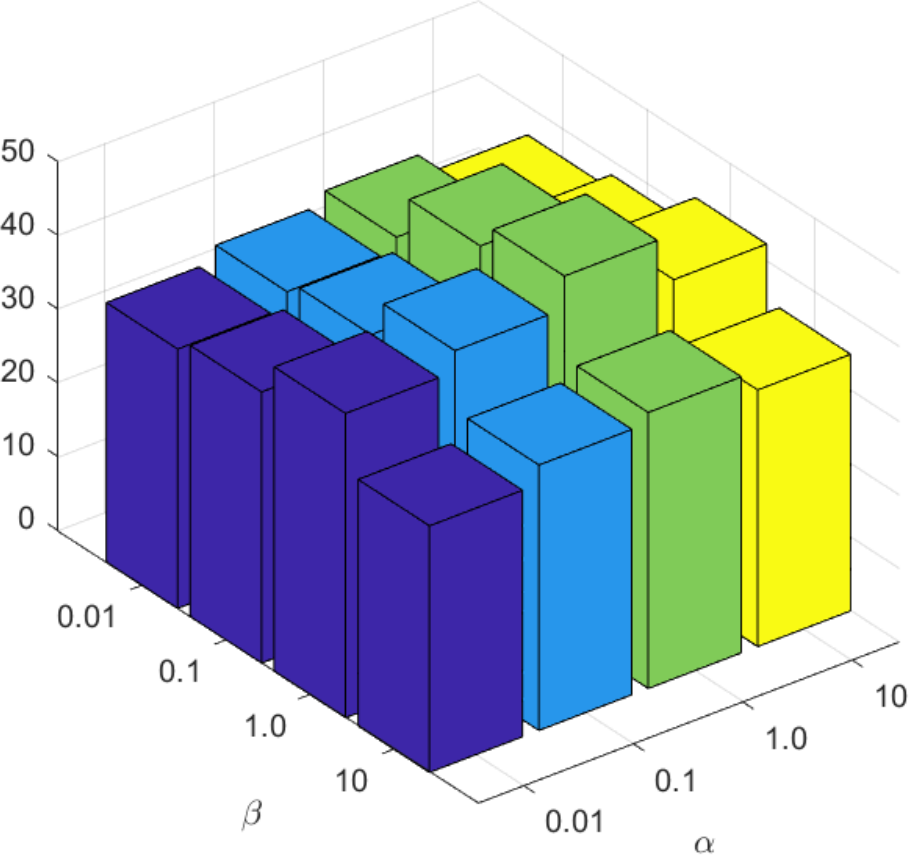}}
{\centerline{Reuters-ACC}}
\centerline{\includegraphics[width=\textwidth]{Figure/Sen/01/re_nmi.pdf}}
{\centerline{Reuters-NMI}}
\centerline{\includegraphics[width=\textwidth]{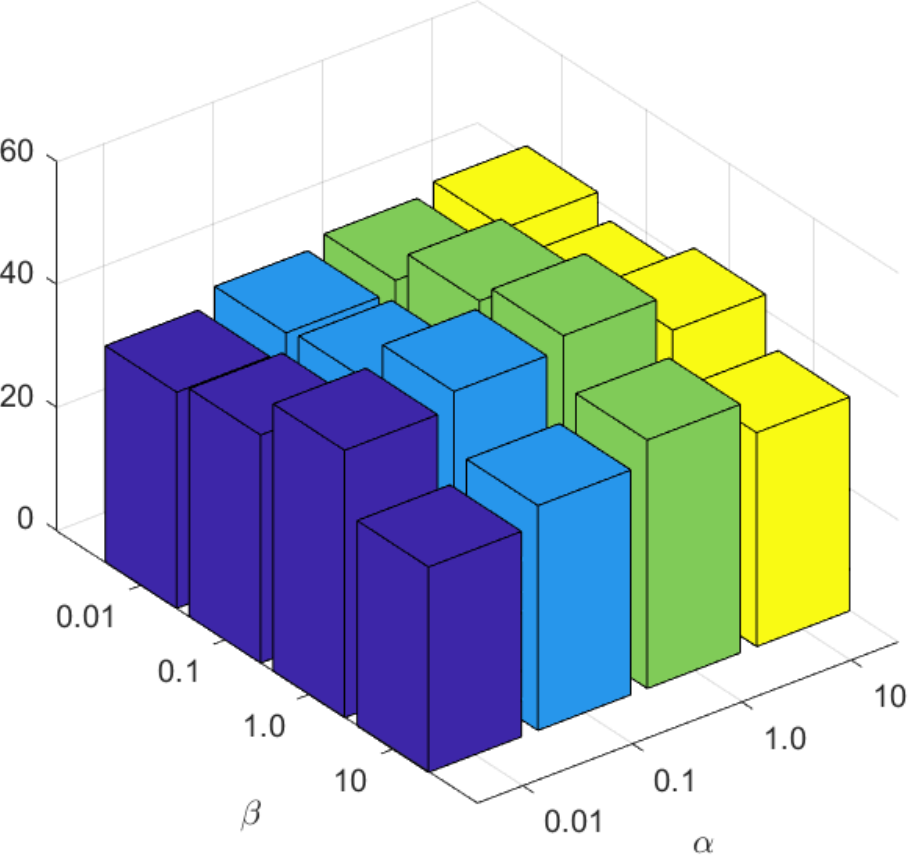}}
{\centerline{Reuters-PUR}}
\centerline{\includegraphics[width=\textwidth]{Figure/Sen/01/cal_nmi.pdf}}
{\centerline{Caltech101-NMI}}
\end{minipage}
\begin{minipage}{0.23\linewidth}
\centerline{\includegraphics[width=\textwidth]{Figure/Sen/01/stl_pur.pdf}}
{\centerline{STL10-PUR}}
\centerline{\includegraphics[width=\textwidth]{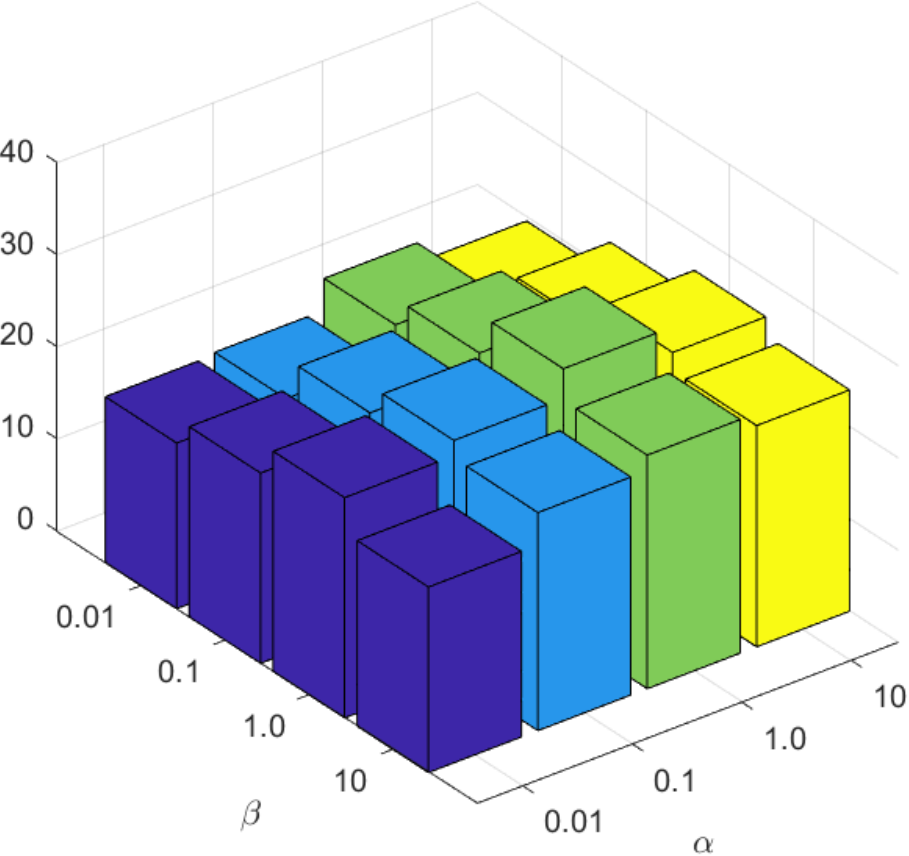}}
{\centerline{STL10-ACC}}
\centerline{\includegraphics[width=\textwidth]{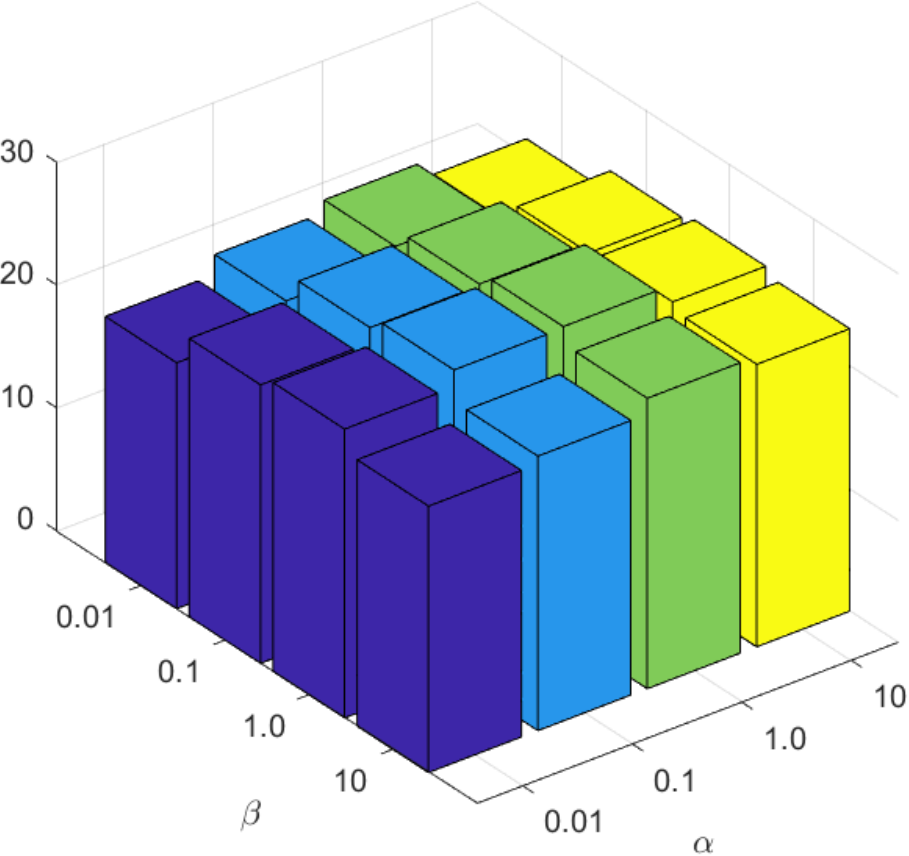}}
{\centerline{STL10-NMI}}
\centerline{\includegraphics[width=\textwidth]{Figure/Sen/01/uci_nmi.pdf}}
{\centerline{UCI-digit-NMI}}
\end{minipage}
\caption{Sensitivity Analysis for $\alpha$ and $\beta$ with ACC, NMI, and PUR on BBCSport, WebKB, Reuters, Caltech101, UCI-digit, and STL10 datasets with 10\% noisy ratio.}
\label{sen_a_b_all}
\end{figure*}

\subsubsection{Visualization Analysis}

To gain deeper insights into the representations learned by AIRMVC, we leverage $t$-SNE~\cite{TSNE} to visualize the latent feature space. Specifically, we extract the representations using the UCI-digit dataset and map them into a lower-dimensional space for visualization. The results, as shown in Fig.~\ref{other_t_sne}, reveal the following observations: as training progresses, the clustering structures in the UCI-digit dataset become progressively more distinct. By the 200th epoch, well-separated and identifiable cluster structures are observed. These results highlight the effectiveness of AIRMVC in uncovering the intrinsic cluster structures of the data.

\begin{figure*}[h]
\centering
\begin{minipage}{0.19\linewidth}
\centerline{\includegraphics[width=1\textwidth]{Figure/Vis/vis_20.png}}
\vspace{5pt}
\centerline{{20 Epoch}}
\centerline{\includegraphics[width=1\textwidth]{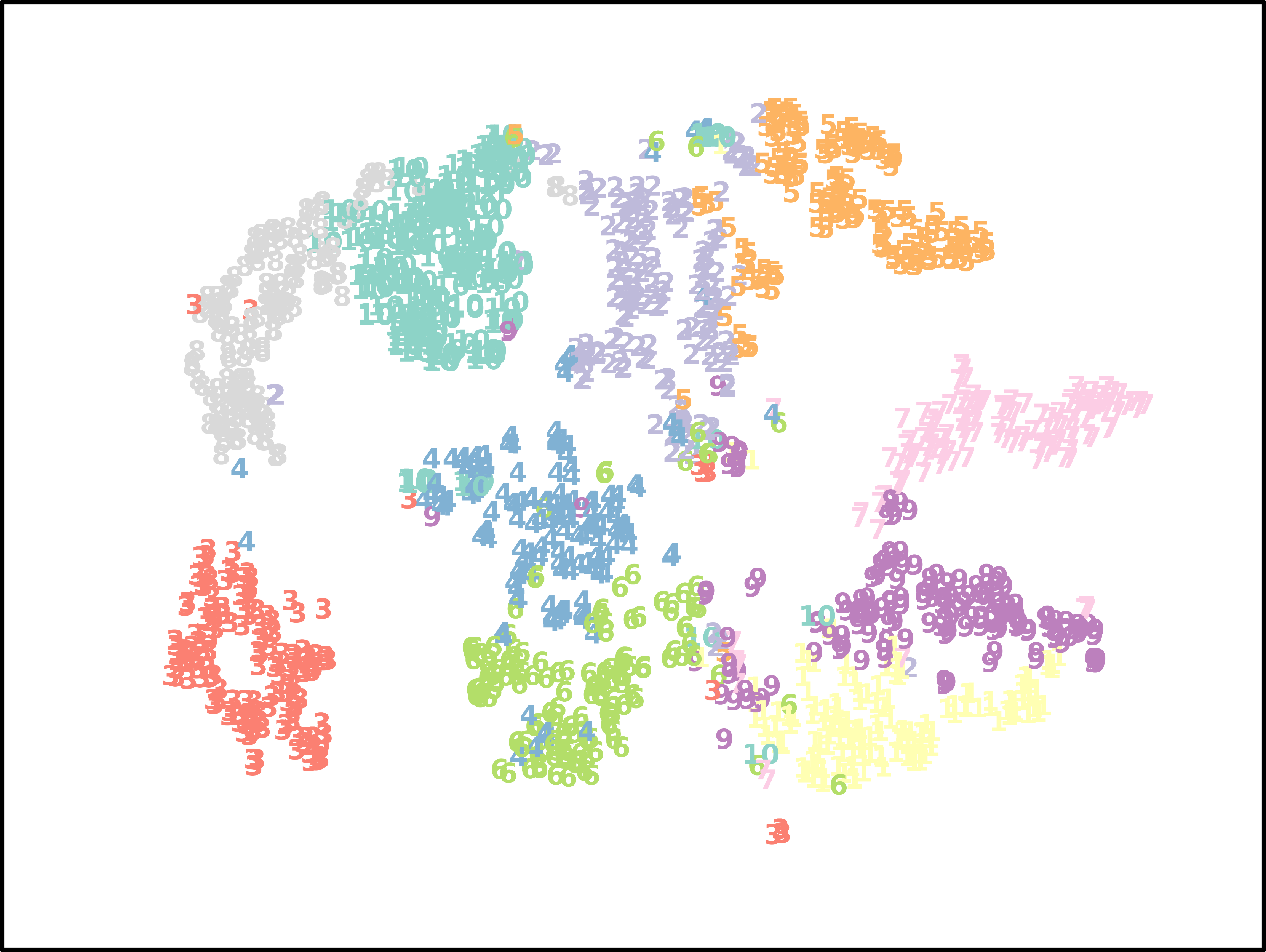}}
\vspace{5pt}
\centerline{{120 Epoch}}
\end{minipage}\hspace{3pt}
\begin{minipage}{0.19\linewidth}
\centerline{\includegraphics[width=1\textwidth]{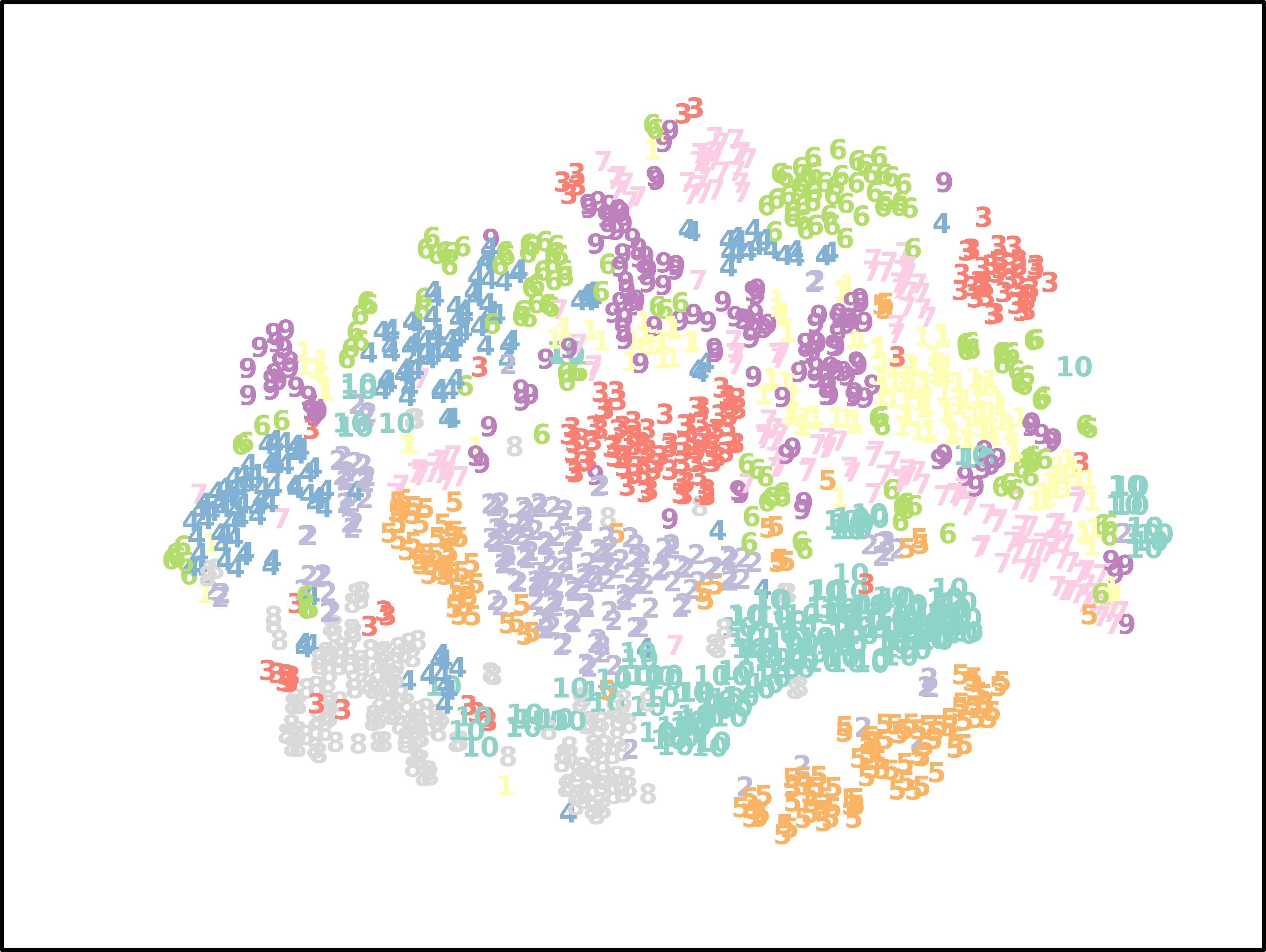}}
\vspace{5pt}
\centerline{{40 Epoch}}
\centerline{\includegraphics[width=1\textwidth]{Figure/Vis/vis_140.png}}
\vspace{5pt}
\centerline{{140 Epoch}}
\end{minipage}\hspace{3pt}
\begin{minipage}{0.19\linewidth}
\centerline{\includegraphics[width=1\textwidth]{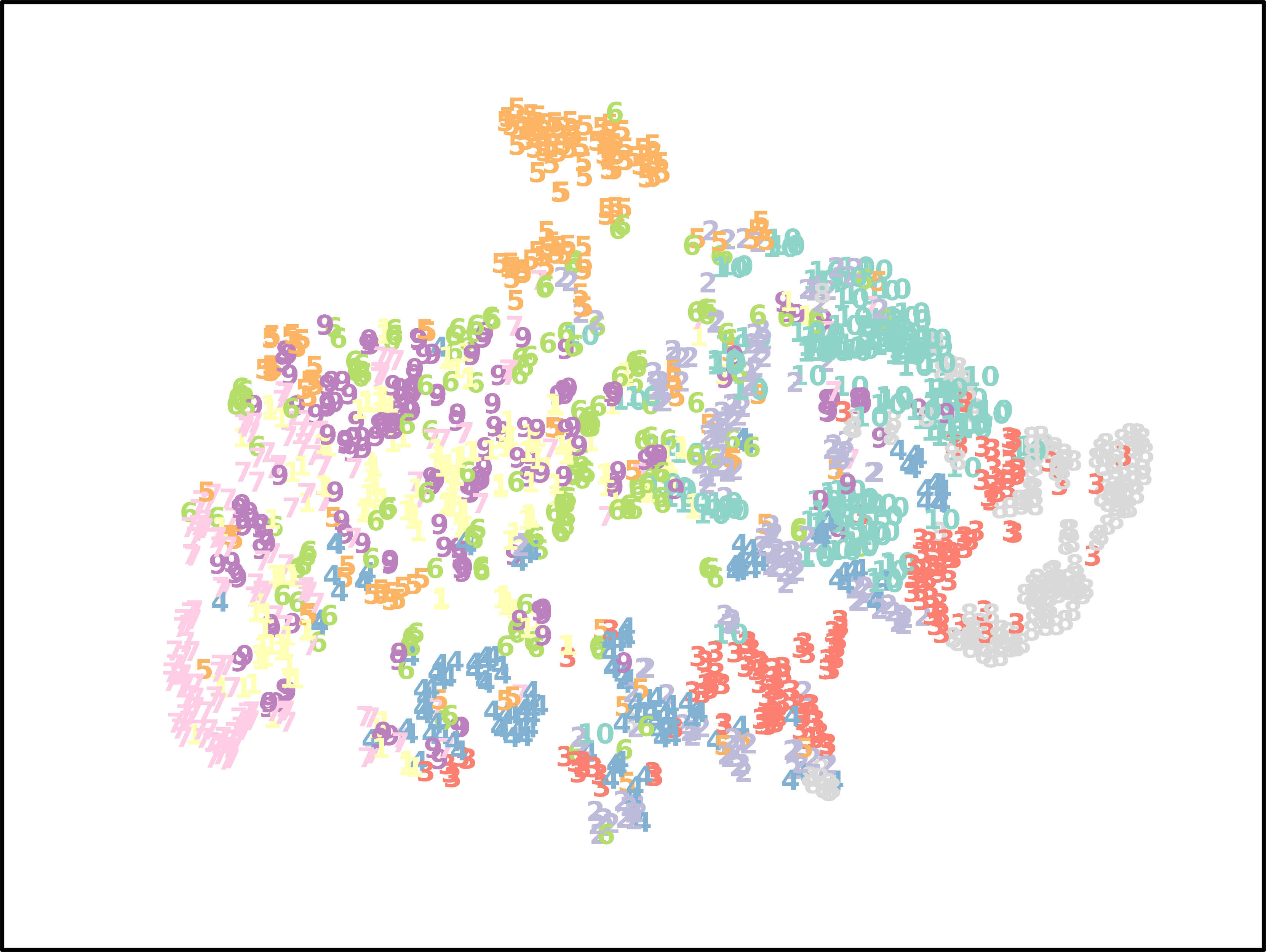}}
\vspace{5pt}
\centerline{{60 Epoch}}
\centerline{\includegraphics[width=1\textwidth]{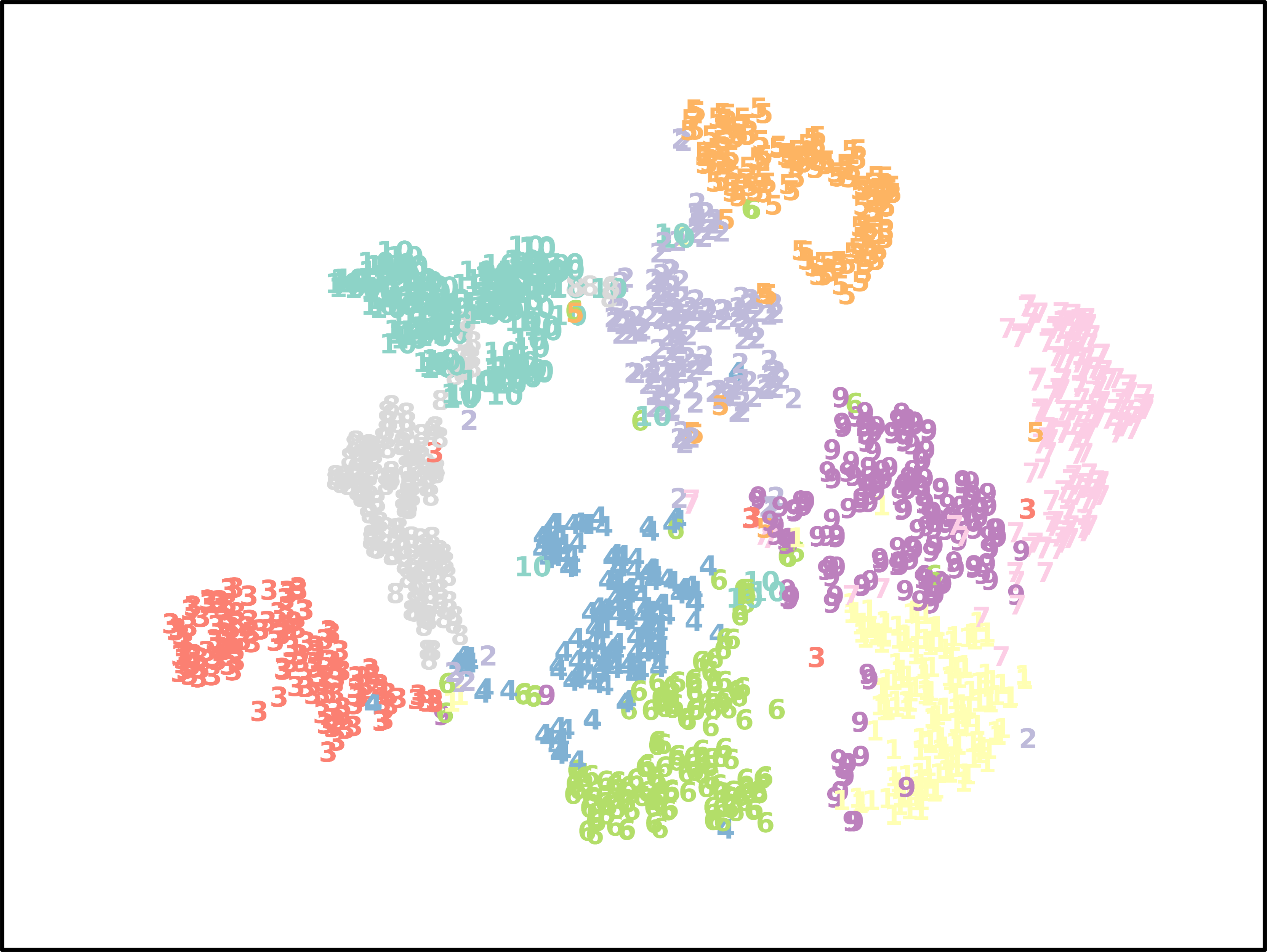}}
\vspace{5pt}
\centerline{{160 Epoch}}
\end{minipage}\hspace{3pt}
\begin{minipage}{0.19\linewidth}
\centerline{\includegraphics[width=1\textwidth]{Figure/Vis/vis_80.png}}
\vspace{5pt}
\centerline{{80 Epoch}}
\centerline{\includegraphics[width=1\textwidth]{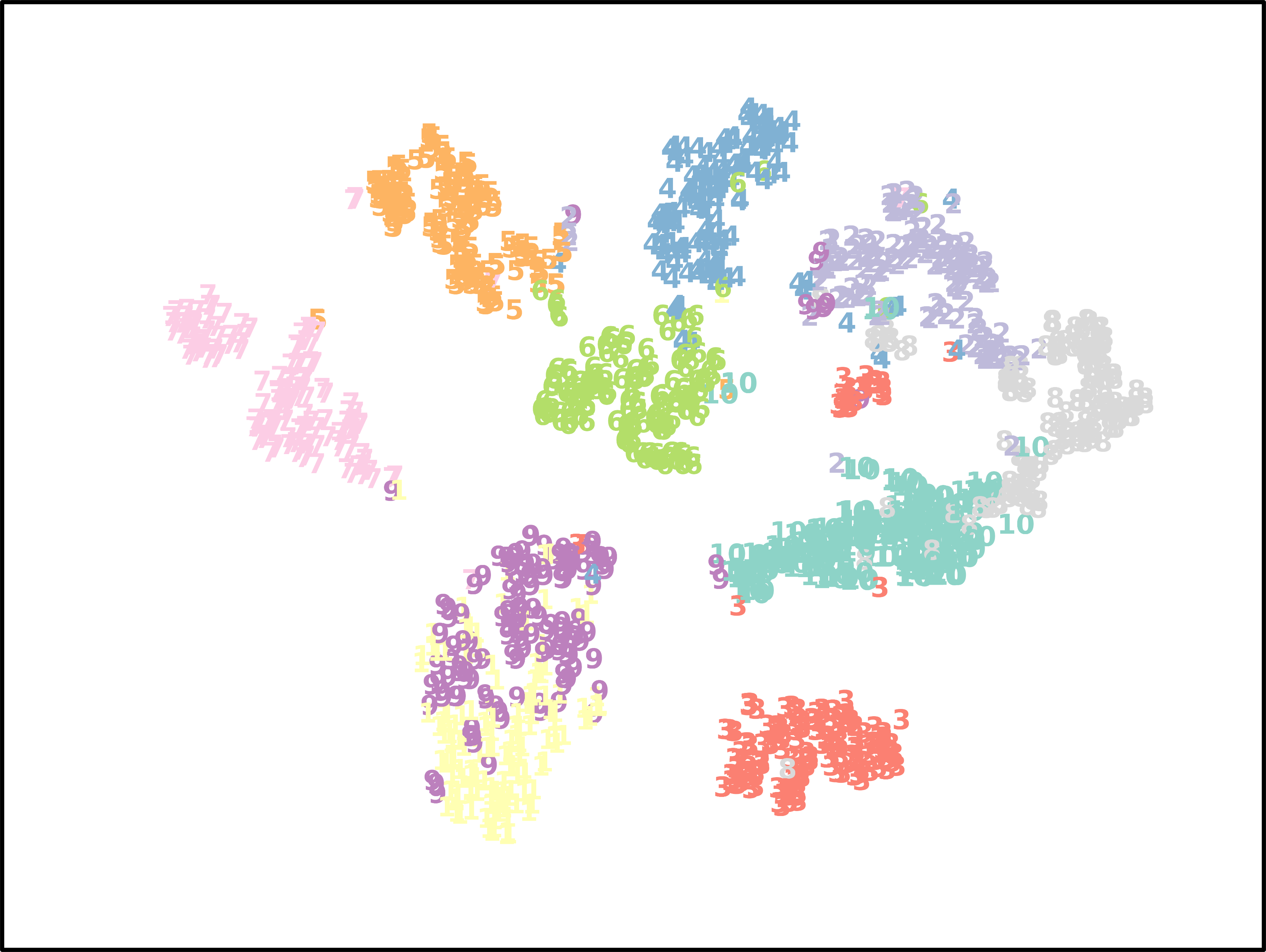}}
\vspace{5pt}
\centerline{{180 Epoch}}
\end{minipage}\hspace{3pt}
\begin{minipage}{0.19\linewidth}
\centerline{\includegraphics[width=1\textwidth]{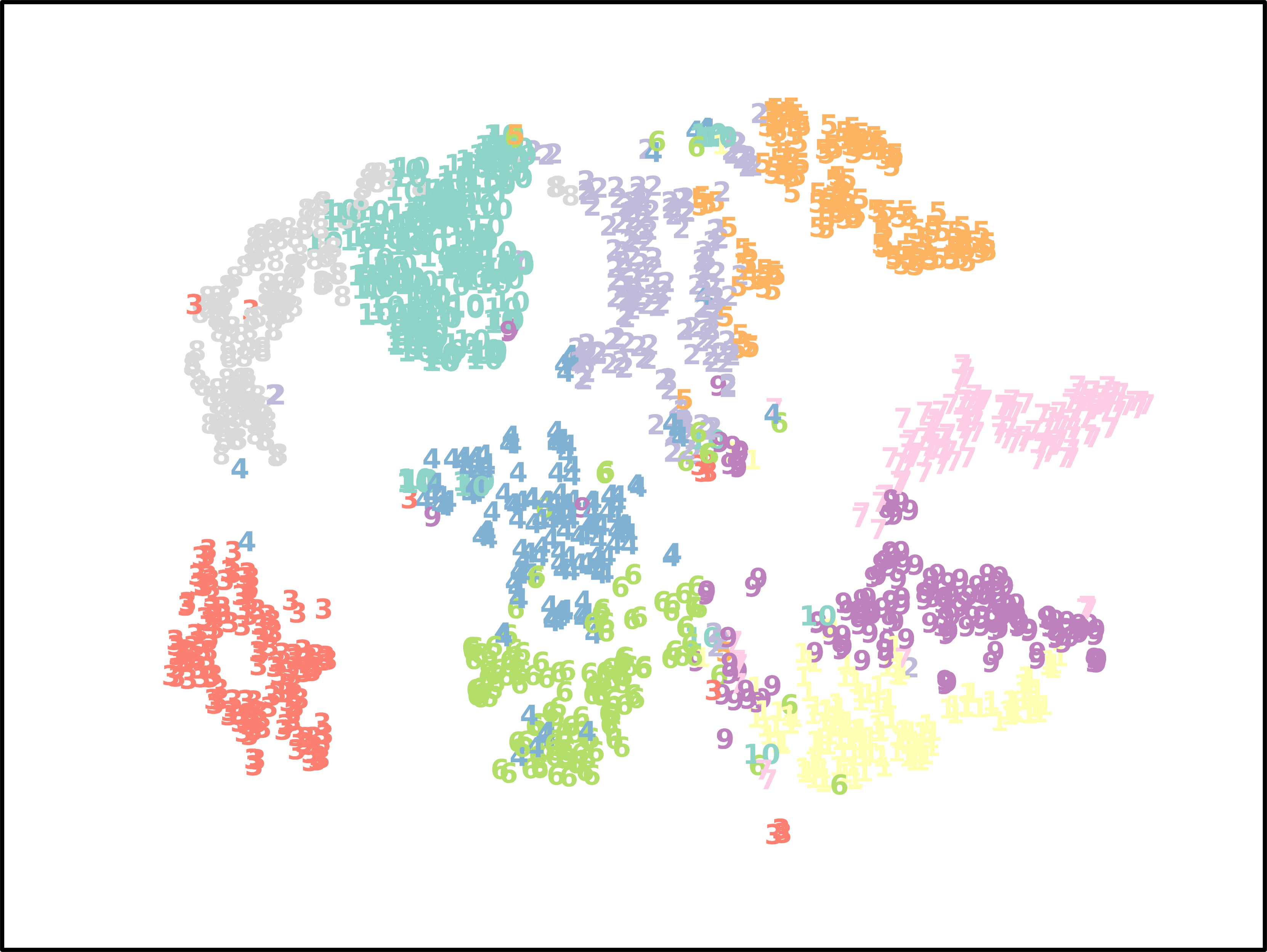}}
\vspace{5pt}
\centerline{{100 Epoch}}
\centerline{\includegraphics[width=1\textwidth]{Figure/Vis/vis_200.png}}
\vspace{5pt}
\centerline{{200 Epoch}}
\end{minipage}\hspace{3pt}
\caption{Visualization of the representations during the training process on UCI-digit dataset.}
\label{other_t_sne}
\end{figure*}

\begin{table*}[]
\centering
\caption{Multi-view clustering performance on six benchmark datasets (Part 3/4). The highest performance metrics are emphasized in \textcolor{red}{\textbf{red}}, while the runner-up results are marked in \textcolor{blue}{blue}.}
\scalebox{0.78}{
\begin{tabular}{c|cc|ccc|ccc|ccc|ccc}
\hline
                            & \multicolumn{2}{c|}{\textbf{Noisy Rate}}                     & \multicolumn{3}{c|}{\textbf{10\%}}                                                                                     & \multicolumn{3}{c|}{\textbf{30\%}}                                                                                     & \multicolumn{3}{c|}{\textbf{50\%}}                                                                                     & \multicolumn{3}{c}{\textbf{70\%}}                                                                                      \\ \cline{2-15} 
\multirow{-2}{*}{\textbf{}} & \multicolumn{2}{c|}{\textbf{Metric}}                         & \textbf{ACC$\uparrow$}                          & \textbf{NMI$\uparrow$}                          & \textbf{PUR$\uparrow$}                          & \textbf{ACC$\uparrow$}                          & \textbf{NMI$\uparrow$}                          & \textbf{PUR$\uparrow$}                          & \textbf{ACC$\uparrow$}                          & \textbf{NMI$\uparrow$}                          & \textbf{PUR$\uparrow$}                          & \textbf{ACC$\uparrow$}                          & \textbf{NMI$\uparrow$}                          & \textbf{PUR$\uparrow$}                          \\ \hline
                            & \multicolumn{1}{c|}{\textbf{SiMVC}}   & \textbf{CVPR 2021}   & 42.83                                 & 11.90                                 & {\color[HTML]{3531FF} 46.88}          & 39.52                                 & 12.48                                 & {\color[HTML]{3531FF} 41.54}          & 35.29                                 & 11.02                                 & {\color[HTML]{3531FF} 40.07}          & 32.35                                 & 05.09                                 & 36.76                                 \\
                            & \multicolumn{1}{c|}{\textbf{CoMVC}}   & \textbf{CVPR 2021}   & 39.71                                 & 05.38                                 & 40.62                                 & 37.50                                 & 04.03                                 & 37.13                                 & 35.48                                 & 02.13                                 & 35.85                                 & 32.17                                 & 01.62                                 & 35.48                                 \\
                            & \multicolumn{1}{c|}{\textbf{MFLVC}}   & \textbf{CVPR 2022}   & {\color[HTML]{3531FF} 46.87}          & {\color[HTML]{3531FF} 18.69}          & 46.87                                 & {\color[HTML]{3531FF} 44.12}          & {\color[HTML]{3531FF} 15.49}          & 44.12                                 & {\color[HTML]{3531FF} 40.51}          & {\color[HTML]{3531FF} 14.44}          & 40.51                                 & {\color[HTML]{3531FF} 39.15}          & {\color[HTML]{3531FF} 12.18}          & {\color[HTML]{3531FF} 40.43}          \\
                            & \multicolumn{1}{c|}{\textbf{DealMVC}} & \textbf{ACM MM 2023} & 40.25                                 & 07.56                                 & 40.44                                 & 35.48                                 & 06.27                                 & 35.48                                 & 31.80                                 & 03.94                                 & 35.47                                 & 30.38                                 & 02.89                                 & 31.78                                 \\
                            & \multicolumn{1}{c|}{\textbf{SURE}}    & \textbf{TPAMI 2023}  & 35.26                                 & 12.80                                 & 37.24                                 & 32.35                                 & 10.27                                 & 36.51                                 & 31.55                                 & 08.23                                 & 36.47                                 & 29.27                                 & 06.84                                 & 26.31                                 \\
\multirow{-6}{*}{\rotatebox{90}{\textbf{BBCSport}}}         & \multicolumn{1}{c|}{\textbf{AIRMVC}} & \textbf{Ours}        & {\color[HTML]{FF0000} \textbf{49.45}} & {\color[HTML]{FF0000} \textbf{28.63}} & {\color[HTML]{FF0000} \textbf{49.44}} & {\color[HTML]{FF0000} \textbf{45.59}} & {\color[HTML]{FF0000} \textbf{20.11}} & {\color[HTML]{FF0000} \textbf{47.24}} & {\color[HTML]{FF0000} \textbf{45.04}} & {\color[HTML]{FF0000} \textbf{23.34}} & {\color[HTML]{FF0000} \textbf{53.13}} & {\color[HTML]{FF0000} \textbf{40.99}} & {\color[HTML]{FF0000} \textbf{16.52}} & {\color[HTML]{FF0000} \textbf{45.59}} \\ \hline
                            & \multicolumn{1}{c|}{\textbf{SiMVC}}   & \textbf{CVPR 2021}   & 77.26                                 & 13.72                                 & 69.82                                 & 62.04                                 & {\color[HTML]{3531FF} 10.99}          & {\color[HTML]{3531FF} 65.19}          & 60.61                                 & {\color[HTML]{3531FF} 08.00}          & {\color[HTML]{3531FF} 63.71}          & 55.00                                 & {\color[HTML]{FF0000} \textbf{06.80}}           & 56.43                                 \\
                            & \multicolumn{1}{c|}{\textbf{CoMVC}}   & \textbf{CVPR 2021}   & 77.74                                 & 10.62                                 & 69.80                                 & {\color[HTML]{3531FF} 66.89}          & 08.10                                 & 64.58                                 & 56.23                                 & 07.27                                 & 58.12                                 & 54.42                                 & 05.17                                 & 55.28                                 \\
                            & \multicolumn{1}{c|}{\textbf{MFLVC}}   & \textbf{CVPR 2022}   & {\color[HTML]{3531FF} 77.83}          & {\color[HTML]{3531FF} 19.56}          & {\color[HTML]{3531FF} 78.12}          & 64.61                                 & 09.42                                 & 64.53                                 & {\color[HTML]{3531FF} 62.13}          & 07.43                                 & 62.13                                 & {\color[HTML]{3531FF} 58.61}          & 03.07                                 & {\color[HTML]{3531FF} 58.61}          \\
                            & \multicolumn{1}{c|}{\textbf{DealMVC}} & \textbf{ACM MM 2023} & 66.03                                 & 15.02                                 & 65.27                                 & 60.32                                 & 04.85                                 & 60.32                                 & 59.66                                 & 04.63                                 & 58.47                                 & 55.57                                 & 01.92                                 & 55.57                                 \\
                            & \multicolumn{1}{c|}{\textbf{SURE}}    & \textbf{TPAMI 2023}  & 67.27                                 & 11.10                                 & 67.11                                 & 62.55                                 & 7.80                                  & 63.05                                 & 55.77                                 & 06.12                                 & 58.92                                 & 53.91                                 & 05.80                                 & 57.71                                 \\
\multirow{-7}{*}{\rotatebox{90}{\textbf{WebKB}}}         & \multicolumn{1}{c|}{\textbf{AIRMVC}} & \textbf{Ours}         & {\color[HTML]{FF0000} \textbf{83.73}} & {\color[HTML]{FF0000} \textbf{27.15}} & {\color[HTML]{FF0000} \textbf{83.73}} & {\color[HTML]{FF0000} \textbf{77.93}} & {\color[HTML]{FF0000} \textbf{13.48}} & {\color[HTML]{FF0000} \textbf{78.12}} & {\color[HTML]{FF0000} \textbf{74.98}} & {\color[HTML]{FF0000} \textbf{08.42}} & {\color[HTML]{FF0000} \textbf{74.98}} & {\color[HTML]{FF0000} \textbf{62.89}} & {\color[HTML]{3531FF} {06.69}} & {\color[HTML]{FF0000} \textbf{62.32}} \\ \hline
                            & \multicolumn{1}{c|}{\textbf{SiMVC}}   & \textbf{CVPR 2021}   & 25.83                                 & 07.80                                 & 26.67                                 & 22.50                                 & 04.69                                 & 22.67                                 & 19.17                                 & 04.12                                 & 19.17                                 & 17.75                                 & 01.83                                 & 17.92                                 \\
                            & \multicolumn{1}{c|}{\textbf{CoMVC}}   & \textbf{CVPR 2021}   & 25.92                                 & 06.35                                 & 24.50                                 & 22.67                                 & 05.47                                 & 22.67                                 & 19.75                                 & 02.92                                 & 19.75                                 & 18.25                                 & 01.93                                 & 18.25                                 \\
                            & \multicolumn{1}{c|}{\textbf{MFLVC}}   & \textbf{CVPR 2022}   & {\color[HTML]{3531FF} 34.92}          & {\color[HTML]{3531FF} 21.24}          & {\color[HTML]{3531FF} 36.00}          & {\color[HTML]{3531FF} 31.08}          & {\color[HTML]{3531FF} 17.94}          & {\color[HTML]{3531FF} 29.50}          & {\color[HTML]{3531FF} 29.91}          & {\color[HTML]{3531FF} 15.77}          & {\color[HTML]{3531FF} 30.33}          & {\color[HTML]{3531FF} 27.42}          & {\color[HTML]{3531FF} 12.07}          & {\color[HTML]{3531FF} 27.42}          \\
                            & \multicolumn{1}{c|}{\textbf{DealMVC}} & \textbf{ACM MM 2023} & 30.92                                 & 14.14                                 & 30.92                                 & 28.50                                 & 10.53                                 & 28.50                                 & 27.92                                 & 06.16                                 & 27.92                                 & 26.58                                 & 05.74                                 & 26.58                                 \\
                            & \multicolumn{1}{c|}{\textbf{SURE}}    & \textbf{TPAMI 2023}  & 28.35                                 & 07.13                                 & 28.90                                 & 25.58                                 & 06.14                                 & 26.18                                 & 24.57                                 & 04.63                                 & 25.23                                 & 23.07                                 & 03.43                                 & 23.27                                 \\
\multirow{-6}{*}{\rotatebox{90}{\textbf{Reuters}}}         & \multicolumn{1}{c|}{\textbf{AIRMVC}} & \textbf{Ours}        & {\color[HTML]{FF0000} \textbf{48.25}}                       & {\color[HTML]{FF0000} \textbf{26.34}}                        & {\color[HTML]{FF0000} \textbf{48.25}}                        & {\color[HTML]{FF0000} \textbf{46.67}}                        & {\color[HTML]{FF0000} \textbf{23.54}}                        & {\color[HTML]{FF0000} \textbf{46.67}}                        & {\color[HTML]{FF0000} \textbf{44.83}}                        & {\color[HTML]{FF0000} \textbf{21.72}}                        & {\color[HTML]{FF0000} \textbf{45.58}}                        & {\color[HTML]{FF0000} \textbf{41.75}}                        & {\color[HTML]{FF0000} \textbf{20.91}}                         & {\color[HTML]{FF0000} \textbf{42.33}}                         \\ \hline
\end{tabular}}
\label{com_res3}
\end{table*}

\subsection{Setting Details of AIRMVC}

\textbf{Hyper-parameter Settings in our AIRMVC:} 

In this subsection, we report the statistics summary and hyper-parameter settings of our proposed method in Tab.~\ref{hyper-para}.

\begin{table}[h]
\centering
\caption{Hyper-parameters Statistics in AIRMVC.}
\begin{tabular}{c|cccccc}
\hline
\textbf{}              & \textbf{BBCSport} & \textbf{WebKB} & \textbf{Reuters} & \textbf{UCI-digit} & \textbf{Caltech101} & \textbf{STL10} \\ \hline
\textbf{Views}         & 2                 & 2              & 5                & 3                  & 5                   & 4              \\
\textbf{Samples}       & 544               & 1051           & 1200             & 2000               & 9144                & 13000          \\
\textbf{Clusters}      & 5                 & 2              & 6                & 10                 & 102                 & 10             \\ \hline
\textbf{Learning Rate} & 1e-4              & 1e-4           & 1e-5             & 1e-4               & 1e-4                & 1e-4           \\
$\tau$                   & 0.8               & 0.8            & 0.8              & 0.8                & 0.8                 & 0.8            \\
$\alpha$                 & 1.0               & 1.0            & 1.0              & 1.0                & 1.0                 & 1.0            \\
$\beta$                  & 1.0               & 1.0            & 1.0              & 1.0                & 1.0                 & 1.0            \\ \hline
\end{tabular}
\label{hyper-para}
\end{table}

\begin{table*}[]
\centering
\caption{Multi-view clustering performance on six benchmark datasets (Part 4/4). The highest performance metrics are emphasized in \textcolor{red}{\textbf{red}}, while the runner-up results are marked in \textcolor{blue}{blue}.}
\scalebox{0.78}{
\begin{tabular}{c|cc|ccc|ccc|ccc|ccc}
\hline
                            & \multicolumn{2}{c|}{\textbf{Noisy Rate}}                     & \multicolumn{3}{c|}{\textbf{10\%}}                                                                                     & \multicolumn{3}{c|}{\textbf{30\%}}                                                                                     & \multicolumn{3}{c|}{\textbf{50\%}}                                                                                     & \multicolumn{3}{c}{\textbf{70\%}}                                                                                      \\ \cline{2-15} 
\multirow{-2}{*}{\textbf{}} & \multicolumn{2}{c|}{\textbf{Metric}}                         & \textbf{ACC$\uparrow$}                          & \textbf{NMI$\uparrow$}                          & \textbf{PUR$\uparrow$}                          & \textbf{ACC$\uparrow$}                          & \textbf{NMI$\uparrow$}                          & \textbf{PUR$\uparrow$}                          & \textbf{ACC$\uparrow$}                          & \textbf{NMI$\uparrow$}                          & \textbf{PUR$\uparrow$}                          & \textbf{ACC$\uparrow$}                          & \textbf{NMI$\uparrow$}                          & \textbf{PUR$\uparrow$}                          \\ \hline
                            & \multicolumn{1}{c|}{\textbf{SiMVC}}   & \textbf{CVPR 2021}   & 30.50                                 & 30.14                                 & 31.10                                 & 26.50                                 & 19.00                                 & 26.50                                 & 23.45                                 & 18.01                                 & 23.70                                 & 19.95                                 & 11.80                                 & 20.00                                 \\
                            & \multicolumn{1}{c|}{\textbf{CoMVC}}   & \textbf{CVPR 2021}   & 44.80                                 & 42.42                                 & 45.10                                 & 34.60                                 & 33.58                                 & 34.85                                 & 32.20                                 & 31.31                                 & 32.55                                 & 28.45                                 & 27.00                                 & 29.45                                 \\
                            & \multicolumn{1}{c|}{\textbf{MFLVC}}   & \textbf{CVPR 2022}   & {\color[HTML]{3531FF} 89.80}          & {\color[HTML]{3531FF} 82.37}          & {\color[HTML]{3531FF} 89.80}          & 60.50                                 & 63.35                                 & 60.50                                 & 47.05                                 & 48.91                                 & 47.05                                 & 31.90                                 & 28.14                                 & 31.90                                 \\
                            & \multicolumn{1}{c|}{\textbf{DealMVC}} & \textbf{ACM MM 2023} & 84.85                                 & 80.98                                 & 84.85                                 & {\color[HTML]{3531FF} 70.80}          & {\color[HTML]{3531FF} 67.38}          & {\color[HTML]{3531FF} 70.80}          & {\color[HTML]{3531FF} 65.55}          & {\color[HTML]{3531FF} 62.12}          & {\color[HTML]{3531FF} 65.55}          & {\color[HTML]{3531FF} 48.90}          & {\color[HTML]{3531FF} 49.82}          & {\color[HTML]{3531FF} 48.90}          \\
                            & \multicolumn{1}{c|}{\textbf{SURE}}    & \textbf{TPAMI 2023}  & 34.96                                 & 14.82                                 & 34.96                                 & 31.73                                 & 13.49                                 & 33.29                                 & 28.29                                 & 11.96                                 & 29.35                                 & 26.14                                 & 10.08                                 & 27.05                                 \\
\multirow{-6}{*}{\rotatebox{90}{\textbf{UCI-digit}}}         & \multicolumn{1}{c|}{\textbf{AIRMVC}} & \textbf{Ours}        & {\color[HTML]{FF0000} \textbf{93.95}} & {\color[HTML]{FF0000} \textbf{89.08}} & {\color[HTML]{FF0000} \textbf{93.95}} & {\color[HTML]{FF0000} \textbf{77.60}} & {\color[HTML]{FF0000} \textbf{76.65}} & {\color[HTML]{FF0000} \textbf{80.75}} & {\color[HTML]{FF0000} \textbf{76.05}} & {\color[HTML]{FF0000} \textbf{68.17}} & {\color[HTML]{FF0000} \textbf{76.05}} & {\color[HTML]{FF0000} \textbf{69.20}} & {\color[HTML]{FF0000} \textbf{60.38}} & {\color[HTML]{FF0000} \textbf{69.20}} \\ \hline
                            & \multicolumn{1}{c|}{\textbf{SiMVC}}   & \textbf{CVPR 2021}   & 14.88                                 & 10.98                                 & 16.28                                 & 11.72                                 & 6.55                                  & 12.98                                 & 10.79                                 & 5.57                                  & 11.94                                 & 9.63                                  & 5.05                                  & {\color[HTML]{3531FF} 10.97}          \\
                            & \multicolumn{1}{c|}{\textbf{CoMVC}}   & \textbf{CVPR 2021}   & 15.00                                 & 12.92                                 & 15.80                                 & 14.34                                 & 12.26                                 & 15.49                                 & 10.04                                 & 11.66                                 & 10.42                                 & 9.58                                  & 6.53                                  & 9.95                                  \\
                            & \multicolumn{1}{c|}{\textbf{MFLVC}}   & \textbf{CVPR 2022}   & 19.63                                 & 20.27                                 & 21.78                                 & {\color[HTML]{3531FF} 14.87}          & {\color[HTML]{3531FF} 16.52}          & 14.33                                 & {\color[HTML]{3531FF} 10.59}          & {\color[HTML]{3531FF} 13.76}          & 10.79                                 & 9.60                                  & 6.80                                  & 9.63                                  \\
                            & \multicolumn{1}{c|}{\textbf{DealMVC}} & \textbf{ACM MM 2023} & {\color[HTML]{3531FF} 19.90}          & {\color[HTML]{3531FF} 20.73}          & {\color[HTML]{3531FF} 22.07}          & 13.09                                 & 11.96                                 & {\color[HTML]{3531FF} 17.80}          & 10.14                                 & 5.55                                  & 10.14                                 & {\color[HTML]{3531FF} 10.05}          & 5.03                                  & 10.08                                 \\
                            & \multicolumn{1}{c|}{\textbf{SURE}}    & \textbf{TPAMI 2023}  & 7.44                                  & 17.91                                 & 16.29                                 & 7.09                                  & 15.85                                 & 16.03                                 & 6.84                                  & 14.57                                 & {\color[HTML]{3531FF} 15.01}          & 6.33                                  & {\color[HTML]{3531FF} 12.58}          & 10.62                                 \\
\multirow{-6}{*}{\rotatebox{90}{\textbf{Caltech101}}}         & \multicolumn{1}{c|}{\textbf{AIRMVC}} & \textbf{Ours}        & {\color[HTML]{FF0000} \textbf{21.45}}                        & {\color[HTML]{FF0000} \textbf{37.16}}                        & {\color[HTML]{FF0000} \textbf{34.69}}                        & {\color[HTML]{FF0000} \textbf{15.89}}                        & {\color[HTML]{FF0000} \textbf{29.39}}                        & {\color[HTML]{FF0000} \textbf{27.60}}                        & {\color[HTML]{FF0000} \textbf{11.80}}                        & {\color[HTML]{FF0000} \textbf{23.66}}                        & {\color[HTML]{FF0000} \textbf{23.33}}                        & {\color[HTML]{FF0000} \textbf{11.89}}                        & {\color[HTML]{FF0000} \textbf{21.14}}                        & {\color[HTML]{FF0000} \textbf{20.14}}                        \\ \hline
                            & \multicolumn{1}{c|}{\textbf{SiMVC}}   & \textbf{CVPR 2021}   & 19.51                                 & 11.33                                 & 19.52                                 & 17.52                                 & 6.83                                  & 18.06                                 & 15.91                                 & 5.45                                  & 16.04                                 & 13.44                                 & 3.62                                  & 14.00                                 \\
                            & \multicolumn{1}{c|}{\textbf{CoMVC}}   & \textbf{CVPR 2021}   & {\color[HTML]{3531FF} 27.07}          & 21.49                                 & {\color[HTML]{3531FF} 27.36}          & 20.78                                 & 15.57                                 & 20.70                                 & 17.68                                 & 16.15                                 & 17.75                                 & 15.37                                 & 13.92                                 & 15.56                                 \\
                            & \multicolumn{1}{c|}{\textbf{MFLVC}}   & \textbf{CVPR 2022}   & 24.39                                 & 23.34                                 & 24.39                                 & 21.14                                 & 20.19                                 & {\color[HTML]{3531FF} 21.47}          & {\color[HTML]{3531FF} 20.58}          & {\color[HTML]{3531FF} 21.92}          & 20.14                                 & 19.08                                 & 18.24                                 & 19.08                                 \\
                            & \multicolumn{1}{c|}{\textbf{DealMVC}} & \textbf{ACM MM 2023} & 24.59                                 & {\color[HTML]{3531FF} 23.51}          & 24.59                                 & {\color[HTML]{3531FF} 21.23}          & {\color[HTML]{3531FF} 22.83}          & 21.33                                 & 20.56                                 & 21.78                                 & {\color[HTML]{3531FF} 20.56}          & {\color[HTML]{3531FF} 19.12}          & {\color[HTML]{3531FF} 18.27}          & {\color[HTML]{3531FF} 19.12}          \\
                            & \multicolumn{1}{c|}{\textbf{SURE}}    & \textbf{TPAMI 2023}  & 21.14                                 & 6.72                                  & 21.66                                 & 19.36                                 & 5.87                                  & 20.76                                 & 18.34                                 & 5.64                                  & 19.11                                 & 17.33                                 & 5.45                                  & 18.15                                 \\
\multirow{-6}{*}{\rotatebox{90}{\textbf{STL10}}}         & \multicolumn{1}{c|}{\textbf{AIRMVC}} & \textbf{Ours}        & {\color[HTML]{FF0000} \textbf{28.81}}                        & {\color[HTML]{FF0000} \textbf{25.04}}                        & {\color[HTML]{FF0000} \textbf{29.01}}                        & {\color[HTML]{FF0000} \textbf{22.46}}                        & {\color[HTML]{FF0000} \textbf{23.64}}                        & {\color[HTML]{FF0000} \textbf{22.46}}                        & {\color[HTML]{FF0000} \textbf{21.95}}                        & {\color[HTML]{FF0000} \textbf{23.09}}                        & {\color[HTML]{FF0000} \textbf{22.02}}                        & {\color[HTML]{FF0000} \textbf{20.14}}                        & {\color[HTML]{FF0000} \textbf{22.66}}                        & {\color[HTML]{FF0000} \textbf{20.14}}                        \\ \hline
\end{tabular}}
\label{com_res4}
\end{table*}

\textbf{Detailed information about the datasets:}

\begin{itemize}
\item BBCSport\footnote{\url{http://mlg.ucd.ie/datasets/bbc.html}}: The BBCSport dataset includes 544 sports news articles categorized into five classes: athletics, cricket, football, rugby, and tennis. Each document is described from three different views with dimensions of 2582, 2544, and 2465. This dataset is essential for research in multi-view learning, text classification, and natural language processing, offering a robust foundation for developing and evaluating advanced algorithms.

\item Reuters\footnote{\url{http://archive.ics.uci.edu/dataset/137/reuters+21578+text+categorization+collection}}: Reuters is a collection of documents. It contains 1200 samples, which can be divided into 5 classes. This dataset is widely used for text classification, information retrieval, and topic modeling, serving as a benchmark for machine learning. It also aids educational purposes by providing a real-world example for hands-on practice in data analysis and model development, significantly advancing the understanding and solutions for text classification challenges.

\item UCI-digit\footnote{\url{https://cs.nyu.edu/∼roweis/data.html}}: The UCI-digit Dataset, also known as the Optical Recognition of Handwritten Digits, contains handwritten digit images (0-9) represented as 8x8 grayscale matrices, resulting in 64 features per image. Each image is labeled with the corresponding digit. This dataset is widely used in academic research for evaluating machine learning models in image recognition and classification, making it an invaluable resource for advancing pattern recognition and computer vision methodologies.

\item WebKB\footnote{\url{https://lig-membres.imag.fr/grimal/data.html}}: The WebKB dataset features web pages from four Wisconsin universities, focusing on both the content and the links between them. This dual-aspect dataset is ideal for research in web mining, hyperlink analysis, and multi-view learning, providing a comprehensive resource for developing and evaluating algorithms that explore relationships and content dynamics in academic web environments.

\item SUNRGBD\footnote{\url{https://rgbd.cs.princeton.edu/}}: The SUNRGBD dataset, essential for indoor scene understanding, contains 10,335 samples across 45 categories, including RGB images, depth images, and 3D point cloud data. Each sample has detailed semantic annotations, covering object categories and spatial locations. This multi-modal dataset supports tasks like scene understanding, object detection, semantic segmentation, and 3D reconstruction. Researchers use SUNRGBD to develop and evaluate algorithms that integrate multi-modal information, advancing indoor scene analysis and computer vision research.

\item STL10\footnote{\url{https://cs.stanford.edu/~acoates/stl10/}}: The STL10 dataset is a key benchmark for unsupervised feature learning, deep learning, and self-taught learning. It contains 13,000 labeled images across 10 classes and 100,000 unlabeled images, each at 96x96 pixels with four views per image. This dataset enables model pre-training on unlabeled data, simulating real-world scenarios, and is crucial for evaluating feature learning and generalization capabilities, advancing representation learning and machine learning research.

\begin{algorithm}[]
\small
\caption{Training Algorithm of our designed AIRMVC}
\label{ALGORITHM}
\flushleft{\textbf{Input}: The multi-view dataset $\{x^v\}_{v=1}^V$}; the epochs $e$; batch size $B$ \\
\flushleft{\textbf{Output}: The clustering result \textbf{R}.} 
\begin{algorithmic}[1]
\FOR{$1$ to $e$}
\STATE \textbf{E-Step:} update the parameters $\left \{ (\mu_k,\sigma _k) \right \}_{k=1}^K$ in AIRMVC and the $\left \{ \varphi_i \right \}_{i=1}^N$
\STATE \textbf{M-Step:} 
\REPEAT
\STATE Obtain the representations $\textbf{E}$ by the encoder network.
\STATE Identify the noisy data with Eq.~\eqref{two-GMM}.
\STATE Rectify the noisy data with Eq.~\eqref{correct}.
\STATE Calculate the rectification loss, contrastive loss, reconstruction loss with Eq.~\eqref{rectifi_loss},~\eqref{robu_loss}, ~\eqref{rec_loss}.
\STATE Calculate the total loss $\mathcal{L}$ by Eq.\eqref{total_loss}.
\STATE Update model by minimizing $\mathcal{L}$ with Adam optimizer.
\UNTIL{the epoch finishes}
\ENDFOR
\STATE \textbf{Output} The clustering results \textbf{R}.
\end{algorithmic}
\label{Algo}
\end{algorithm}

\begin{table*}[]
\centering
\caption{Multi-view clustering performance on six benchmark datasets with 90\% noise ratio. The highest performance metrics are emphasized in \textcolor{red}{\textbf{red}}, while the runner-up results are marked in \textcolor{blue}{blue}.}
\scalebox{0.95}{
\begin{tabular}{cc|ccc|ccc|ccc}
\hline
\multicolumn{2}{c|}{\textbf{Dataset}}                         & \multicolumn{3}{c|}{\textbf{BBCSport}}                                                                                & \multicolumn{3}{c|}{\textbf{WebKB}}                                                                                   & \multicolumn{3}{c}{\textbf{Reuters}}                                                                                  \\ \hline
\multicolumn{2}{c|}{\textbf{Metrics}}                         & \textbf{ACC$\uparrow$}                          & \textbf{NMI$\uparrow$}                          & \textbf{PUR$\uparrow$}                          & \textbf{ACC$\uparrow$}                          & \textbf{NMI$\uparrow$}                          & \textbf{PUR$\uparrow$}                          & \textbf{ACC$\uparrow$}                          & \textbf{NMI$\uparrow$}                          & \textbf{PUR$\uparrow$}                          \\ \hline
\multicolumn{1}{c|}{\textbf{SiMVC}}   & \textbf{CVPR 2021}    & 29.78                                 & 04.15                                 & 35.85                                 & 52.43                                 & {\color[HTML]{3531FF} 04.60}          & 56.47                                 & 16.92                                 & 01.08                                 & 17.00                                 \\
\multicolumn{1}{c|}{\textbf{CoMVC}}   & \textbf{CVPR 2021}    & 30.33                                 & 01.59                                 & 32.75                                 & 51.67                                 & 03.85                                 & 50.15                                 & 16.67                                 & 01.41                                 & 16.67                                 \\
\multicolumn{1}{c|}{\textbf{MFLVC}}   & \textbf{CVPR 2022}    & 37.13                                 & 07.14                                 & 32.52                                 & 56.14                                 & 01.68                                 & 56.14                                 & 21.33                                 & 04.35                                 & 21.33                                 \\
\multicolumn{1}{c|}{\textbf{DealMVC}} & \textbf{ACM MM 2023}  & 29.78                                 & 02.76                                 & 30.47                                 & 50.77                                 & 01.28                                 & 50.77                                 & 20.87                                 & 03.58                                 & 20.87                                 \\
\multicolumn{1}{c|}{\textbf{SURE}}    & \textbf{TPAMI 2023}   & 28.20                                 & 03.13                                 & 24.26                                 & 52.98                                 & 04.20                                 & 56.61                                 & 21.18                                 & 02.82                                 & 21.75                                 \\
\multicolumn{1}{c|}{\textbf{CANDY}}   & \textbf{NeurIPS 2024} & 15.26                                 & 01.42                                 & 33.48                                 & 14.27                                 & 02.39                                 & 14.27                                 & 17.84                                 & 04.70                                 & 26.50                                 \\
\multicolumn{1}{c|}{\textbf{RMCNC}}   & \textbf{TKDE 2024}    & 23.90                                 & 03.78                                 & 23.87                                 & 51.87                                 & 03.47                                 & 50.48                                 & 20.17                                 & 05.23                                 & 20.25                                 \\
\multicolumn{1}{c|}{\textbf{TGM-MVC}} & \textbf{ACM MM 2024}  & 17.38                                 & 03.57                                 & 17.38                                 & 50.24                                 & 05.21                                 & 50.24                                 & 20.58                                 & 03.87                                 & 20.58                                 \\
\multicolumn{1}{c|}{\textbf{SCE-MVC}} & \textbf{NeurIPS 2024} & {\color[HTML]{3531FF} 37.62}          & {\color[HTML]{3531FF} 08.47}          & 37.28                                 & 51.67                                 & 01.20                                 & 51.67                                 & 16.67                                 & 04.76                                 & 16.67                                 \\
\multicolumn{1}{c|}{\textbf{MVCAN}}   & \textbf{CVPR 2024}    & 23.34                                 & 01.79                                 & 24.68                                 & {\color[HTML]{3531FF} 57.71}          & 03.05                                 & {\color[HTML]{3531FF} 57.15}          & 19.41                                 & 04.97                                 & 19.83                                 \\
\multicolumn{1}{c|}{\textbf{DIVIDE}}  & \textbf{AAAI 2024}    & 26.29                                 & 02.04                                 & {\color[HTML]{3531FF} 37.32}          & 50.05                                 & 03.70                                 & 54.34                                 & {\color[HTML]{3531FF} 24.08}          & {\color[HTML]{3531FF} 06.25}          & {\color[HTML]{3531FF} 26.42}          \\ \hline
\multicolumn{1}{c|}{\textbf{AIRMVC}} & \textbf{Ours}         & {\color[HTML]{FE0000} \textbf{39.52}} & {\color[HTML]{FE0000} \textbf{09.33}} & {\color[HTML]{FE0000} \textbf{40.26}} & {\color[HTML]{FE0000} \textbf{59.18}} & {\color[HTML]{FE0000} \textbf{05.36}} & {\color[HTML]{FE0000} \textbf{60.08}} & {\color[HTML]{FE0000} \textbf{33.50}} & {\color[HTML]{FE0000} \textbf{20.60}} & {\color[HTML]{FE0000} \textbf{35.35}} \\ \hline
\multicolumn{2}{c|}{\textbf{Dataset}}                         & \multicolumn{3}{c|}{\textbf{UCI-digit}}                                                                               & \multicolumn{3}{c|}{\textbf{Caltech101}}                                                                              & \multicolumn{3}{c}{\textbf{STL10}}                                                                                    \\ \hline
\multicolumn{2}{c|}{\textbf{Metrics}}                         & \textbf{ACC$\uparrow$}                          & \textbf{NMI$\uparrow$}                          & \textbf{PUR$\uparrow$}                          & \textbf{ACC$\uparrow$}                          & \textbf{NMI$\uparrow$}                          & \textbf{PUR$\uparrow$}                          & \textbf{ACC$\uparrow$}                          & \textbf{NMI$\uparrow$}                          & \textbf{PUR$\uparrow$}                          \\ \hline
\multicolumn{1}{c|}{\textbf{SiMVC}}   & \textbf{CVPR 2021}    & 15.55                                 & 08.64                                 & 16.00                                 & 07.78                                 & 03.16                                 & 09.61                                 & 11.72                                 & 01.71                                 & 12.66                                 \\
\multicolumn{1}{c|}{\textbf{CoMVC}}   & \textbf{CVPR 2021}    & 24.15                                 & 26.41                                 & 26.95                                 & 08.61                                 & 05.55                                 & 10.56                                 & 12.63                                 & 08.21                                 & 12.75                                 \\
\multicolumn{1}{c|}{\textbf{MFLVC}}   & \textbf{CVPR 2022}    & 29.25                                 & 29.18                                 & 29.25                                 & 08.43                                 & 05.29                                 & 08.98                                 & 18.04                                 & 13.93                                 & 18.04                                 \\
\multicolumn{1}{c|}{\textbf{DealMVC}} & \textbf{ACM MM 2023}  & 37.40                                 & 39.04                                 & 37.45                                 & 09.96                                 & 04.03                                 & 09.38                                 & {\color[HTML]{3531FF} 18.45}          & 17.32                                 & 18.45                                 \\
\multicolumn{1}{c|}{\textbf{SURE}}    & \textbf{TPAMI 2023}   & 23.92                                 & 08.83                                 & 25.13                                 & 06.21                                 & 10.41                                 & 06.85                                 & 16.93                                 & 04.97                                 & 17.59                                 \\
\multicolumn{1}{c|}{\textbf{CANDY}}   & \textbf{NeurIPS 2024} & 51.05                                 & 42.16                                 & 53.70                                 & 10.00                                 & 10.60                                 & 16.56                                 & 16.32                                 & 15.44                                 & 16.72                                 \\
\multicolumn{1}{c|}{\textbf{RMCNC}}   & \textbf{TKDE 2024}    & 19.15                                 & 08.81                                 & 21.15                                 & 06.56                                 & 10.89                                 & 06.10                                 & 13.35                                 & 01.26                                 & 13.57                                 \\
\multicolumn{1}{c|}{\textbf{TGM-MVC}} & \textbf{ACM MM 2024}  & 49.83                                 & 63.42                                 & 55.59                                 & 05.45                                 & 17.53                                 & 16.21                                 & 17.74                                 & {\color[HTML]{3531FF} 15.66}          & 17.24                                 \\
\multicolumn{1}{c|}{\textbf{SCE-MVC}} & \textbf{NeurIPS 2024} & 57.70                                 & 49.43                                 & 57.15                                 & {\color[HTML]{3531FF} 10.08}          & {\color[HTML]{3531FF} 18.16}          & {\color[HTML]{3531FF} 18.41}          & 13.85                                 & 10.93                                 & 13.59                                 \\
\multicolumn{1}{c|}{\textbf{MVCAN}}   & \textbf{CVPR 2024}    & {\color[HTML]{3531FF} 57.25}          & {\color[HTML]{3531FF} 50.87}          & {57.48}          & 9.07                                  & 13.49                                 & 3.77                                  & 16.15                                 & 15.35                                 & 16.33                                 \\
\multicolumn{1}{c|}{\textbf{DIVIDE}}  & \textbf{AAAI 2024}    & 57.05                                 & 50.09                                 & {\color[HTML]{3531FF} 57.75}                                 & 9.02                                  & 15.85                                 & 13.28                                 & 18.09                                 & 12.00                                 & {\color[HTML]{3531FF} 18.63}          \\ \hline
\multicolumn{1}{c|}{\textbf{AIRMVC}} & \textbf{Ours}         & {\color[HTML]{FE0000} \textbf{58.20}} & {\color[HTML]{FE0000} \textbf{51.94}} & {\color[HTML]{FE0000} \textbf{58.20}} & {\color[HTML]{FE0000} \textbf{10.85}} & {\color[HTML]{FE0000} \textbf{19.01}} & {\color[HTML]{FE0000} \textbf{18.96}} & {\color[HTML]{FE0000} \textbf{19.90}} & {\color[HTML]{FE0000} \textbf{19.92}} & {\color[HTML]{FE0000} \textbf{19.90}} \\ \hline
\end{tabular}}
\label{com_res5}
\end{table*}

\item Caltech101\footnote{\url{https://data.caltech.edu/records/mzrjq-6wc02}}: The Caltech101 dataset is a widely used benchmark in machine learning and computer vision, consisting of 9,144 samples categorized into 102 distinct clusters. It is characterized by its diversity and complexity, featuring five distinct views that provide complementary information for each sample. These views typically include different feature types or representations, making it a suitable dataset for evaluating multi-view clustering and representation learning methods. Due to its rich and heterogeneous structure, Caltech101 has become a standard testbed for assessing the performance of clustering algorithms in handling multi-view and high-dimensional data.
\end{itemize}

\textbf{Algorithm:}
Due to the limited space of the original paper, we present the algorithm in this part. The detailed training process is presented in Algorithm.~\ref{Algo}


\subsection{Comparative Algorithms}\label{com_methods}

In this paper, we compare our AIRMVC with 11 baselines, including CoMVC~\cite{SiMVC}, SiMVC~\cite{SiMVC}, MFLVC~\cite{MFLVC}, DealMVC~\cite{DealMVC}, SURE~\cite{SURE}, CANDY~\cite{candy}, TGM-MVC~\cite{TGM-MVC}, SCE-MVC~\cite{SCE-MVC} RMCNC~\cite{sun2024robust}, and  MVCAN~\cite{MVCAN}. We list the specific information of those methods as follows:

\begin{itemize}

\item CoMVC \& SiMVC~\cite{SiMVC}: CoMVC and SiMVC are deep multi-view clustering methods that avoid aligning all representations simultaneously. These methods incorporate a contrastive learning strategy to enhance clustering performance.

\item MFLVC~\cite{MFLVC}: MFLVC introduces two consistency objectives for deep multi-view clustering by employing contrastive learning strategies at both the high-level feature space and the semantic label space.

\item DealMVC~\cite{DealMVC}: DealMVC proposes a dual-calibration mechanism tailored for deep multi-view clustering. Its calibration contrastive loss effectively distinguishes between similar yet distinct samples across multiple views.

\item SURE~\cite{SURE}: SURE addresses challenges related to incomplete multi-view data by developing a noise-robust contrastive loss. This approach is theoretically analyzed to handle partially unaligned views and missing samples.

\item CANDY~\cite{candy}: CANDY leverages inter-view similarities as context to uncover false negatives and integrates a spectral-based module for denoising correspondence in multi-view scenarios.

\item TGM-MVC~\cite{TGM-MVC}: TGM-MVC employs a tree-based framework to effectively capture the heterogeneity between different views, offering a unique perspective on multi-view clustering.

\item SCE-MVC~\cite{SCE-MVC}: SCE-MVC adopts game-theoretic principles and Shapley values to address multi-view clustering challenges, providing an innovative solution to enhance clustering robustness.

\item RMCNC~\cite{sun2024robust}: RMCNC introduces a noise-tolerant contrastive loss, designed to mitigate the impact of misaligned pairs in multi-view clustering tasks.

\item MVCAN~\cite{MVCAN}: MVCAN employs an unshared network structure and implements a two-level optimization strategy, enabling improved clustering performance in multi-view scenarios.

\item DIVIDE~\cite{DIVIDE}: DIVIDE utilizes a random walk approach to progressively identify reliable data pairs, enhancing the stability and robustness of contrastive learning in multi-view clustering.

\end{itemize}


\end{document}